\newif\ifincludemain
\newif\ifincludeappendix
\newtheorem{theorem}{Theorem}
\newtheorem{lemma}{Lemma}
\newtheorem{assumption}{Assumption}
\newcommand{\norm}[1]{\left\lVert#1\right\rVert}
\title{A Unified Convergence Analysis for Semi-Decentralized Learning: Sampled-to-Sampled vs. Sampled-to-All Communication}
\author{
    %Authors
    % All authors must be in the same font size and format.
    Angelo Rodio\textsuperscript{\rm 1},
    % \thanks{With help from the AAAI Publications Committee.},
    Giovanni Neglia\textsuperscript{\rm 2},
    Zheng Chen\textsuperscript{\rm 1},
    Erik G. Larsson\textsuperscript{\rm 1}
}
\title{My Publication Title --- Single Author}
\author {
    Author Name
}
\title{My Publication Title --- Multiple Authors}
\author {
    % Authors
    First Author Name\textsuperscript{\rm 1},
    Second Author Name\textsuperscript{\rm 2},
    Third Author Name\textsuperscript{\rm 1}
}
\begin{document}

\ifincludemain

\maketitle

% --- add page numbers for arxiv ---
\pagenumbering{arabic}  
\thispagestyle{plain}   
\pagestyle{plain}       
% -------------------------------

\begin{abstract}

In semi-decentralized federated learning, devices primarily rely on device-to-device communication but occasionally interact with a central server. Periodically, a sampled subset of devices uploads their local models to the server, which computes an aggregate model. The server can then either (i)~share this aggregate model only with the sampled clients (sampled-to-sampled, S2S) or (ii) broadcast it to all clients (sampled-to-all, S2A). Despite their practical significance, a rigorous theoretical and empirical comparison of these two strategies remains absent.
We address this gap by analyzing S2S and S2A within a unified convergence framework that accounts for key system parameters: sampling rate, server aggregation frequency, and network connectivity. Our results---both analytical and experimental---reveal distinct regimes where one strategy outperforms the other, depending primarily on the degree of data heterogeneity across devices. These insights lead to concrete design guidelines for practical semi-decentralized FL deployments.

\end{abstract}

% Uncomment the following to link to your code, datasets, an extended version or similar.
% You must keep this block between (not within) the abstract and the main body of the paper.
\begin{links}
    \link{Code}{https://github.com/arodio/SemiDec}
    % \link{Datasets}{https://aaai.org/example/datasets}
    % \link{Extended version}{https://arxiv.org/abs/2511.11560}
\end{links}

\section{Introduction}

The performance of large-scale machine learning models depends critically on the volume and diversity of data; however, in many practical scenarios, training data are decentralized, generated by edge devices such as smartphones or sensors~\cite{mcmahanCommunicationEfficientLearningDeep2017, kairouzAdvancesOpenProblems2021}. Centralizing these data is often prohibitively expensive---or even infeasible---due to network limitations and privacy constraints~\cite{bonawitzFederatedLearningScale2019, liFederatedLearningChallenges2020}. 

Federated learning (FL) is a distributed machine learning paradigm in which multiple devices cooperate to learn a global model under the orchestration of a central server without sharing their data~\cite{mcmahanCommunicationEfficientLearningDeep2017}.
Device-to-server (D2S) communication is typically expensive in FL, especially when the central server is located in a wide-area network, where limited uplink bandwidth dominates both communication latency and energy consumption. The de facto optimization method, local stochastic gradient descent (SGD)~\cite{konecnyFederatedLearningStrategies2017, mcmahanCommunicationEfficientLearningDeep2017}, addresses this communication bottleneck by enabling devices to perform multiple local updates before server aggregation.
This simple trick reduces the number of D2S communications but has a well-known drawback: multiple local SGD steps on non-identically distributed (non-IID) data lead to local over-fitting (known as model drift) and hinder convergence~\cite{karimireddySCAFFOLDStochasticControlled2020, liConvergenceFedAvgNonIID2023}.

Fully-decentralized learning eliminates the central server and relies solely on device-to-device (D2D) communications, where devices average their local models with those of their neighbors after each SGD update~\cite{lianCanDecentralizedAlgorithms2017, koloskovaUnifiedTheoryDecentralized2020}. These exchanges are typically inexpensive, leveraging high-bandwidth local-area networks or direct short-range wireless links.
The convergence of such algorithms depends on the connectivity of the underlying communication graph. Intuitively, sparse connectivity slows convergence---a phenomenon analyzed in prior work~\cite{yuanConvergenceDecentralizedGradient2016, negliaDecentralizedGradientMethods2020, barsRefinedConvergenceTopology2023, larssonUnifiedAnalysisDecentralized2025}. More critically, convergence is impossible when the graph is disconnected, as information cannot propagate between different graph components.

Semi-decentralized learning interleaves D2D consensus rounds within components with periodic communication between a sampled subset of devices and a central server, which aggregates their models~\cite{chenAcceleratingGossipSGD2021, linSemiDecentralizedFederatedLearning2021}. This hybrid design leverages the hierarchical structure of modern networks: frequent, low-cost D2D exchanges foster local consensus within  components, while periodic D2S rounds ensure information sharing across components
and enable global convergence.
Once the server aggregates the models of the sampled devices, two  communication primitives have been proposed in the literature:
\begin{enumerate}[label=(\roman*), leftmargin=20pt, topsep=1pt, itemsep=1pt, parsep=0pt, partopsep=0pt]
\item \emph{Sampled-to-Sampled} (S2S): the server sends the aggregate model \emph{only} to the sampled devices, while the remaining devices retain their current local models~\cite{chenTamingSubnetDriftD2DEnabled2024};
\item \emph{Sampled-to-All} (S2A): the server broadcasts the aggregate model to \emph{all} devices, which then replace their current models~\cite{chenAcceleratingGossipSGD2021, linSemiDecentralizedFederatedLearning2021, guoHybridLocalSGD2021}.
\end{enumerate}
While both variants appear in prior work, their relative merits have not been thoroughly investigated. Intuitively, S2A may spread information faster because the aggregated model is immediately disseminated to all clients. 
However, this comes at the cost of introducing a bias: the sampled clients exert a disproportionate influence, as their models overwrite information from unsampled ones. In this work, we address this gap through a unified theoretical analysis and extensive experimental comparison of the two strategies.
\paragraph{Our contributions.}
\begin{itemize}
    \item We develop a unified theoretical framework that captures \emph{(i)} intra- and inter-component statistical heterogeneity, \emph{(ii)} the sampling rate, \emph{(iii)}~the server aggregation period, and \emph{(iv)}~the D2D network connectivity. 
    Our analysis reveals a fundamental trade-off. S2A introduces a broadcast-induced bias due to the shift in the global average model after each D2S aggregation but reduces disagreement error by periodically realigning all local models. Conversely, S2S avoids this bias but suffers from greater disagreement, as non-sampled models remain misaligned after aggregation.
    \item By comparing convergence bounds, we identify regimes in which one communication primitive outperforms the other. Specifically, S2A converges faster when both intra- and inter-component heterogeneity are low, while S2S outperforms as inter-component heterogeneity increases---particularly at low sampling rates, short server periods, or sparse network connectivity.
    \item Simulations on benchmark FL datasets across varying sampling rates, aggregation periods, and network topologies confirm these regimes and highlight the importance of selecting the appropriate communication primitive. These insights translate into practical guidelines for configuring semi-decentralized FL deployments.
\end{itemize}

\section{Related Work}

% The cost of device-to-server communication in FL has been widely studied~\cite{shamirCommunicationEfficientDistributedOptimization2014, alistarhQSGDCommunicationEfficientSGD2017, horvothNaturalCompressionDistributed2022}. Both classical~\cite{stichLocalSGDConverges2018, reddiAdaptiveFederatedOptimization2023} and  refined~\cite{mishchenkoProxSkipYesLocal2022} analyses of local SGD establish a fundamental trade-off: a moderate number of local steps reduces wall-clock time, whereas too many local updates on non-IID data induce \emph{model drift} and hinder convergence. Advanced algorithms, e.g., relying on control variates~\cite{karimireddySCAFFOLDStochasticControlled2020} and proximal corrections~\cite{mishchenkoProxSkipYesLocal2022}, mitigate this drift at additional computational or communication cost. However, the qualitative conclusion remains: an excessive number of local SGD steps under high statistical heterogeneity slows global progress.

The cost of device-to-server communication in FL has been widely studied~\cite{shamirCommunicationEfficientDistributedOptimization2014, alistarhQSGDCommunicationEfficientSGD2017, horvothNaturalCompressionDistributed2022}. Both classical~\cite{stichLocalSGDConverges2018, reddiAdaptiveFederatedOptimization2023} and refined~\cite{mishchenkoProxSkipYesLocal2022} analyses of local SGD establish a fundamental trade-off: a moderate number of local steps reduces wall-clock time, whereas many local updates on non-IID data induce \emph{model drift} and hinder convergence. More advanced methods, e.g., using control variates~\cite{karimireddySCAFFOLDStochasticControlled2020} or proximal corrections~\cite{mishchenkoProxSkipYesLocal2022}, mitigate this drift at additional computational or communication cost, but the main conclusion remains: too many local SGD steps under high statistical heterogeneity slow convergence.

In fully-decentralized SGD (D-SGD), which relies solely on D2D communications, the convergence rate is governed by the spectral gap of the doubly stochastic mixing matrix~$W$. Specifically, the iteration complexity scales inversely with $\gamma \coloneqq 1-\lambda_2(W^\top W)$, where $\lambda_2$ denotes the second-largest eigenvalue of $W^\top W$~\cite{nedicStochasticGradientPushStrongly2016, yuanConvergenceDecentralizedGradient2016, koloskovaUnifiedTheoryDecentralized2020, barsRefinedConvergenceTopology2023}. Convergence becomes slower as~$\gamma$ approaches zero, and for $\gamma=0$ (disconnected graph), D-SGD fails to reach the global optimum, as each connected component converges to its \emph{local} minimizer.

Hierarchical FL assumes a multi-tier tree topology (cloud-edge-device) and aggregates along the hierarchy~\cite{wangResourceEfficientFederatedLearning2021}; semi-decentralized FL supports \emph{arbitrary} D2D topologies~\cite{chenAcceleratingGossipSGD2021, linSemiDecentralizedFederatedLearning2021}. Prior work has analyzed the S2S and S2A primitives under convex objectives~\cite{linSemiDecentralizedFederatedLearning2021, chenTamingSubnetDriftD2DEnabled2024} and, for S2A, also under non-convex objectives~\cite{guoHybridLocalSGD2021}, but assuming that at least one device per connected component is sampled in every server round. 
This assumption implicitly requires the server to know the component membership of each device---a requirement that is difficult to satisfy in practice due to the large number of devices, their mobility (resulting in time-varying communication graphs), and privacy constraints (as it may indirectly reveals user locations). 
To the best of our knowledge, a convergence analysis of the S2S primitive is still lacking for non-convex objectives, and there is no systematic theoretical or empirical comparison of S2S and S2A within a unified framework.

Our analysis tackles the following technical challenges:
\begin{enumerate}[label=(\roman*),leftmargin=20pt]
\item The broadcast-induced bias error in S2A, defined as the change in the average model before and after a D2S communication, and the disagreement error in S2S, measuring the divergence of the local models from the global average, scale \emph{differently} with stepsize, sampling rate, server period, and network connectivity, making their comparison non-trivial.
\item The S2A update rule can be modeled as a rank-one, column-stochastic but not row-stochastic averaging operator; thus, standard spectral-gap arguments for doubly stochastic $W$ matrices in D-SGD analyses (e.g.,~\citet{koloskovaUnifiedTheoryDecentralized2020}) are not applicable.
\item Although the S2S update rule involves a symmetric, stochastic weight matrix---formally compatible with the assumptions in~\citet{koloskovaUnifiedTheoryDecentralized2020}---their analysis fails to capture the fundamental asymmetry between D2D and D2S rounds, where inter-component statistical heterogeneity is reduced \emph{only} through server aggregation. This distinction is crucial for the comparison of S2S and S2A and motivates our analysis.
\end{enumerate}
We address these challenges by \emph{(a)}~characterizing bias and disagreement errors through the properties of the S2S and S2A operators, \emph{(b)}~introducing an orthogonal decomposition of the total disagreement into intra- and inter-component terms, and \emph{(c)}~capturing the distinct effects of D2D and D2S communication on intra- versus inter-component heterogeneity.

\section{Problem Setting}

\subsubsection{Network model.}
We consider a network consisting of a central server and $n$ devices, organized in $C$ disjoint components (or clusters). 
Each component $c\!\in\!\{ 1, \dots, C \}$ is modeled as an undirected, connected, and time-varying graph $\mathcal{G}_c^{(t)}\!=\!(\mathcal{V}_c, \mathcal{E}_c^{(t)})$, where $\mathcal{V}_c$ denotes the set of $n_c\!\coloneqq\!|\mathcal{V}_c|$ devices in component~$c$, and $(i,j)\!\in\!\mathcal{E}_c^{(t)}$ indicates that devices $i,j\!\in\!\mathcal{V}_c$ communicate via D2D links at round $t$.
In addition, each device can communicate with the central server through D2S links. The overall network at round $t$ is modeled as $\mathcal{G}^{(t)}\!=\!(\mathcal{V},\mathcal{E}^{(t)})$, where $\mathcal{V}\!\coloneqq\!\bigcup_{c=1}^{C} \mathcal{V}_c$ and $\mathcal{E}^{(t)}\!\coloneqq\!\bigcup_{c=1}^{C} \mathcal{E}_c^{(t)}$. 

\subsubsection{Learning task.} We consider an FL system where the central server and the devices collaborate to learn the parameters $\bm{x} \in \mathbb{R}^{d}$ of a machine learning model, where $d \in \mathbb{N}$ is the model dimension. Each device $i \in \mathcal{V}$ has a local dataset $\mathcal{D}_i$ of data samples $\xi \in \mathcal{D}_i$. We denote by $F_i(\bm{x};\xi)$ the loss incurred by the model with parameters $\bm{x}$ on data sample $\xi$.
The goal is to solve an optimization problem of the form:
\begin{align}
    \min_{\bm{x} \in \mathbb{R}^d} f(\bm{x}) \coloneqq \frac{1}{n} \sum_{i=1}^{n} f_i(\bm{x}) ,
    \label{eq:g_obj}
\end{align}
where $f_i(\bm{x}) \coloneqq \frac{1}{|\mathcal{D}_i|} \sum_{\xi \in \mathcal{D}_i}F_i(\bm{x},\xi)$ is the local objective of device $i \in \mathcal{V}$. 

\subsubsection{Notation.}
All vectors are by default column vectors. 
$\bm{0}$ and $\bm{1}$ denote the all-zeros and all-ones vectors of appropriate dimension. $I$ is the identity matrix.
The global averaging projector is $\Pi \coloneqq \frac{1}{n} \bm{1} \bm{1}^\top$. 
Given $n$ vectors $\bm{x}_1,\dots,\bm{x}_n \in \mathbb{R}^{d}$, we write their average as $\bar{\bm{x}} \coloneqq \frac{1}{n} \sum_{i=1}^{n} \bm{x}_i \in \mathbb{R}^{d}$.
We stack the $n$ vectors as columns in the matrix $X \coloneqq [\bm{x}_1,\dots,\bm{x}_n] \in \mathbb{R}^{d\times n}$,
such that right-multiplication by $\Pi$ performs column averaging: $\bar{X} \coloneqq X\Pi = [\bar{\bm{x}},\dots,\bar{\bm{x}}] \in \mathbb{R}^{d\times n}$.
We use $\|\cdot\|_2$ for both the Euclidean norm of a vector and the spectral norm of a matrix, and $\|\cdot\|_F$ for the Frobenius norm. 
% The orthogonal complement to the global averaging projector is $I-\Pi$; the global disagreement error is $\|X - \bar{X}\|_F = \|X(I-\Pi)\|_F$.

\section{Two Communication Primitives for Semi-Decentralized FL}

We study two semi-decentralized learning primitives, summarized in Algorithm~\ref{alg:mat}, for solving Problem~\eqref{eq:g_obj}. The training proceeds over $T$ communication rounds, where each round $t \in \{ 0, \dots, T-1 \}$ consists of two or three steps:
\begin{enumerate}[label=(\roman*),leftmargin=20pt]
\item \emph{Local stochastic descent.}
        Each device $i \in \mathcal{V}$ updates its local model $\bm{x}_i^{(t)}$ by one local SGD step:
        \begin{align}
            \textstyle
            \bm{x}_i^{(t+\nicefrac{1}{3})} = \bm{x}_i^{(t)} - \eta_t \nabla F_i(\bm{x}_i^{(t)}, \mathcal{B}_i^{(t)}),
        \end{align}
        where $\eta_t$ is the stepsize, $\mathcal{B}_i^{(t)}$ is a mini-batch sampled from the local dataset $\mathcal{D}_i$, and $\nabla F_i(\bm{x}_i^{(t)}, \mathcal{B}_i^{(t)})$ is an unbiased estimate of $\nabla F_i(\bm{x}_i^{(t)})$.

\item \emph{Device-to-device (D2D) mixing.}
        Each device $i \in \mathcal{V}$ averages its local model $\bm{x}_i^{(t+\nicefrac{1}{3})}$ with neighbors via mixing weight $w_{ji}^{(t)}$, where $w_{ji}^{(t)}>0$ iff $(j,i) \in \mathcal{E}^{(t)}$:
        \begin{align}
            \textstyle
            \bm{x}_i^{(t+\nicefrac{2}{3})} = \sum_{j=1}^{n} w_{ji}^{(t)} \bm{x}_j^{(t+\nicefrac{1}{3})}.
        \end{align}
        In fully-decentralized rounds,
        $\bm{x}_i^{(t+1)}=\bm{x}_{i}^{(t+\nicefrac{2}{3})}$.

\item \emph{Device-to-server (D2S) aggregation.}
        Every $H$ rounds, the server
        samples a subset $\mathcal S^{(t)} \subseteq \mathcal{V}$
        of $K$ devices uniformly at random without replacement and averages their local models:
        \begin{align}
            \textstyle
            \hat{\bm{x}}^{(t+1)} = \frac{1}{|\mathcal{S}^{(t)}|} \sum_{i\in\mathcal{S}^{(t)}} \bm{x}_{i}^{(t+\nicefrac{2}{3})}.
        \end{align}
\end{enumerate}
The dissemination of this aggregate from the server to the devices can follow two distinct communication primitives: 
Sampled-to-Sampled (S2S) and Sampled-to-All~(S2A). 

\paragraph{Sampled-to-Sampled (S2S).}
The server transmits the aggregate model  \emph{only} to the sampled devices: $\bm{x}_i^{(t+1)}\!=\!\hat{\bm{x}}^{(t+1)}$, $i\!\in\!\mathcal{S}^{(t)}$; the other devices retain their local model. 
The evolution of the local models can be represented as the matrix multiplication $X^{(t+1)}\!=\!X^{(t+2/3)} W_{\text{S2S}}^{(t)}$, where:
\begin{align}
    (W_{\text{S2S}}^{(t)})_{ij} = \begin{cases}
    \frac{1}{K}, & i,j \in \mathcal{S}^{(t)}; \\
    1, & i = j \notin \mathcal{S}^{(t)}; \\
    0, & \text{otherwise.}
    \end{cases}
    \label{eq:S2S}
\end{align}

\paragraph{Sampled-to-All (S2A).}
The server broadcasts $\bm{x}^{(t+1)}$ to \emph{all} devices: $\bm{x}_i^{(t+1)} \!=\!\hat{\bm{x}}^{(t+1)}$  for all $i\!\in\!\mathcal{V}$. As above, this can be represented as $X^{(t+1)}\!=\!X^{(t+2/3)} W_{\text{S2A}}^{(t)}$, where:
\begin{align}
    (W_{\text{S2A}}^{(t)})_{ij} = \begin{cases}
    \frac{1}{K}, & i \in \mathcal{S}^{(t)}; \\
    0, & \text{otherwise.}
    \end{cases}
    \label{eq:S2A}
\end{align}

%---------------------------------------------------------------------
\begin{algorithm}[t]
\caption{Semi-Decentralized Federated Learning}
\label{alg:mat}
\textbf{Input:} $X^{(0)}\!\in\!\mathbb{R}^{d\times n}$, rounds $T$, period $H$,
stepsizes $\{\eta_t\}$, mixing matrices $W^{(t)} \sim \mathcal{W}$
\begin{algorithmic}[1]
\FOR{$t=0,\dots,T-1$}
\STATE $X^{(t+\nicefrac13)} \gets X^{(t)}-\eta_t\nabla F(X^{(t)},\mathcal{B}^{(t)})$
\STATE $X^{(t+\nicefrac23)} \gets X^{(t+\nicefrac13)}W^{(t)}$
\IF{$t\equiv0~(\mathrm{mod}\,H)$}
      \STATE sample $\mathcal S^{(t)} \subseteq \mathcal{V}$, $|\mathcal S^{(t)}| = K$
      \STATE build $W_{\text{S2(S/A)}}^{(t)}$ by Eq.~\eqref{eq:S2S} (S2S) or Eq.~\eqref{eq:S2A} (S2A) 
      \STATE $X^{(t+1)} \gets X^{(t+\nicefrac23)}W_{\text{S2(S/A)}}^{(t)}$
\ELSE
      \STATE $X^{(t+1)} \gets X^{(t+\nicefrac23)}$
\ENDIF
\ENDFOR
\STATE\textbf{return} $X^{(T)}$
\end{algorithmic}
\end{algorithm}

\subsection{High-level Comparison of S2S and S2A}
\label{subsec:comparison}

We identify the following two errors after the D2S round:
\begin{enumerate}[label=(\roman*),leftmargin=20pt,topsep=0.2em,itemsep=0.1em, parsep=0.2em]
\item the \emph{bias error}, which quantifies the change in the global average model induced by the D2S step, defined as
$\mathbb{E}[\|\bar X^{(t+1)} - \bar X^{(t+\nicefrac23)}\|_F^2]$;
\item the \emph{disagreement error}, which quantifies the divergence of the local models from the global average model, defined as 
$\mathbb{E}[\|X^{(t+1)} - \bar X^{(t+1)}\|_F^2]$.
\end{enumerate}

The two primitives, S2S and S2A, exhibit opposite error behaviors: S2S preserves the global average (zero bias) but leaves residual disagreement, whereas
S2A enforces perfect consensus (zero disagreement) at the cost of a non-zero bias.

For S2S, the matrix $W_{\text{S2S}}$ is symmetric and doubly stochastic, satisfying $W_{\text{S2S}} \Pi = \Pi W_{\text{S2S}} = \Pi$. 

Therefore, the bias error vanishes since:
\begin{align}
    \bar{X}^{(t+1)} = X^{(t+\nicefrac{2}{3})} W_{\text{S2S}} \Pi = X^{(t+\nicefrac{2}{3})} \Pi = \bar{X}^{(t+\nicefrac{2}{3})}.
\end{align}
However, non-sampled devices are not updated with the server aggregate, resulting in residual disagreement:
\begin{align}
    X^{(t+1)} = X^{(t+\nicefrac{2}{3})} W_{\text{S2S}} 
    \neq X^{(t+\nicefrac{2}{3})} \Pi = \bar{X}^{(t+1)},
\end{align}
with magnitude (bounded in Lemma~\ref{lem:sampling_app}, Appendix~\ref{app:subsec:S2S}):
\begin{align}
    \textstyle
    \mathbb{E} [ \| X^{(t+1)} - \bar{X}^{(t+1)} \|_F^2 ]
    =
    \frac{n-K}{n-1} \mathbb{E} \| X^{(t+\frac{2}{3})} - \bar{X}^{(t+\frac{2}{3})} \|_F^2,
    \label{eq:deviation_sts}
\end{align}
where $\mathbb{E} \| X^{(t+\nicefrac{2}{3})} - \bar{X}^{(t+\nicefrac{2}{3})} \|_F^2$ denotes the disagreement inherited from the D2D step at time $t+{2}/{3}$.

Conversely, $W_{\text{S2A}}$ is column-stochastic but \emph{not} row-stochastic, with $\Pi W_{\text{S2A}}=\Pi$ and $W_{\text{S2A}}\Pi = W_{\text{S2A}} \neq \Pi$. 
This property eliminates disagreement since:
\begin{align}
    X^{(t+1)} - \bar{X}^{(t+1)} = X^{(t+\nicefrac{2}{3})} (W_{\text{S2A}} - W_{\text{S2A}} \Pi) = 0,
\end{align}
but introduces the broadcast-induced bias:
\begin{align}
    \bar{X}^{(t+1)} = X^{(t+\nicefrac{2}{3})} W_{\text{S2A}} \Pi 
    \neq X^{(t+\nicefrac{2}{3})} \Pi = \bar{X}^{(t+\nicefrac{2}{3})},
\end{align}
with magnitude (bounded in Lemma~\ref{lem:broadcast_app}, Appendix~\ref{app:subsec:S2A}):
\begin{align}
    \textstyle
    \mathbb{E} [ \| \bar{X}^{(t+1)} - \bar{X}^{(t+\frac{2}{3})} \|_F^2 ]
    {=}
    \frac{n-K}{K(n-1)} \mathbb{E} \| X^{(t+\frac{2}{3})} - \bar{X}^{(t+\frac{2}{3})} \|_F^2.
    \label{eq:deviation_sta}
\end{align}

Although the bias factor in Eq.~\eqref{eq:deviation_sta} might appear smaller than the disagreement factor in Eq.~\eqref{eq:deviation_sts}, the two equations describe different error sources, which propagate under different scalings with respect to stepsize, sampling rate, server period, and network connectivity. This interplay makes the comparison between S2S and S2A non-trivial and motivates our subsequent unified convergence analysis.

\section{Unified Convergence Analysis}
\label{sec:analysis}

Our framework extends the convergence theory of decentralized optimization~\citep{koloskovaUnifiedTheoryDecentralized2020, barsRefinedConvergenceTopology2023} to semi-decentralized federated learning, and provides the first theoretical comparison of S2S and S2A.

All theoretical results assume Lipschitz continuity of the stochastic gradients~\cite{nguyenNewConvergenceAspects2019}. 
\begin{assumption}[L-smoothness]
    \label{asm:smoothness}
    For every $i \in \mathcal{V}$ and every $\xi \sim \mathcal{D}_i$, the stochastic loss $F_i(\cdot,\xi)$ is $L$-smooth; i.e., there exists $L > 0$ such that, for all $\bm{x}, \bm{y} \in \mathbb{R}^d$, 
    \begin{align}
        \|\nabla F_i(\bm{x},\xi) - \nabla F_i(\bm{y},\xi)\|_2 \leq L \|\bm{x} - \bm{y}\|_2.
    \end{align}
\end{assumption}
For convex results, we additionally invoke convexity of
the local objectives~\cite{bubeckConvexOptimizationAlgorithms2015}.
\begin{assumption}[Convexity]
    \label{asm:convexity}
    Each $f_i: \mathbb{R}^d \rightarrow \mathbb{R}$ is convex:
    \begin{align}
        f_i(\bm{y}) \geq f_i(\bm{x}) + \langle \nabla f_i(\bm{x}),\, \bm{y} - \bm{x} \rangle, \quad \forall \bm{x}, \bm{y} \in \mathbb{R}^d.
    \end{align}
\end{assumption}
To keep the analysis unified across convex and non-convex settings, we assume that the stochastic variance is uniformly bounded in $\bm{x}$~\cite{barsRefinedConvergenceTopology2023}, although in the convex case it suffices to bound it only at the optimum.
\begin{assumption}[Bounded stochastic variance]
\label{asm:variance}
    For every $i \in \mathcal{V}$, there exists a constant $\bar{\sigma}^2 > 0$ such that, for all $\bm{x} \in \mathbb{R}^{d}$,
    \begin{align}
        \mathbb{E}_{\xi \sim \mathcal{D}_i} \left[ \|\nabla F_i(\bm{x}, \xi) - \nabla f_i(\bm{x})\|_2^2 \right] \leq \bar{\sigma}^2.
    \end{align}
\end{assumption}
 
For clarity of exposition, our analysis assumes a fixed deterministic mixing matrix $W$. However, all results extend to dynamic D2D communication graphs~\cite{koloskovaUnifiedTheoryDecentralized2020}, which are represented by time-varying mixing matrices (as detailed in Appendix~\ref{app:sec_mixing_random}).

\begin{assumption}[Mixing matrix~\cite{koloskovaUnifiedTheoryDecentralized2020,barsRefinedConvergenceTopology2023}]
\label{asm:mixing}
The mixing matrix $W$ is doubly stochastic, i.e., $W \in [0,1]^{n \times n}$, $W \bm{1} = \bm{1}$, and $\bm{1}^\top W  = \bm{1}^\top$.
\end{assumption}

The matrix $W$ is block diagonal, reflecting the $C$ disconnected components of the communication graph $\mathcal{G}$. Each diagonal block $W_c \coloneqq W[\mathcal{V}_c, \mathcal{V}_c] \in \mathbb{R}^{n_c \times n_c}$ corresponds to the D2D mixing matrix of component $c \in \{ 1, \dots, C \}$.
To decompose disagreement within and across components, we define the component projector $\Pi_C\in\mathbb R^{n\times n}$ as:
\begin{align}
        \left( \Pi_C \right)_{ij} = \begin{cases}
            \frac{1}{n_c}, & i,j \in \mathcal{V}_c; \\
            0, & \text{otherwise}.
        \end{cases}
\end{align}
The operators $I-\Pi_C$ and $\Pi_C-\Pi$ 
enable an orthogonal decomposition of the global disagreement at any time 
$t$ into intra-component and inter-component terms.

\begin{lemma}[Orthogonal decomposition]
\label{lem:orthogonal}
For any $X{\in}\mathbb R^{d\times n}$,
\begin{align}
    \hspace{-0.66em}\|X(I-\Pi)\|_F^2
    = 
    \|X(I-\Pi_C)\|_F^2 
    + 
    \|X(\Pi_C-\Pi)\|_F^2.
\end{align}
\end{lemma}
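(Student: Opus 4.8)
The plan is to use the elementary additive splitting $I-\Pi = (I-\Pi_C)+(\Pi_C-\Pi)$ together with the Pythagorean theorem for the Frobenius inner product. Writing $P \coloneqq I-\Pi_C$ and $Q \coloneqq \Pi_C-\Pi$, we have $X(I-\Pi)=XP+XQ$, so expanding the squared norm gives
\begin{align}
\|X(I-\Pi)\|_F^2 = \|XP\|_F^2 + 2\langle XP, XQ\rangle_F + \|XQ\|_F^2.
\end{align}
The entire claim therefore reduces to showing that the cross term $\langle XP, XQ\rangle_F$ vanishes for every $X$.

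To do this, I would rewrite the Frobenius inner product as a trace: $\langle XP, XQ\rangle_F = \operatorname{tr}(P^\top X^\top X Q)$. Since $P$ and $Q$ are symmetric (both $\Pi$ and $\Pi_C$ are symmetric), the cyclic invariance of the trace yields $\langle XP, XQ\rangle_F = \operatorname{tr}(X^\top X\, QP)$. Hence it suffices to prove the purely algebraic identity $QP = 0$, i.e.\ $(\Pi_C-\Pi)(I-\Pi_C)=0$, which makes the cross term zero independently of $X$.

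The key ingredients for $QP=0$ are three projector identities: the idempotency $\Pi_C^2=\Pi_C$ and $\Pi^2=\Pi$, and the composition rule $\Pi_C\Pi=\Pi\Pi_C=\Pi$. Granting these, I expand $(\Pi_C-\Pi)(I-\Pi_C)=\Pi_C-\Pi_C^2-\Pi+\Pi\Pi_C=\Pi_C-\Pi_C-\Pi+\Pi=0$, as desired. Substituting back into the expansion above gives the stated decomposition.

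The main obstacle---really the only nontrivial step---is verifying the composition rule $\Pi_C\Pi=\Pi$. I would establish it directly from the definitions: since $\Pi_C$ is block-diagonal with $(i,j)$ entry equal to $1/n_c$ when $i,j$ lie in the same component $\mathcal{V}_c$, and $\Pi$ has all entries $1/n$, the $(i,k)$ entry of $\Pi_C\Pi$ equals $\sum_{j\in\mathcal{V}_{c(i)}} \tfrac{1}{n_{c(i)}}\cdot\tfrac1n = \tfrac1n$, so $\Pi_C\Pi=\Pi$; the identity $\Pi\Pi_C=\Pi$ then follows by transposition and symmetry. Intuitively this just says that averaging within components and then globally is the same as averaging globally, which is precisely the nesting of consensus subspaces that makes the intra- and inter-component errors orthogonal.
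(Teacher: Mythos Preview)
Your proof is correct. The paper does not actually supply a proof of this lemma---it is stated without argument and then invoked in the proof of Lemma~\ref{consensus_intra_inter}---so there is nothing to compare against; your approach via the projector identities $\Pi_C^2=\Pi_C$ and $\Pi\Pi_C=\Pi_C\Pi=\Pi$, which yield $(\Pi_C-\Pi)(I-\Pi_C)=0$ and hence a vanishing Frobenius cross term, is the standard and natural argument.
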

Only the intra-component disagreement is reduced by D2D consensus steps, while the inter-component term requires periodic D2S aggregation.

\begin{lemma}[Intra-component mixing parameter]
\label{lem:spectral-mixing}
There exists a constant $p \in (0,1]$ such that, for all $X\in\mathbb R^{d\times n}$,
\begin{align}
        \| X (W - \Pi_C) \|_F^2 \leq (1 - p) \| X (I - \Pi_C) \|_F^2.
\end{align}
\end{lemma}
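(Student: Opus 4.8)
The plan is to exploit the block-diagonal structure of $W$ together with its double stochasticity to reduce the claim to a per-component spectral-gap bound, mirroring the standard fully-decentralized argument with the pair $(W,\Pi)$ replaced by $(W,\Pi_C)$. First I would record the commuting identities $W\Pi_C = \Pi_C W = \Pi_C$. These hold block-by-block: since $W$ is block diagonal, the constraints $W\bm{1}=\bm{1}$ and $\bm{1}^\top W=\bm{1}^\top$ force each diagonal block $W_c$ to be itself doubly stochastic, i.e.\ $W_c\bm{1}=\bm{1}$ and $\bm{1}^\top W_c=\bm{1}^\top$. Writing the $c$-th diagonal block of $\Pi_C$ as the averaging projector $\Pi_{n_c}\coloneqq\frac{1}{n_c}\bm{1}\bm{1}^\top$ on component $c$, double stochasticity gives $\Pi_{n_c}W_c=W_c\Pi_{n_c}=\Pi_{n_c}$, and assembling the blocks yields the global relations.

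Next I would derive the key algebraic identity. Using $\Pi_C^2=\Pi_C$ together with $\Pi_C W=\Pi_C$, a direct expansion gives $(I-\Pi_C)(W-\Pi_C)=W-\Pi_C-\Pi_C W+\Pi_C^2=W-\Pi_C$. Hence for any $X\in\mathbb R^{d\times n}$ we have $X(W-\Pi_C)=X(I-\Pi_C)(W-\Pi_C)$, and submultiplicativity of the Frobenius norm, $\|MN\|_F\le\|M\|_F\|N\|_2$, applied with $M=X(I-\Pi_C)$ and $N=W-\Pi_C$ yields $\|X(W-\Pi_C)\|_F^2\le\|W-\Pi_C\|_2^2\,\|X(I-\Pi_C)\|_F^2$. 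Setting $p\coloneqq 1-\|W-\Pi_C\|_2^2$ delivers the stated inequality, with $p\le 1$ immediate from $\|W-\Pi_C\|_2^2\ge 0$.

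The substance—and the main obstacle—is establishing the strict bound $p>0$, equivalently $\|W-\Pi_C\|_2<1$. Because $W-\Pi_C$ is block diagonal, its spectral norm equals $\max_c\|W_c-\Pi_{n_c}\|_2$, so it suffices to show $\|W_c-\Pi_{n_c}\|_2<1$ for each component. This is precisely the spectral-gap property of a doubly stochastic mixing matrix on a connected graph: subtracting $\Pi_{n_c}$ annihilates the eigenvector $\bm{1}$ carrying the unit eigenvalue, and connectedness of $\mathcal{G}_c$ guarantees that all remaining singular values of $W_c$ are strictly below one, so $\|W_c-\Pi_{n_c}\|_2^2=\lambda_2(W_c^\top W_c)<1$. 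Taking $p=1-\max_c\lambda_2(W_c^\top W_c)\in(0,1]$ then completes the proof and identifies $p$ as the minimum per-component spectral gap, the natural decentralized analogue of the quantity $\gamma$ appearing in D-SGD analyses.
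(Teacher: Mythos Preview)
Your argument is correct and, in fact, more direct than what the paper provides. The paper's supporting result (Lemma~\ref{lem:block-mixing} in Appendix~\ref{app:sec_mixing}) decomposes the Frobenius norm block by block but only for the specific input $X=I$, obtaining the weighted-average constant $p=\frac{\sum_c p_c(n_c-1)}{\sum_c(n_c-1)}$; the main text then asserts that Lemma~\ref{lem:spectral-mixing} holds with this same $p$. Your spectral-norm route---via the identity $X(W-\Pi_C)=X(I-\Pi_C)(W-\Pi_C)$ followed by $\|MN\|_F\le\|M\|_F\|N\|_2$---handles arbitrary $X$ in one step and produces $p=p_{\min}=1-\max_c\lambda_2(W_c^\top W_c)$. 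This is the constant that actually respects the ``for all $X$'' quantifier: the weighted average can fail when $X$ concentrates all its disagreement mass in the component with the smallest spectral gap, and the paper itself remarks right after Lemma~\ref{lem:block-mixing} that the weighted average dominates $p_{\min}$. So your approach both proves the stated lemma and identifies the sharp constant for general $X$, whereas the paper's block-Frobenius argument yields a tighter constant only for the special choice $X=I$. The one point you could make slightly more explicit is why $\lambda_2(W_c^\top W_c)<1$: beyond connectedness of $\mathcal G_c$ you implicitly need an aperiodicity-type condition (e.g., positive diagonal, as in Metropolis--Hastings weights), but the paper is equally silent on this.
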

For a fixed $W$, Lemma~\ref{lem:spectral-mixing} holds with
$p \!=\! \textstyle \frac{\sum_{c=1}^C p_c (n_c - 1)}{\sum_{c=1}^C (n_c - 1)}$,
where $p_c \!=\! 1 \!-\! \lambda_2(W_c^\top W_c)$ (see Appendix~\ref{app:sec_mixing}).
For Metropolis-Hastings weights $w_{ij}\!=\!w_{ji} \!=\! \min\{1/(\deg(i)\!+\!1),\,1/(\deg(j)\!+\!1)\}$, we have $p_c \!=\! 1$ for complete graphs, $p_c \!=\!\Theta(n_c^{-1})$ for 2D grid topologies, and $p_c \!=\! \Theta(n_c^{-2})$ for ring graphs~\cite{XIAO200465, boydRandomizedGossipAlgorithms2006}.

A key step in our analysis is to disentangle heterogeneity within components from heterogeneity across components: this distinction is crucial for comparing S2S and S2A.

\begin{assumption}[Intra- and inter-component heterogeneity]
\label{asm:heterogeneity}
There exist $\bar{\zeta}_{\text{intra}}^2$, $\bar{\zeta}_{\text{inter}}^2 > 0$ such that, for all $\bm{x} \in \mathbb{R}^d$:
\begin{align}
\textstyle \hspace{-1.5em}\frac{1}{n} \sum_{i=1}^{n} \mathbb{E}_{\xi} \| \sum_{j=1}^{n} (W - \Pi_C)_{ij} \nabla F_j(\bm{x}, \xi) \|_2^2 &\leq \bar{\zeta}_{\text{intra}}^2, \label{asm:het_intra} \\
\textstyle \frac{1}{n} \sum_{i=1}^{n} \mathbb{E}_{\xi} \| \sum_{j=1}^{n} ( \Pi_C - \Pi)_{ij} \nabla F_j(\bm{x}, \xi) \|_2^2 &\leq \bar{\zeta}_{\text{inter}}^2. \label{asm:het_inter}
\end{align}
\end{assumption}

The constants $\bar{\zeta}_{\text{intra}}$ and $\bar{\zeta}_{\text{inter}}$ quantify intra- and inter-component noise arising from both stochastic variance and statistical heterogeneity.
We treat these two sources of noise jointly:
our intra-component bound (Eq.~\ref{asm:het_intra}) generalizes the neighborhood heterogeneity of~\cite{barsRefinedConvergenceTopology2023}---defined as the deviation between the $W$-weighted neighborhood gradients and their intra-component average---and is weaker than Assumption~4 in~\cite{guoHybridLocalSGD2021}.

\subsection{Main Results}

We are now ready to present our main convergence results; all proofs are deferred to Appendices~\ref{app:sec:unified_framework}--\ref{app:sec:convergenceS2A}.

\begin{theorem}[Sampled-to-Sampled]
\label{thm:sampling_main}
Under Assumptions 1--5, there exists a constant stepsize $\eta \leq \frac{p}{8L}$ such that, for any target accuracy $\epsilon > 0$, Algorithm~\ref{alg:mat} (S2S) achieves \\
\textbf{Convex:} $\frac{1}{T+1} \sum_{t=0}^{T} \mathbb{E} \left( f(\bar{\bm{x}}^{(t)}) - f^\star \right) \leq \epsilon$ after
\begin{align} 
T 
\geq
\mathcal{O}\biggl(
&\frac{\bar{\sigma}^2}{n \epsilon^2} 
+ \sqrt{\frac{n-1}{K-1}} \frac{\sqrt{L} \bar{\zeta}_{\text{intra}}}{p \epsilon^{3/2}} \notag \\
&+ \frac{n-1}{K-1} \frac{\sqrt{L} H \bar{\zeta}_{\text{inter}}}{\epsilon^{3/2}}
+ \frac{L}{p \epsilon}
\biggr) R_0^2,
\label{eq:bound_sts_1}
\end{align}
\textbf{Non-Convex:} $\frac{1}{T+1} \sum_{t=0}^T \mathbb{E} \| \nabla f(\bar{\bm{x}}^{(t)}) \|_2^2 \leq \epsilon$ after
\begin{align} 
T 
\geq
\mathcal{O}\biggl(
&\frac{\bar{\sigma}^2}{n \epsilon^2} 
+ \sqrt{\frac{n-1}{K-1}} \frac{\bar{\zeta}_{\text{intra}}}{p \epsilon^{3/2}} \notag \\
&+ \frac{n-1}{K-1} \frac{H \bar{\zeta}_{\text{inter}}}{\epsilon^{3/2}}
+ \frac{1}{p \epsilon}
\biggr) L f_0,
\label{eq:bound_sts_2}
\end{align}
where $R_0 \coloneqq \| \bm{x}^{(0)} - \bm{x}^\star \|_2$ and $f_0 \coloneqq f(\bm{x}^{(0)}) - f^\star$ denote the initial errors, and $\mathcal{O}(\cdot)$ hides the numerical constants explicitly provided in Appendix~\ref{app:subsec:theorem_S2S}.
\end{theorem}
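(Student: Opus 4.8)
The plan is to follow the unified decentralized-SGD template of \citet{koloskovaUnifiedTheoryDecentralized2020}, adapted to the semi-decentralized setting through the intra/inter-component split of Lemma~\ref{lem:orthogonal}. The analysis tracks two intertwined quantities: the suboptimality of the \emph{average} iterate $\bar{\bm{x}}^{(t)}$ and the consensus distance $\frac1n\mathbb{E}\|X^{(t)}-\bar X^{(t)}\|_F^2$. First I would derive a descent recursion for the average iterate. The crucial simplification, specific to S2S, is that neither the doubly stochastic D2D step nor the zero-bias D2S step (Section~\ref{subsec:comparison}) moves the average, so $\bar{\bm{x}}^{(t+1)}=\bar{\bm{x}}^{(t)}-\eta_t\,\overline{g}^{(t)}$ with $\overline{g}^{(t)}=\frac1n\sum_i\nabla F_i(\bm{x}_i^{(t)},\mathcal B_i^{(t)})$. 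Thus the average evolves as plain SGD on $f$ whose stochastic gradient is evaluated at the \emph{dispersed} local points. Expanding $\mathbb{E}\|\bar{\bm{x}}^{(t+1)}-\bm{x}^\star\|_2^2$ (convex) or $\mathbb{E} f(\bar{\bm{x}}^{(t+1)})$ (non-convex) and invoking Assumption~\ref{asm:smoothness} for the gradient mismatch, together with Assumption~\ref{asm:variance} for the averaged noise (which contributes the $\bar\sigma^2/n$ variance reduction), yields a per-step inequality whose only coupling to the network is a term proportional to $L$ (or $L^2$) times the consensus distance.

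The heart of the proof is a recursion for the consensus distance, and here I would use Lemma~\ref{lem:orthogonal} to treat the two regimes separately. For the \emph{intra}-component part $\|X(I-\Pi_C)\|_F^2$, each D2D round contracts by the factor $1-p$ of Lemma~\ref{lem:spectral-mixing}, while the local SGD step injects noise bounded by $\bar\zeta_{\text{intra}}^2$ (Assumption~\ref{asm:heterogeneity}) and the stochastic variance; summing the resulting geometric series gives a stationary level of order $\eta^2\bar\zeta_{\text{intra}}^2/p$. For the \emph{inter}-component part $\|X(\Pi_C-\Pi)\|_F^2$, the block-diagonal structure of $W$ gives $W(\Pi_C-\Pi)=\Pi_C-\Pi$, so D2D mixing leaves it \emph{unchanged}; it therefore accumulates the inter-component drive $\bar\zeta_{\text{inter}}^2$ over the entire $H$-round cycle (producing the factor $H$) and is reduced \emph{only} at the D2S round. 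Applying Eq.~\eqref{eq:deviation_sts}, the residual factor $\frac{n-K}{n-1}$ makes aggregation a contraction with rate $1-\frac{n-K}{n-1}=\frac{K-1}{n-1}$, whose steady state amplifies the accumulated inter-component error by $\frac{n-1}{K-1}$.

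Finally I would substitute the time-averaged consensus bound into the descent recursion, obtaining a single recursion in a potential $r_t$ (the squared distance $\|\bar{\bm{x}}^{(t)}-\bm{x}^\star\|_2^2$ in the convex case, the suboptimality $f(\bar{\bm{x}}^{(t)})-f^\star$ in the non-convex case) of the form $\mathbb{E}[r_{t+1}]\le \mathbb{E}[r_t]-c\,\eta\,e_t+\eta^2 A+\eta^3 B$, where $A$ collects the variance term and $B$ the intra/inter consensus terms. The iteration complexities \eqref{eq:bound_sts_1}--\eqref{eq:bound_sts_2} then follow by the standard constant-stepsize tuning lemma (optimizing $\eta\le p/(8L)$), which converts the $\eta^2 A$ and $\eta^3 B$ contributions into the $\epsilon^{-2}$ and $\epsilon^{-3/2}$ terms, while the initial-error term $r_0/(\eta T)$ at the maximal stepsize yields the $L/(p\epsilon)$ contribution. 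I expect the main obstacle to be the consensus recursion at the D2S round: the operator $W_{\text{S2S}}^{(t)}$ is \emph{random}, so one must compute its second moment acting on both $X(I-\Pi_C)$ and $X(\Pi_C-\Pi)$ and control the cross terms that uniform sampling introduces between the two components, before the coupled recursion can be summed over the $H$-round cycle. Disentangling these contributions is also what produces the asymmetric sampling dependence --- linear, $\frac{n-1}{K-1}$, in the inter-component term versus the square-root $\sqrt{(n-1)/(K-1)}$ in the intra-component term after tuning.
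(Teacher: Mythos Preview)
Your plan is essentially the paper's route: zero-bias descent for the average iterate, orthogonal intra/inter split of the consensus error, separate alternating recursions for the two parts, and final stepsize tuning via the lemma of \citet{koloskovaUnifiedTheoryDecentralized2020}. Two places where your sketch diverges from what the paper actually does are worth flagging.

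First, the inter-component recursion is more delicate than ``accumulate for $H$ rounds, then contract once by $\tfrac{K-1}{n-1}$.'' Because D2D is non-contractive on $\|X(\Pi_C-\Pi)\|_F^2$, the paper splits every D2D update via $\|a+b\|^2\le(1+\rho)\|a\|^2+(1+\rho^{-1})\|b\|^2$ (Lemma~\ref{consensus_intra_inter}) and must choose $\rho$ of order $\tfrac{K-1}{(n-1)H}$ so that the per-cycle map $\delta(1+\rho)^H$, with $\delta=\tfrac{n-K}{n-1}$, remains a strict contraction. This forces $(1+\rho^{-1})$ to scale like $\tfrac{(n-1)H}{K-1}$ in the additive term, and after summing over cycles (Lemma~\ref{prop:recursion}, expansive branch $a_2>1$) the averaged bound is $\bar\Xi_{\text{inter}}\lesssim\bigl(\tfrac{n-1}{K-1}\bigr)^{2}H^{2}\eta^{2}\bar\zeta_{\text{inter}}^{2}$---\emph{quadratic} in both the sampling and period factors. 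The linear $(n-1)/(K-1)$ and the single $H$ in the theorem appear only after the square root in the final tuning step; your steady-state heuristic undercounts by one power of each at the consensus stage. (The same Young-splitting device, with $\alpha=p/2$, is what turns the intra steady state into $\eta^2\bar\zeta_{\text{intra}}^2/p^2$ rather than the $\eta^2\bar\zeta_{\text{intra}}^2/p$ you quote.)

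Second, the ``main obstacle'' you anticipate---computing second moments of the random $W_{\text{S2S}}^{(t)}$ on $X(I-\Pi_C)$ and $X(\Pi_C-\Pi)$ separately and controlling the cross-terms---is not something the paper's proof engages with. The paper simply applies the scalar factor $\tfrac{n-K}{n-1}$ from Eq.~\eqref{eq:deviation_sts} to the intra and inter recursions \emph{independently} (see the two displayed recursions for $\Xi_{\text{intra}}^{(t)}$ and $\Xi_{\text{inter}}^{(t)}$ at the start of Appendix~\ref{app:subsec:theorem_S2S}), treating the D2S contraction as diagonal on the intra/inter decomposition without tracking any leakage between the two subspaces. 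So if your goal is to reproduce the paper's argument there is no cross-term computation to carry out; your instinct that this step deserves justification is reasonable, but it is not where the paper's proof spends its effort.
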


\begin{theorem}[Sampled-to-All]
\label{thm:broadcast_main}
Under Assumptions 1--5, there exists a constant stepsize $\eta \leq \frac{p}{8L}$ such that, for any target accuracy $\epsilon > 0$, Algorithm~\ref{alg:mat} (S2A) achieves \\
\textbf{Convex:} $\frac{1}{T+1} \sum_{t=0}^{T} \mathbb{E} \left( f(\bar{\bm{x}}^{(t)}) - f^\star \right) \leq \epsilon$ after
\begin{align} 
T \geq
\mathcal{O} \biggl(
&\frac{\bar{\sigma}^2}{n \epsilon^2}
+ \frac{n-K}{K(n-1)} \frac{\bar{\zeta}_{\text{intra}}^2}{H p^2 \epsilon^2} 
+ \frac{n-K}{K(n-1)} \frac{H \bar{\zeta}_{\text{inter}}^2}{\epsilon^2} \notag \\
&+ \frac{\sqrt{L} \bar{\zeta}_{\text{intra}}}{p\epsilon^{3/2}}
+ \frac{\sqrt{L} H \bar{\zeta}_{\text{inter}}}{\epsilon^{3/2}}
+\frac{L}{p\epsilon}
\biggr) R_0^2,
\label{eq:bound_sta_1}
\end{align}
\textbf{Non-Convex:} $\frac{1}{T+1} \sum_{t=0}^{T} \mathbb{E} \| \nabla f(\bar{\bm{x}}^{(t)}) \|_2^2 \leq \epsilon$ after
\begin{align} 
T \geq
\mathcal{O} \biggl(
&\frac{\bar{\sigma}^2}{n \epsilon^2}
+ \frac{n-K}{K(n-1)} \frac{\bar{\zeta}_{\text{intra}}^2}{H p^2 \epsilon^2} 
+ \frac{n-K}{K(n-1)} \frac{H \bar{\zeta}_{\text{inter}}^2}{\epsilon^2} \notag \\
&+ \frac{\bar{\zeta}_{\text{intra}}}{p\epsilon^{3/2}}
+ \frac{H \bar{\zeta}_{\text{inter}}}{\epsilon^{3/2}}
+\frac{1}{p\epsilon}
\biggr) L f_0, 
\label{eq:bound_sta_2}
\end{align}
where $R_0 \coloneqq \| \bm{x}^{(0)} - \bm{x}^* \|_2$ and $f_0 \coloneqq f(\bm{x}^{(0)}) - f^\star$ denote the initial errors, and $\mathcal{O}(\cdot)$ hides the numerical constants explicitly provided in Appendix~\ref{app:subsec:theorem_S2A}.
\end{theorem}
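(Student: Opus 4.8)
The plan is to follow the standard descent-plus-consensus template for decentralized SGD (as in \citet{koloskovaUnifiedTheoryDecentralized2020,barsRefinedConvergenceTopology2023}), but with the critical modification that the S2A update perturbs the \emph{global average} at every D2S round. I would track two coupled quantities: the objective gap along the average trajectory $\bar{\bm{x}}^{(t)}$, and the total disagreement $\mathbb{E}\|X^{(t)}(I-\Pi)\|_F^2$, decomposed via Lemma~\ref{lem:orthogonal} into intra- and inter-component parts. The whole difference from classical D-SGD is concentrated in how the once-per-window broadcast bias feeds back into these two recursions.

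First I would establish a one-step descent inequality for the average model. On non-D2S rounds the average evolves by plain (perturbed) gradient descent because $W$ is doubly stochastic (Assumption~\ref{asm:mixing}), so $L$-smoothness (Assumption~\ref{asm:smoothness}) yields the usual $-\tfrac{\eta}{2}\mathbb{E}\|\nabla f(\bar{\bm{x}}^{(t)})\|_2^2$ progress, a variance term $\tfrac{\eta^2 L\bar{\sigma}^2}{2n}$ (Assumption~\ref{asm:variance}), and a gradient-mismatch term proportional to $\tfrac{\eta L^2}{n}\mathbb{E}\|X^{(t)}(I-\Pi)\|_F^2$. On D2S rounds I would additionally expand $f(\bar{\bm{x}}^{(t+1)})$ around $\bar{\bm{x}}^{(t+2/3)}$. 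The key observation is that uniform sampling without replacement makes the broadcast-induced shift \emph{conditionally zero-mean}, i.e.\ $\mathbb{E}_{\mathcal S}[\bar{\bm{x}}^{(t+1)}]=\bar{\bm{x}}^{(t+2/3)}$; hence the first-order cross term vanishes in expectation and only the second moment survives, which Eq.~\eqref{eq:deviation_sta} bounds by $\tfrac{n-K}{K(n-1)}\mathbb{E}\|X^{(t+2/3)}(I-\Pi)\|_F^2$. This is what keeps the bias from entering at order $\eta$ and is the linchpin of the whole argument.

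Next I would derive the consensus recursion. For S2A, $W_{\text{S2A}}$ enforces perfect consensus, so \emph{both} disagreement components reset to zero at the start of every length-$H$ window. Inside a window I would use Lemma~\ref{lem:spectral-mixing} to contract the intra-component part geometrically (factor $1-p$ per D2D step), driving it to a steady level $\mathcal{O}\!\big(\eta^2\bar{\zeta}_{\text{intra}}^2/p^2+\eta^2\bar{\sigma}^2/p\big)$, while the inter-component part is \emph{invariant} under D2D mixing (since $W$ is block-diagonal and commutes with $\Pi_C$) and therefore accumulates only through the gradient steps, growing as $\mathcal{O}(\eta^2 h^2\bar{\zeta}_{\text{inter}}^2)$ after $h$ steps. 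Averaging over the window gives the disagreement entering the descent bound, while evaluating at the window's end ($h=H$) gives the disagreement that feeds the bias term.

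Finally I would assemble and tune. Summing the descent inequality over $t=0,\dots,T$ and isolating the averaged gradient norm, the $\eta$ attached to the gradient-mismatch term cancels, so that term scales as $L^2\eta^2\big(\bar{\zeta}_{\text{intra}}^2/p^2+H^2\bar{\zeta}_{\text{inter}}^2\big)$ and lands in the $\eta^2$-coefficient; the bias, incurred only once per $H$ rounds and divided by $\eta$, amortizes to $L\eta\tfrac{n-K}{K(n-1)}\big(\bar{\zeta}_{\text{intra}}^2/(Hp^2)+H\bar{\zeta}_{\text{inter}}^2\big)$ and lands in the $\eta$-coefficient alongside the variance. Writing the result as $\tfrac{1}{T+1}\sum_t\mathbb{E}\|\nabla f(\bar{\bm{x}}^{(t)})\|_2^2\le \tfrac{r_0}{\eta T}+c_1\eta+c_2\eta^2$ and applying the standard three-term stepsize lemma of \citet{koloskovaUnifiedTheoryDecentralized2020} under $\eta\le p/(8L)$ then converts the $1/(\eta T)$, $c_1\eta$, and $c_2\eta^2$ contributions into the $\tfrac{1}{p\epsilon}$, $\epsilon^{-2}$, and $\epsilon^{-3/2}$ terms of Eq.~\eqref{eq:bound_sta_2}, respectively; the convex case (Assumption~\ref{asm:convexity}) uses the analogous convex descent lemma with $R_0^2$, trading some factors of $L$ for $\sqrt{L}$ to give Eq.~\eqref{eq:bound_sta_1}. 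I expect the main obstacle to be bounding the inter-component disagreement at the \emph{end} of the window: because D2D mixing never reduces it, I must carefully track its linear-in-$h$ growth to obtain the $H^2$ dependence (pre-amortization) that becomes the $H$-scaling of the inter-component bias term---this accumulation, together with verifying the conditional zero-mean property of the bias, is precisely where S2A's analysis departs from both S2S and classical doubly-stochastic D-SGD.
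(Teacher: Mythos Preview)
Your proposal is correct and follows essentially the same route as the paper: the descent lemma with the broadcast-bias term isolated via conditional unbiasedness (Lemma~\ref{lem:broadcast_app}(iii)), the consensus recursion reset to zero at each D2S round (Lemma~\ref{consensus_sta}) with intra-component contraction and inter-component accumulation over the window (Lemma~\ref{consensus_intra_inter}), and the final tuning via \citet[Lemmas~16--17]{koloskovaUnifiedTheoryDecentralized2020}. The only cosmetic difference is that the paper handles the non-contractive inter-component growth through Young's inequality with a tuned parameter $\rho=2/H$ inside a general alternating-recursion lemma (Lemma~\ref{prop:recursion}), whereas you describe a direct Cauchy--Schwarz unrolling to get the $\mathcal{O}(\eta^2 h^2\bar{\zeta}_{\text{inter}}^2)$ scaling---both give the same $H^2$ dependence pre-amortization and hence the same final bound.
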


\subsection{Discussion}

\begin{figure}[t]               
  \centering
  % ---------- shared legend ----------
  \includegraphics[width=0.8\columnwidth]{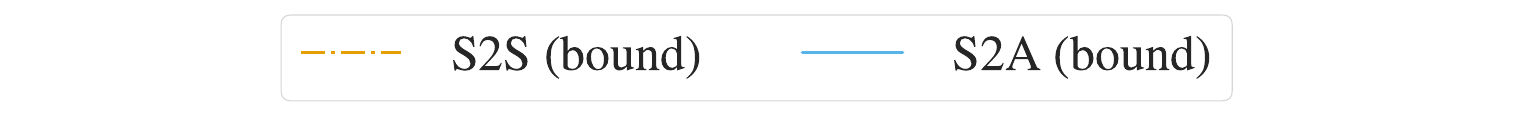}

  % =============== GRID =============== %
  \begin{minipage}{\linewidth}
  \centering

  % ---------- row 1 ----------
  \begin{subfigure}[t]{.32\linewidth}
    \includegraphics[width=\linewidth]{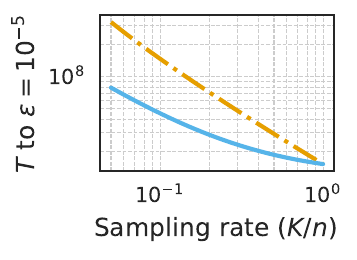}
    \caption{\tiny $\bar{\zeta}_{\text{intra}} {=} \bar{\zeta}_{\text{inter}} {=} \frac{1}{10}$}
  \end{subfigure}\hfill
  \begin{subfigure}[t]{.32\linewidth}
    \includegraphics[width=\linewidth]{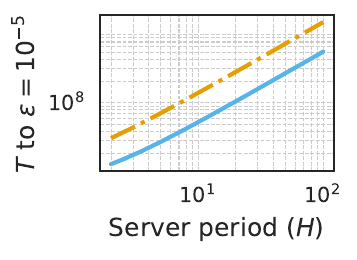}
    \caption{\tiny $\bar{\zeta}_{\text{intra}} {=} \bar{\zeta}_{\text{inter}} {=} \frac{1}{10}$}
  \end{subfigure}\hfill
  \begin{subfigure}[t]{.32\linewidth}
    \includegraphics[width=\linewidth]{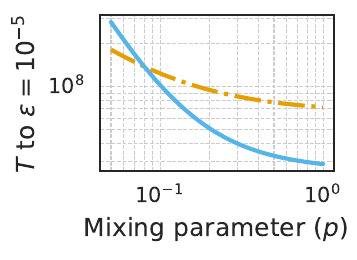}
    \caption{\tiny $\bar{\zeta}_{\text{intra}} {=} \bar{\zeta}_{\text{inter}} {=} \frac{1}{10}$}
  \end{subfigure}\\[3pt]

  % ---------- row 2 ----------
  \begin{subfigure}[t]{.32\linewidth}
    \includegraphics[width=\linewidth]{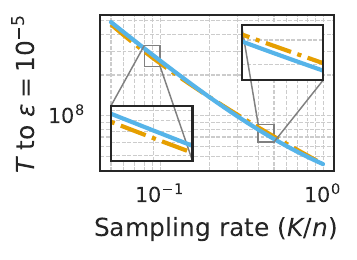}
    \caption{\tiny $\bar{\zeta}_{\text{intra}} {=} 1, \bar{\zeta}_{\text{inter}} {=} \frac{1}{10}$}
  \end{subfigure}\hfill
  \begin{subfigure}[t]{.32\linewidth}
    \includegraphics[width=\linewidth]{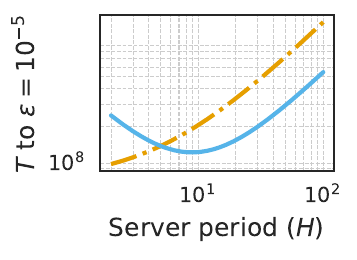}
    \caption{\tiny $\bar{\zeta}_{\text{intra}} {=} 1, \bar{\zeta}_{\text{inter}} {=} \frac{1}{10}$}
  \end{subfigure}\hfill
  \begin{subfigure}[t]{.32\linewidth}
    \includegraphics[width=\linewidth]{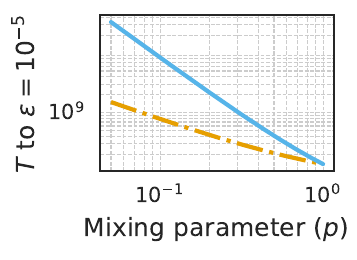}
    \caption{\tiny $\bar{\zeta}_{\text{intra}} {=} 1, \bar{\zeta}_{\text{inter}} {=} \frac{1}{10}$}
  \end{subfigure}\\[3pt]

  % ---------- row 3 ----------
  \begin{subfigure}[t]{.32\linewidth}
    \includegraphics[width=\linewidth]{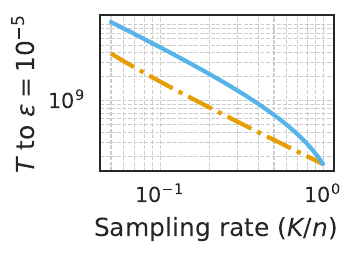}
    \caption{\tiny $\bar{\zeta}_{\text{intra}} {=} \frac{1}{10}, \bar{\zeta}_{\text{inter}} {=} 1$}
  \end{subfigure}\hfill
  \begin{subfigure}[t]{.32\linewidth}
    \includegraphics[width=\linewidth]{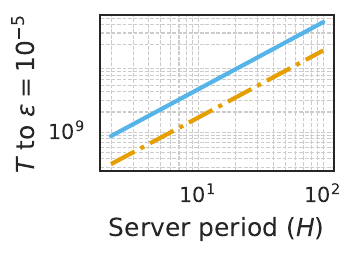}
    \caption{\tiny $\bar{\zeta}_{\text{intra}} {=} \frac{1}{10}, \bar{\zeta}_{\text{inter}} {=} 1$}
  \end{subfigure}\hfill
  \begin{subfigure}[t]{.32\linewidth}
    \includegraphics[width=\linewidth]{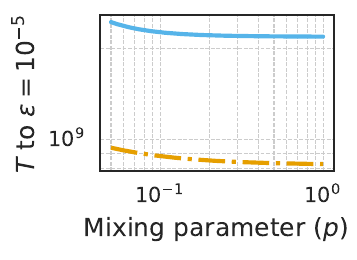}
    \caption{\tiny $\bar{\zeta}_{\text{intra}} {=} \frac{1}{10}, \bar{\zeta}_{\text{inter}} {=} 1$}
  \end{subfigure}\\[3pt]

  % ---------- row 4 ----------
  \begin{subfigure}[t]{.32\linewidth}
    \includegraphics[width=\linewidth]{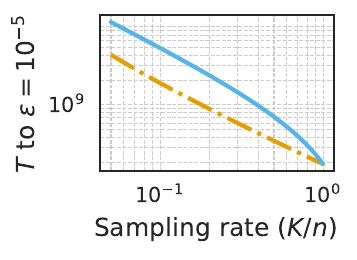}
    \caption{\tiny $\bar{\zeta}_{\text{intra}} {=} \bar{\zeta}_{\text{inter}} {=} 1$}
  \end{subfigure}\hfill
  \begin{subfigure}[t]{.32\linewidth}
    \includegraphics[width=\linewidth]{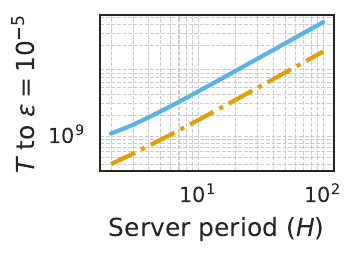}
    \caption{\tiny $\bar{\zeta}_{\text{intra}} {=} \bar{\zeta}_{\text{inter}} {=} 1$}
  \end{subfigure}\hfill
  \begin{subfigure}[t]{.32\linewidth}
    \includegraphics[width=\linewidth]{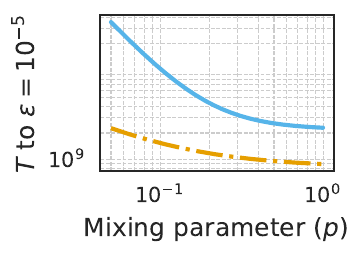}
    \caption{\tiny $\bar{\zeta}_{\text{intra}} {=} \bar{\zeta}_{\text{inter}} {=} 1$}
  \end{subfigure}

  \end{minipage}

  \caption{Convergence rates for S2S and S2A,
           comparing Eqs.~\eqref{eq:bound_sts_2}--\eqref{eq:bound_sta_2} with $n\!=\!100$, $L\!=\!f_0\!=\!1$, $\bar{\sigma}\!=\!0$. 
           Left column: Sampling rate ($K/n$) with $H\!=\!5$, $p\!=\!1$. Center column: Server period ($H$) with $K/n\!=\!0.2$, $p\!=\!1$.
           Right column: Mixing parameter~($p$) with $K/n\!=\!0.2$, $H\!=\!5$.}
  \label{fig:bound}
\end{figure}

\begin{figure*}[t]
  \centering
  
  % ---------- Shared legend ----------
  \includegraphics[width=0.95\textwidth]{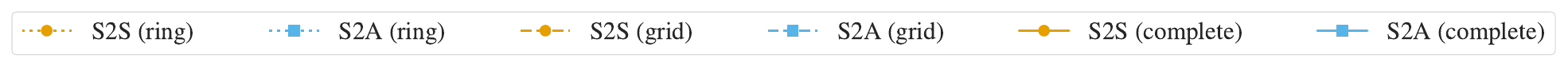}
  
  % ---------- Left figure (MNIST) ----------
  \begin{minipage}[t]{0.46\textwidth}
    \centering
    % ---------- Row 1 ----------
    \begin{subfigure}[t]{0.44\textwidth}
      \includegraphics[width=\linewidth]{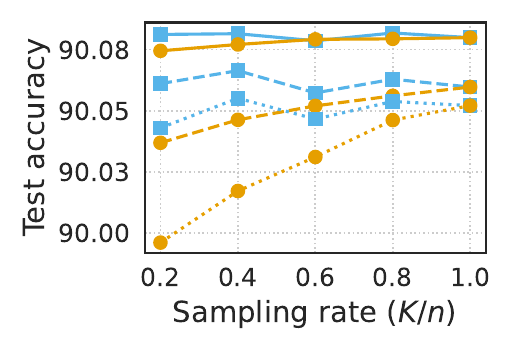}
      \caption{\scriptsize Intra IID, Inter IID}
    \end{subfigure}
    \begin{subfigure}[t]{0.44\textwidth}
      \includegraphics[width=\linewidth]{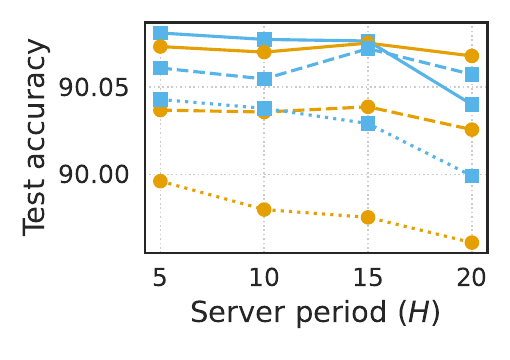}
      \caption{\scriptsize Intra IID, Inter IID}
    \end{subfigure}
    % ---------- Row 2 ----------
    \begin{subfigure}[t]{0.44\textwidth}
      \includegraphics[width=\linewidth]{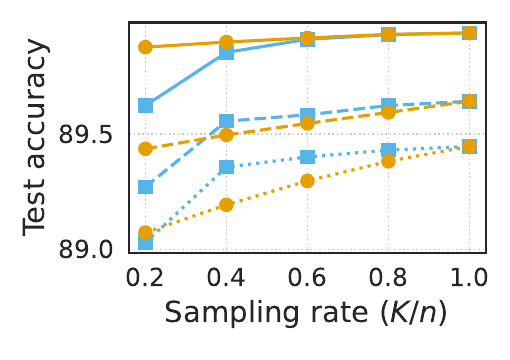}
      \caption{\scriptsize Intra non-IID, Inter IID}
    \end{subfigure}
    \begin{subfigure}[t]{0.44\textwidth}
      \includegraphics[width=\linewidth]{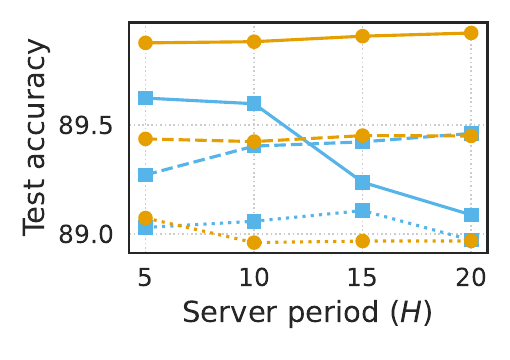}
      \caption{\scriptsize Intra non-IID, Inter IID}
    \end{subfigure}
    % ---------- Row 3 ----------
    \begin{subfigure}[t]{0.44\textwidth}
      \includegraphics[width=\linewidth]{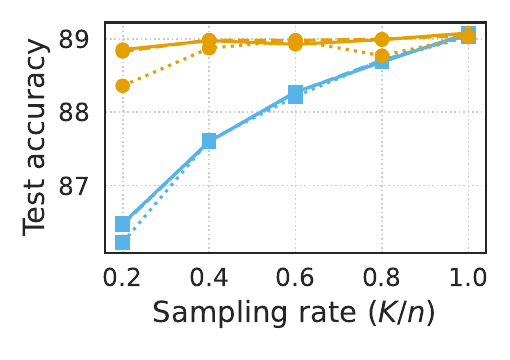}
      \caption{\scriptsize Intra IID, Inter non-IID}
    \end{subfigure}
    \begin{subfigure}[t]{0.44\textwidth}
      \includegraphics[width=\linewidth]{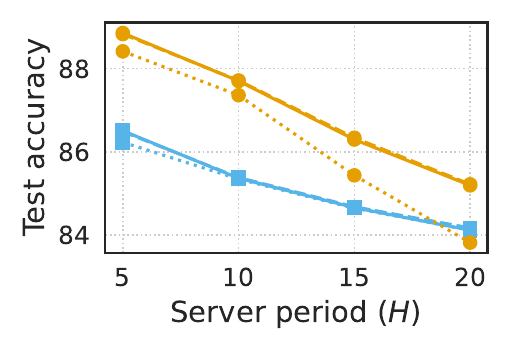}
      \caption{\scriptsize Intra IID, Inter non-IID}
    \end{subfigure}
    % ---------- Row 4 ----------
    \begin{subfigure}[t]{0.44\textwidth}
      \includegraphics[width=\linewidth]{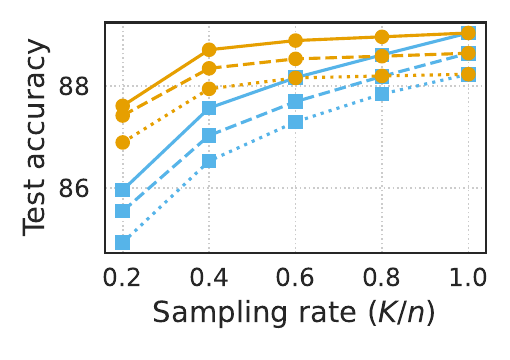}
      \caption{\scriptsize Intra non-IID, Inter non-IID}
    \end{subfigure}
    \begin{subfigure}[t]{0.44\textwidth}
      \includegraphics[width=\linewidth]{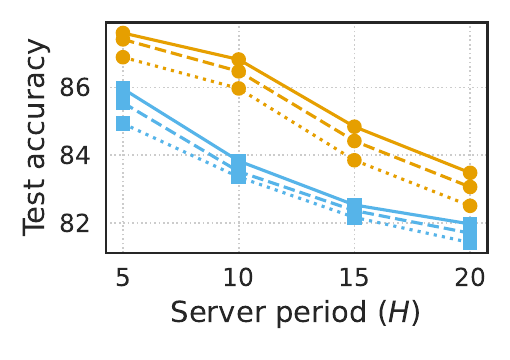}
      \caption{\scriptsize Intra non-IID, Inter non-IID}
    \end{subfigure}

    \caption{Test accuracy on MNIST dataset. 
      Left column: Sampling rate ($K/n$) with $H=5$. 
      Right column: Server period ($H$) with $K/n=0.2$.}
    \label{fig:MNIST}
  \end{minipage}\hfill
  %
  % ---------- Right figure (CIFAR-10) ----------
  \begin{minipage}[t]{0.46\textwidth}
    \centering
    % ---------- Row 1 ----------
    \begin{subfigure}[t]{0.44\textwidth}
      \includegraphics[width=\linewidth]{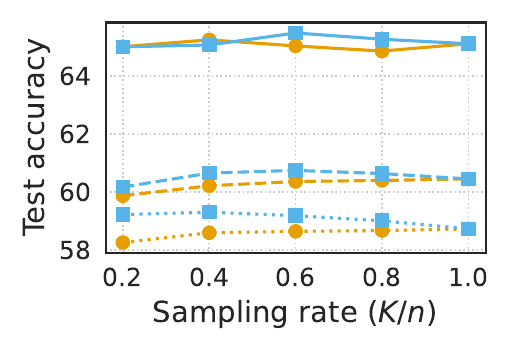}
      \caption{\scriptsize Intra IID, Inter IID}
    \end{subfigure}
    \begin{subfigure}[t]{0.44\textwidth}
      \includegraphics[width=\linewidth]{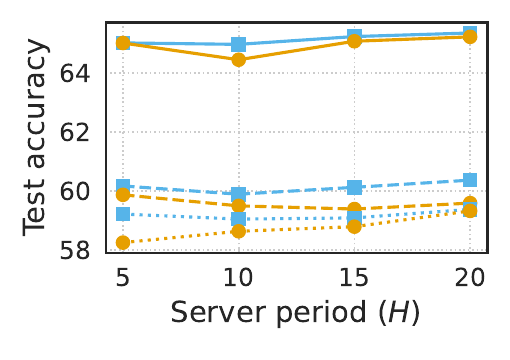}
      \caption{\scriptsize Intra IID, Inter IID}
    \end{subfigure}
    % ---------- Row 2 ----------
    \begin{subfigure}[t]{0.44\textwidth}
      \includegraphics[width=\linewidth]{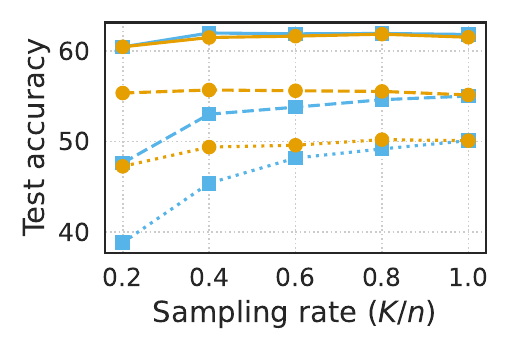}
      \caption{\scriptsize Intra non-IID, Inter IID}
    \end{subfigure}
    \begin{subfigure}[t]{0.44\textwidth}
      \includegraphics[width=\linewidth]{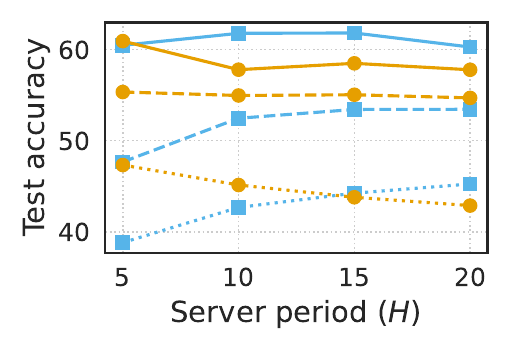}
      \caption{\scriptsize Intra non-IID, Inter IID}
    \end{subfigure}
    % ---------- Row 3 ----------
    \begin{subfigure}[t]{0.44\textwidth}
      \includegraphics[width=\linewidth]{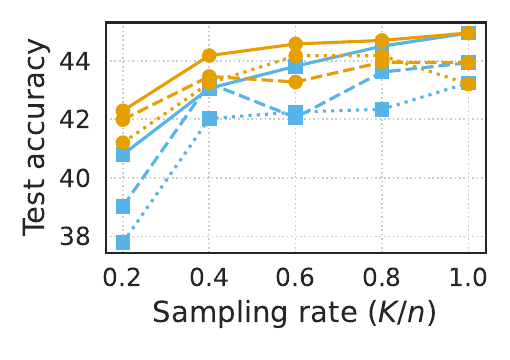}
      \caption{\scriptsize Intra IID, Inter non-IID}
    \end{subfigure}
    \begin{subfigure}[t]{0.44\textwidth}
      \includegraphics[width=\linewidth]{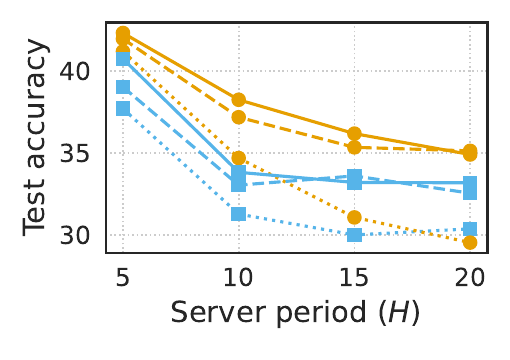}
      \caption{\scriptsize Intra IID, Inter non-IID}
    \end{subfigure}
    % ---------- Row 4 ----------
    \begin{subfigure}[t]{0.44\textwidth}
      \includegraphics[width=\linewidth]{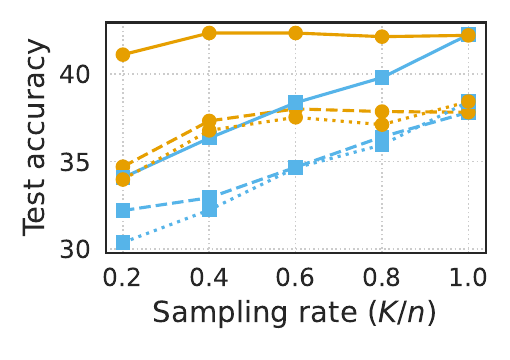}
      \caption{\scriptsize Intra non-IID, Inter non-IID}
    \end{subfigure}
    \begin{subfigure}[t]{0.44\textwidth}
      \includegraphics[width=\linewidth]{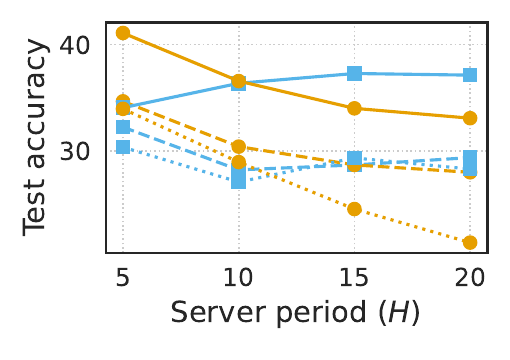}
      \caption{\scriptsize Intra non-IID, Inter non-IID}
    \end{subfigure}

    \caption{Test accuracy on CIFAR-10 dataset. 
      Left column: Sampling rate ($K/n$) with $H=5$. 
      Right column: Server period ($H$) with $K/n=0.2$.}
    \label{fig:CIFAR}
  \end{minipage}
\end{figure*}

We compare S2S and S2A under the convergence bounds of
Theorems~\ref{thm:sampling_main}--\ref{thm:broadcast_main}.  
Overall, S2S achieves a faster convergence than S2A: neglecting common factors, the dominant error terms scale as $\mathcal{O}(\epsilon^{-3/2})$ in Eqs.~\eqref{eq:bound_sts_1}--\eqref{eq:bound_sts_2}, as compared to $\mathcal{O}(\epsilon^{-2})$ in Eqs.~\eqref{eq:bound_sta_1}--\eqref{eq:bound_sta_2}.
The slower convergence of S2A is primarily due to the broadcast-induced bias error discussed in Section~\ref{subsec:comparison}.  
Moreover, S2A incurs an \emph{extra quadratic} dependence on the intra- and inter-component heterogeneity terms,
$\bar\zeta_{\text{intra}}$ and $\bar\zeta_{\text{inter}}$, which can dominate the bounds in Eqs.~\eqref{eq:bound_sta_1}--\eqref{eq:bound_sta_2} under statistically diverse data distributions.

\paragraph{Effect of sampling rate ($K/n$).}
The number of sampled devices $K$ affects both heterogeneity terms, $\bar\zeta_{\text{intra}}$ and $\bar\zeta_{\text{inter}}$, with different multiplicative factors for S2S and S2A.
Two limiting cases are noteworthy:
\begin{itemize}[topsep=0pt, itemsep=0pt, parsep=0pt, partopsep=0pt]
\item When \emph{all} devices are sampled ($K=n$), the two update rules coincide
($W_{\text{S2A}}=W_{\text{S2S}}=\Pi$), and
the two algorithms share the same convergence rate.

\item When only \emph{one} device is sampled ($K=1$), $W_{\text{S2S}} = I$,
S2S is unable to mix the sampled model across components, and the bounds in Eqs.~\eqref{eq:bound_sts_1}--\eqref{eq:bound_sts_2} diverge.
In contrast, S2A still broadcasts the (single) sampled model to all devices and thus converges, albeit at a slower rate.
\end{itemize}

\paragraph{Effect of server period ($H$).}
All $\bar{\zeta}_{\text{inter}}$ terms are penalized by a factor~$H$ in both bounds, reflecting the fact that only D2S rounds mitigate inter-component heterogeneity. For $H \to \infty$, both bounds diverge, as each components may reach consensus to their local optima, but no convergence to the global optimum can be guaranteed. 
Nonetheless,
S2A grows \emph{quadratically} in
$\bar\zeta_{\text{inter}}$, whereas S2S grows linearly.

\paragraph{Effect of mixing parameter ($p$).}  
All $\bar{\zeta}_{\text{intra}}$ terms are multiped by the inverse of the mixing parameter $p$, as D2D rounds can only mitigate intra-component heterogeneity. 

\subsection{Theoretical Heterogeneity Regimes}
\label{sec:regimes_theory}

To better interpret Theorems~\ref{thm:sampling_main}--\ref{thm:broadcast_main}, Figure~\ref{fig:bound} shows the right-hand sides of Eqs.~\eqref{eq:bound_sts_2} and~\eqref{eq:bound_sta_2}, comparing the number of rounds $T$ required to achieve the target accuracy $\epsilon = 10^{-5}$ as a function of the sampling rate (left column), server period (center column), and mixing parameter (right column). 
We consider $n=100$ devices, and set the parameters $L = f_0 = 1$ and $\bar{\sigma} = 0$ (as they are common to both S2S and S2A, their choice does not influence the comparison).  

We identify three main qualitative regimes:
\begin{enumerate}[label=\emph{\textbf{R\arabic*.}},leftmargin=20pt, topsep=0pt, itemsep=0pt, parsep=0pt, partopsep=0pt]
\item \emph{$\bm{\bar{\zeta}}_{\text{\textbf{intra}}},\, \bm{\bar{\zeta}}_{\text{\textbf{inter}}}$ \textbf{are low:}} 
S2A converges faster than S2S for most sampling rates (Fig.~\ref{fig:bound}(a)), server periods (Fig.~\ref{fig:bound}(b)), and mixing parameters (Fig.~\ref{fig:bound}(c)).

\item \emph{$\bm{\bar\zeta}_{\text{\textbf{inter}}}\!\bm{\ll}\!\bm{\bar\zeta}_{\text{\textbf{intra}}}\textbf{:}$}
S2S converges slightly faster
for low sampling rates, low server periods, and for most mixing parameters ($K/n\!<\!0.2$, $H\!<\!5$, and $p<1$);
S2A converges slightly faster
otherwise (Figs.~\ref{fig:bound}(d,e,f)).

\item \emph{$\bm{\bar\zeta}_{\text{\textbf{inter}}}$ \textbf{is high:}}
S2S converges faster for most values of $K/n$, $H$, and $p$, irrespective of
$\bar\zeta_{\text{intra}}$ (Figs.~\ref{fig:bound}(g--l)).
\end{enumerate}

\section{Experimental Results}

We simulate a semi-decentralized FL system consisting of a central
server and $n\!=\!100$ devices partitioned into $C\!=\!2$ equal-sized
components ($n_1\!=\!n_2\!=\!50$). 
For the D2S communication network, we vary the sampling rate
$K/n\!\in\!\{0.2,0.4,0.6,0.8,1\}$ and the aggregation period $H\!\in\!\{5,10,15,20\}$. 
For the D2D communication graph, we consider three representative topologies: ring, grid, and complete graph, with Metropolis-Hastings mixing weights.

We benchmark our comparison on two image-classification tasks widely adopted in prior work on semi-decentralized FL for evaluating S2S and S2A separately: the MNIST dataset \cite{lecun-mnisthandwrittendigit-2010} trained with a single-hidden-layer logistic classifier ($d\!=\!7{,}850$ parameters), and the CIFAR-10 dataset \cite{krizhevsky2009learning} trained with a reference convolutional neural network ($d \approx 1.1$ million parameters)~\cite{linSemiDecentralizedFederatedLearning2021, guoHybridLocalSGD2021, chenTamingSubnetDriftD2DEnabled2024}.

We introduce intra- and inter-component heterogeneity mimicking the constants
$\bar\zeta_{\text{intra}}$ and $\bar\zeta_{\text{inter}}$ of
Assumption~\ref{asm:heterogeneity}:
\begin{itemize}[topsep=0pt, itemsep=0pt, parsep=0pt, partopsep=0pt]
    \item \emph{Inter-component heterogeneity}.
    We partition the dataset across components either through an IID split (each component receives samples from all classes), or through a pathological non-IID split (each component receives samples from only half of the classes, with disjoint class sets)~\cite{mcmahanCommunicationEfficientLearningDeep2017}.

    \item \emph{Intra-component heterogeneity}. Within each component, we partition the dataset across devices either IID or non-IID, the latter through a Dirichlet distribution with concentration parameter $0.1$~\cite{wangFederatedLearningMatched2019}.
\end{itemize}

All models are trained with mini-batch
SGD (batch size $128$) for $T\!=\!100$ rounds.
For each algorithm, we tune the stepsize
$\eta\!\in\!\{10^{-2.5},10^{-2},10^{-1.5},10^{-1}\}$.
Results are averaged over five independent runs.
Additional experimental details are given in Appendix~\ref{app:sec:experiments}.

\subsection{Experimental Heterogeneity Regimes}

\begin{figure}
    \centering
    \begin{subfigure}[b]{0.44\linewidth}
        \centering
        \includegraphics[width=\linewidth]{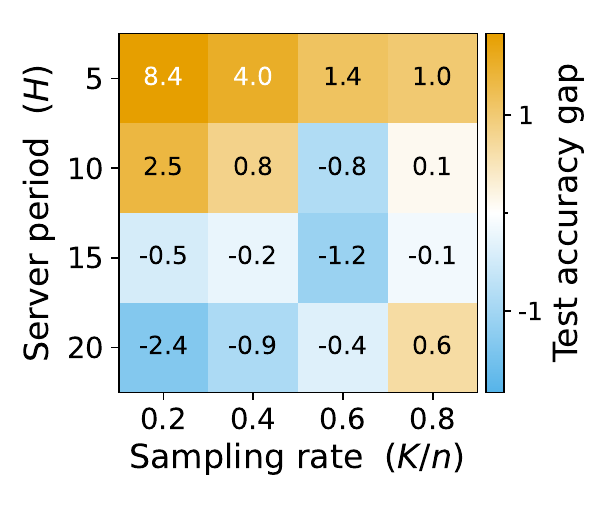}
        \caption{\scriptsize Intra non-IID, Inter IID}
        \label{fig:heatmap_non_iid_iid}
    \end{subfigure}
    \hspace{0.5em}
    \begin{subfigure}[b]{0.44\linewidth}
        \centering
        \includegraphics[width=\linewidth]{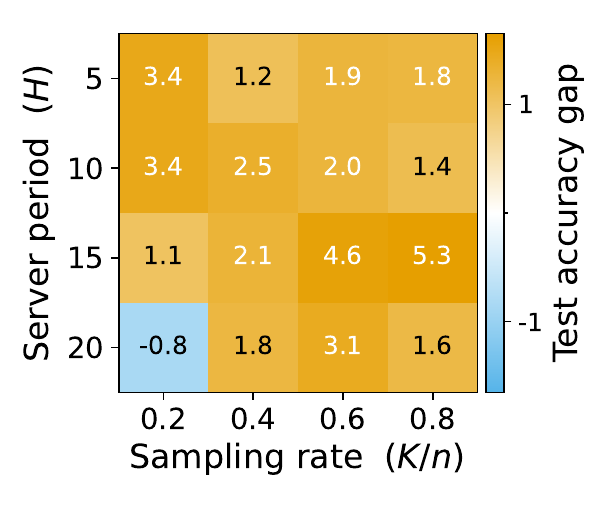}
        \caption{\scriptsize Intra IID, Inter non-IID}
        \label{fig:heatmap_iid_non_iid}
    \end{subfigure}
    \caption{Accuracy gap on CIFAR-10 with ring topology.}
    \label{fig:cifar_heatmaps}
\end{figure}

\begin{figure}
    \centering
    \begin{subfigure}[b]{0.44\linewidth}
    \centering
        \includegraphics[width=\linewidth]{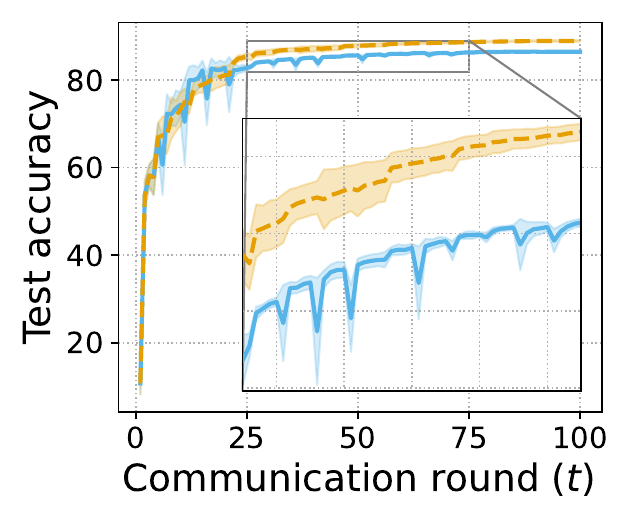}
        \caption{\scriptsize MNIST, $H\!=\!5$}
        \label{fig:mnist_rounds}
    \end{subfigure}
    \centering
        \begin{subfigure}[b]{0.44\linewidth}
        \includegraphics[width=\linewidth]{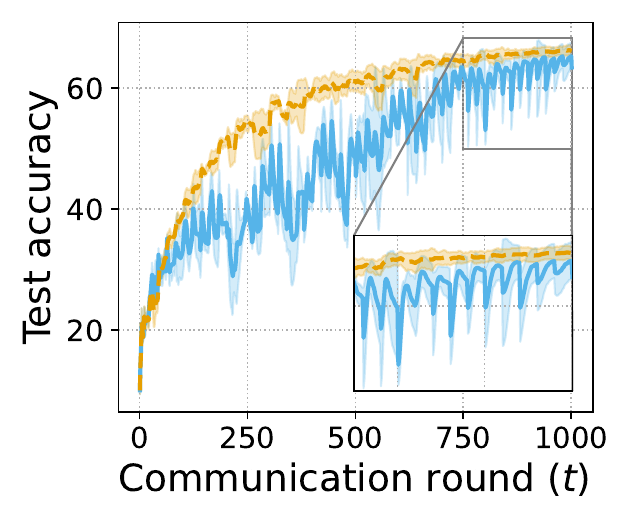}
        \caption{\scriptsize CIFAR-10, $H\!=\!20$}
        \label{fig:cifar_rounds}
    \end{subfigure}
    \caption{Test accuracy over communication rounds for intra IID, inter non-IID heterogeneity, $K/n\!=\!0.2$, ring topology.}
    \label{fig:learning_curves}
\end{figure}

Figures 2 and 3 report the test
accuracy achieved by S2S and S2A on
MNIST and CIFAR-10 datasets, respectively.

\paragraph{Effect of sampling rate (Figs.~\ref{fig:MNIST}--\ref{fig:CIFAR}, left column).} 
For both S2S and S2A, accuracy improves as the sampling rate increases, with an average gain of +2 percentage points (p.p.) between $K/n\!=\!0.2$ and $K/n\!=\!1$.
Interestingly, our experiments confirm the same qualitative heterogeneity regimes identified by our theoretical analysis:
\begin{enumerate}[label=\emph{\textbf{R\arabic*.}},leftmargin=20pt, topsep=0pt, itemsep=0pt, parsep=0pt, partopsep=0pt]
\item \textbf{\emph{Intra IID, Inter IID (Figs.~\ref{fig:MNIST}--\ref{fig:CIFAR}(a)):}}  
S2A outperforms S2S in over 80\% of configurations, although the gain is modest (up to 1 p.p. on the ring for $K/n\!=\!0.2$).
\item \textbf{\emph{Intra non-IID, Inter IID (Figs.~\ref{fig:MNIST}--\ref{fig:CIFAR}(c)):}} S2A outperforms in 40\% of cases (up to +0.5 p.p. on the complete graph with high $K/n$), while S2S prevails in the remaining 60\% (up to +8.4 p.p. on the ring at $K/n\!=\!0.2$).
\item \textbf{\emph{Inter non-IID (Figs.~\ref{fig:MNIST}--\ref{fig:CIFAR}(e,g))}:} S2S outperforms S2A in over 90\% of settings, with the largest gain at $K/n\!=\!0.2$ (+2.4 p.p. on MNIST, +7 p.p. on CIFAR-10).
\end{enumerate}
Across the 96 evaluated configurations, S2S outperforms S2A in about 60\% of cases, S2A in 30\%, and the remaining 10\% are not statistically significant (gap below standard error).
Topology also plays a role: ring accounts for 45\% of the largest gaps, grid for 30\%, and complete graph for 25\%.

\paragraph{Effect of server period (Figs.~\ref{fig:MNIST}--\ref{fig:CIFAR}, right column).} Accuracy decreases as the server period $H$ increases (by an average of -2.4 p.p. from $H\!=\!5$ to $H\!=\!20$), highlighting the importance of frequent D2S communication. Again, our experiments confirm the theoretical regimes from Section~\ref{sec:regimes_theory}:
\begin{enumerate}[label=\emph{\textbf{R\arabic*.}},leftmargin=20pt, topsep=0pt, itemsep=0pt, parsep=0pt, partopsep=0pt]
\item \textbf{\emph{Intra IID, Inter IID (Figs.~\ref{fig:MNIST}--\ref{fig:CIFAR}(b)):}}
S2A consistently outperforms S2S in over 95\% of cases, although the gap remains modest (below 1 p.p. at $H\!=\!5$, ring).
\item \textbf{\emph{Intra non-IID, Inter IID (Figs.~\ref{fig:MNIST}--\ref{fig:CIFAR}(d))}:} S2S outperforms in 70\% of configurations (up to +8.5 p.p. at $H\!=\!5$, ring), whereas S2A prevails in the remaining~30\% (up to +4 p.p. at $H\!=\!10$, complete).
\item \textbf{\emph{Inter non-IID (Figs.~\ref{fig:MNIST}--\ref{fig:CIFAR}(g,h))}:} S2S prevails in over 90\% of configurations, with the largest gap at $H\!=\!5$ (+2 p.p. on MNIST, +7 p.p. on CIFAR-10).
\end{enumerate}
Across 96 comparisons, S2S outperforms in 60\% of them, while S2A in 40\%. Interestingly, in 80\% of heterogeneity regimes, S2S shows a steeper accuracy drop with increasing~$H$, yet it still outperforms S2A in 60\% of these cases. 

\paragraph{Intra vs. Inter Heterogeneity (Fig.~\ref{fig:cifar_heatmaps}).}
Figure~\ref{fig:cifar_heatmaps} compares the accuracy gap (S2S minus S2A) on CIFAR‑10 with ring topology under two opposite heterogeneity regimes. With non-IID intra and IID inter-component heterogeneity (Fig.~\ref{fig:cifar_heatmaps}(a)), S2S prevails at low sampling rates or low server periods (+8.4 p.p. at $K/n\!=\!0.2$, $H\!=\!5$), while S2A prevails for higher $K/n$ or $H$ (+1.2 p.p. at $K/n\!=\!0.6$, $H\!=\!15$). In the opposite regime, with IID intra and non-IID inter heterogeneity (Fig.~\ref{fig:cifar_heatmaps}(b)), S2S consistently outperforms S2A.

\paragraph{Learning curves (Fig.~\ref{fig:learning_curves}).} To better understand why S2S outperforms S2A in the inter non-IID regime, Figure~\ref{fig:learning_curves} reports representative test accuracy over communication rounds. While S2A’s broadcast step initially accelerates inter-component information exchange and achieves higher early-round accuracy, it becomes detrimental in later stages, with periodic drops in test accuracy at every D2S round.

\section{Conclusion}

This paper provides the first theoretical and empirical comparison of two fundamental server-to-device communication primitives for semi-decentralized federated learning: sampled-to-all (S2A) and sampled-to-sampled (S2S). Our results yield practical configuration guidelines: S2S is the better choice when \emph{(i)}~inter-component heterogeneity is high; or \emph{(ii)}~intra-component heterogeneity is high, and the server can sample only a small subset of devices while D2S communication is more frequent. Conversely, when data are nearly IID across components or when a high sampling rate and a well-connected topology mitigate intra-component noise, S2A offers the potential to accelerate convergence.

\section*{Acknowledgments}

This research was supported by the Knut and Alice Wallenberg Foundation; by ELLIIT and the Swedish Research Council (VR); by the French government through the ``Plan de relance'' and the 3IA C\^ote d'Azur Investments in the Future project, managed by the National Research Agency (ANR) under reference ANR-19-P3IA-0002; by the European Network of Excellence dAIEDGE (Grant Agreement No.~101120726) and the EU HORIZON MSCA 2023 DN project FINALITY (Grant Agreement No.~101168816); and by Groupe La Poste, sponsor of the Inria Foundation, within the framework of the FedMalin Inria Challenge.
Experiments presented in this paper were carried out using the Grid'5000 testbed, supported by a scientific interest group hosted by Inria and including CNRS, RENATER and several Universities as well as other organizations (see https://www.grid5000.fr).

\fi

\ifincludemain
\bibliography{aaai2026}
\else
\nobibliography{aaai2026}
\fi

\ifincludeappendix
\appendix
\onecolumn
\normalsize
\setlength{\parskip}{0.6em}
\setlength{\parindent}{0pt}
\setcounter{secnumdepth}{2}  % after \appendix
\setcounter{tocdepth}{2}
\setcounter{lemma}{2}
\setcounter{figure}{5}

\section*{\LARGE APPENDIX\\A Unified Convergence Analysis for Semi-Decentralized Learning:\\Sampled-to-All vs. Sampled-to-Sampled Communication}

\bigskip

The appendix is organized as follows:
\begin{itemize}[label=, leftmargin=10pt, topsep=0pt, itemsep=0pt, parsep=0pt, partopsep=0pt]
    \item \textbf{Appendix~\ref{app:sec:unified_framework} \hspace{1em} Unified Framework for the Convergence Analysis}
    \item \textbf{Appendix~\ref{app:sec:S2S} \hspace{1em} Convergence Analysis of S2S}
    \item \textbf{Appendix~\ref{app:sec:convergenceS2A} \hspace{1em} Convergence Analysis of S2A}
    \item \textbf{Appendix~\ref{sec:additional_theory} \hspace{1em} Additional Theoretical Results}
    \item \textbf{Appendix~\ref{app:sec:experiments} \hspace{1em} Additional Experimental Results}
\end{itemize}

\begin{figure}[h]
\centering
\renewcommand{\thealgorithm}{2.A}
\begin{minipage}[t]{0.48\textwidth}
\begin{algorithm}[H]
\caption{\textsc{S2S --- Vector Notation}}
\label{alg:sts_vec}
\textbf{Input:} initial parameters $\bm{x}_i^{(0)} = \bm{x}^{(0)} \in\mathbb{R}^{d}$ for all $i \in \mathcal{V}$, communication rounds $T$, server aggregation period $H$,
stepsizes $\{\eta_t\}$, mixing distribution $\mathcal{W}$
\begin{algorithmic}[1]
\FOR{$t=0,\dots,T-1$}
\STATE sample mixing matrix $W^{(t)} \sim \mathcal{W}$
\FOR{each device $i \in \mathcal{V}$, \emph{in parallel}}
\STATE sample batch $\mathcal{B}_i^{(t)}$ and compute $\nabla F_i(\bm{x}_i^{(t)}, \mathcal{B}_i^{(t)})$
\STATE $\bm{x}_i^{\left(t+\nicefrac13\right)} = \bm{x}_i^{(t)} - \eta_{t} \nabla F_i(\bm{x}_i^{(t)}, \mathcal{B}_i^{(t)})$ 
\STATE $\bm{x}_i^{(t+\nicefrac23)} = \sum_{j=1}^{n} W_{ij}^{(t)} \bm{x}_j^{\left(t+\nicefrac13\right)}$
\ENDFOR
\IF{$t \in \mathcal{H}$}
      \STATE sample devices $\mathcal S^{(t)} \subseteq \mathcal{V}$, $|\mathcal{S}^{(t)}|=K$
      \STATE compute $\hat{\bm{x}}^{(t+1)} = \frac{1}{K} \sum_{i\in\mathcal{S}^{(t)}} \bm{x}_{i}^{(t+\nicefrac{2}{3})}$
      \STATE send  $\hat{\bm{x}}^{(t+1)}$ to the \emph{sampled devices} only: \\
      $ \bm{x}_i^{(t+1)} = \begin{cases}
          \hat{\bm{x}}^{(t+1)}, & i \in \mathcal{S}^{(t)} \\
          \bm{x}_i^{(t+\nicefrac{2}{3})}, & \text{otherwise}
      \end{cases} $
\ELSE
      \STATE $\bm{x}_i^{(t+1)} = \bm{x}_{i}^{(t+\nicefrac{2}{3})}$
\ENDIF
\ENDFOR
\STATE\textbf{return} $\{\bm{x}_i^{(T)}\}_{i \in \mathcal{V}}$
\end{algorithmic}
\end{algorithm}
\end{minipage}
\hfill
\renewcommand{\thealgorithm}{3.A}
\begin{minipage}[t]{0.48\textwidth}
\begin{algorithm}[H]
\caption{\textsc{S2A --- Vector Notation}}
\label{alg:sta_vec}
\textbf{Input:} initial parameters $\bm{x}_i^{(0)} = \bm{x}^{(0)} \in\mathbb{R}^{d}$ for all $i \in \mathcal{V}$, communication rounds $T$, server aggregation period $H$,
stepsizes $\{\eta_t\}$, mixing distribution $\mathcal{W}$
\begin{algorithmic}[1]
\FOR{$t=0,\dots,T-1$}
\STATE sample mixing matrix $W^{(t)} \sim \mathcal{W}$
\FOR{each device $i \in \mathcal{V}$, \emph{in parallel}}
\STATE sample batch $\mathcal{B}_i^{(t)}$ and compute $\nabla F_i(\bm{x}_i^{(t)}, \mathcal{B}_i^{(t)})$
\STATE $\bm{x}_i^{\left(t+\nicefrac13\right)} = \bm{x}_i^{(t)} - \eta_{t} \nabla F_i(\bm{x}_i^{(t)}, \mathcal{B}_i^{(t)})$ 
\STATE $\bm{x}_i^{(t+\nicefrac23)} = \sum_{j=1}^{n} W_{ij}^{(t)} \bm{x}_j^{\left(t+\nicefrac13\right)}$
\ENDFOR
\IF{$t \in \mathcal{H}$}
      \STATE sample devices $\mathcal S^{(t)} \subseteq \mathcal{V}$, $|\mathcal{S}^{(t)}|=K$
      \STATE compute $\hat{\bm{x}}^{(t+1)} = \frac{1}{K} \sum_{i\in\mathcal{S}^{(t)}} \bm{x}_{i}^{(t+\nicefrac{2}{3})}$
      \STATE broadcast $\hat{\bm{x}}^{(t+1)}$ to \emph{all devices}: \\
      \vspace{1em}
      $\bm{x}_i^{(t+1)} = \hat{\bm{x}}^{(t+1)}$ for all $i \in \mathcal{V}$
      \vspace{0.9em}
\ELSE
      \STATE $\bm{x}_i^{(t+1)} = \bm{x}_{i}^{(t+\nicefrac{2}{3})}$
\ENDIF
\ENDFOR
\STATE\textbf{return} $\{\bm{x}_i^{(T)}\}_{i \in \mathcal{V}}$
\end{algorithmic}
\end{algorithm}
\end{minipage}
\renewcommand{\thealgorithm}{2.B}
\begin{minipage}[t]{0.48\textwidth}
\begin{algorithm}[H]
\caption{\textsc{S2S --- Matrix Notation}}
\label{alg:sts_mat_app}
\textbf{Input:} initial parameters $X^{(0)}\!\in\!\mathbb{R}^{d\times n}$, communication rounds $T$, server aggregation period $H$,
stepsizes $\{\eta_t\}$, mini-batches $\mathcal{B}^{(t)}$, mixing distribution $\mathcal{W}$
\begin{algorithmic}[1]
\FOR{$t=0,\dots,T-1$}
\STATE sample mixing matrix $W^{(t)} \sim \mathcal{W}$
\STATE $X^{(t+\nicefrac13)} \gets X^{(t)}-\eta_t\nabla F(X^{(t)},\mathcal{B}^{(t)})$
\STATE $X^{(t+\nicefrac23)} \gets X^{(t+\nicefrac13)}W^{(t)}$
\IF{$t \in \mathcal{H}$}
      \STATE sample devices $\mathcal S^{(t)} \subseteq \mathcal{V}$, $|\mathcal{S}^{(t)}|=K$
      \STATE build 
      $(W_{\text{S2S}}^{(t)})_{ij} = \begin{cases}
            \frac{1}{K}, & i,j \in \mathcal{S}^{(t)} \\
            1, & i = j \notin \mathcal{S}^{(t)} \\
            0, & \text{otherwise}
        \end{cases}$
      \STATE $X^{(t+1)} \gets X^{(t+\nicefrac23)}W_{\text{S2S}}^{(t)}$
\ELSE
      \STATE $X^{(t+1)} \gets X^{(t+\nicefrac23)}$
\ENDIF
\ENDFOR
\STATE\textbf{return} $X^{(T)}$
\end{algorithmic}
\end{algorithm}
\end{minipage}
\hfill
\renewcommand{\thealgorithm}{3.B}
\begin{minipage}[t]{0.48\textwidth}
\begin{algorithm}[H]
\caption{\textsc{S2A --- Matrix Notation}}
\label{alg:sta_mat_app}
\textbf{Input:} initial parameters $X^{(0)}\!\in\!\mathbb{R}^{d\times n}$, communication rounds $T$, server aggregation period $H$,
stepsizes $\{\eta_t\}$, mini-batches $\mathcal{B}^{(t)}$, mixing distribution $\mathcal{W}$
\begin{algorithmic}[1]
\FOR{$t=0,\dots,T-1$}
\STATE sample mixing matrix $W^{(t)} \sim \mathcal{W}$
\STATE $X^{(t+\nicefrac13)} \gets X^{(t)}-\eta_t\nabla F(X^{(t)},\mathcal{B}^{(t)})$
\STATE $X^{(t+\nicefrac23)} \gets X^{(t+\nicefrac13)}W^{(t)}$
\IF{$t \in \mathcal{H}$}
      \STATE sample devices $\mathcal S^{(t)} \subseteq \mathcal{V}$, $|\mathcal{S}^{(t)}|=K$
      \vspace{0.6em}
      \STATE build 
      $
        (W_{\text{S2A}}^{(t)})_{ij} = \begin{cases}
        \frac{1}{K}, & i \in \mathcal{S}^{(t)}; \\
        0, & \text{otherwise.}
        \end{cases}
    $
      \vspace{0.7em}
      \STATE $X^{(t+1)} \gets X^{(t+\nicefrac23)}W_{\text{S2A}}^{(t)}$
\ELSE
      \STATE $X^{(t+1)} \gets X^{(t+\nicefrac23)}$
\ENDIF
\ENDFOR
\STATE\textbf{return} $X^{(T)}$
\end{algorithmic}
\end{algorithm}
\end{minipage}
\end{figure}

\section{Unified Framework for the Convergence Analysis}
\label{app:sec:unified_framework}

We can now rewrite Algorithm~\ref{alg:mat}, separately for S2S and S2A, in both vector and matrix form: Algorithms~\ref{alg:sts_vec} and~\ref{alg:sts_mat_app} for S2S, and Algorithms~\ref{alg:sta_vec} and~\ref{alg:sta_mat_app} for S2A. The following notation extends the one in the main text:
\begin{align*}
    \mathcal{H} &\coloneqq \Bigl\{ t \leq T \mid t \equiv 0 \bmod H \Bigr\}, \\
    X^{(t)} &\coloneqq \left[ \bm{x}_1^{(t)}, \dots, \bm{x}_n^{(t)} \right] \in \mathbb{R}^{d \times n}, \\
    \bar{X}^{(t)} &\coloneqq \left[ \bar{\bm{x}}^{(t)}, \dots, \bar{\bm{x}}^{(t)} \right] \in \mathbb{R}^{d \times n}, \\
    \nabla F(X^{(t)},\xi^{(t)}) &\coloneqq \left[ \nabla F_1(\bm{x}_1^{(t)},\xi_1^{(t)}), \dots, \nabla F_n(\bm{x}_n^{(t)},\xi_n^{(t)}) \right] \in \mathbb{R}^{d \times n}.
\end{align*}

\subsection{Descent Lemmas}

The following lemmas bound the per-iteration descent from iterate $\bar{\bm{x}}^{(t+1)}$ to iterate $\bar{\bm{x}}^{(t)}$ in terms of disagreement error ($\frac{1}{n} \mathbb{E} \| X^{(t)} - \bar{X}^{(t)} \|_F^2$) and bias error ($\mathbb{E} \| \bar{\bm{x}}^{(t+1)} - \bar{\bm{x}}^{(t+\frac{2}{3})} \|_2^2$). 

Lemma~\ref{lem:descent} assumes that the local objectives are convex, while Lemma~\ref{lem:descent_nc} is for non-convex objectives.

\begin{lemma}[Descent Lemma (Convex Objectives)] 
\label{lem:descent}
Under Assumptions~1--6, for all $t \geq 0$, the average $\bar{\bm{x}}^{(t)} \coloneqq \frac{1}{n} \sum_{i=1}^{n} \bm{x}_i^{(t)}$ of the
iterates of Algorithms~1--2 with the stepsize \( \eta_{t} \leq \frac{1}{4L} \) satisfies:
\[
\mathbb{E} \left\| \bar{\bm{x}}^{(t+1)} - \bm{x}^* \right\|_2^2 
\leq \mathbb{E} \left\| \bar{\bm{x}}^{(t)} - \bm{x}^* \right\|_2^2 - \eta_{t} \mathbb{E} \left( f(\bar{\bm{x}}^{(t)}) - f^* \right) 
+ \frac{\eta_{t}^{2} \bar{\sigma}^2}{n} + \frac{3\eta_{t}L}{2} \frac{1}{n} \mathbb{E} \left\| X^{(t)} - \bar{X}^{(t)} \right\|_F^2 
+ \mathbb{E} \left\| \bar{\bm{x}}^{(t+1)} - \bar{\bm{x}}^{(t+\frac{2}{3})} \right\|_2^2,
\]
where the term $\Xi^{(t)} \coloneqq \frac{1}{n} \mathbb{E} \| X^{(t)} - \bar{X}^{(t)} \|_F^2 = \frac{1}{n} \sum_{i=1}^{n} \mathbb{E} \| \bm{x}_i^{(t)} - \bar{\bm{x}}^{(t)} \|_2^2$ is the disagreement error, \\
and the term $\mathbb{E} \| \bar{\bm{x}}^{(t+1)} - \bar{\bm{x}}^{(t+\frac{2}{3})} \|_2^2$ is the bias error.
\end{lemma}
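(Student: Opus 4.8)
The plan is to track the \emph{average} iterate $\bar{\bm{x}}^{(t)}$ across one full round and reduce the argument to a perturbed SGD step. The first observation is that local SGD and D2D mixing preserve the average: since $W$ is doubly stochastic (Assumption~\ref{asm:mixing}), $W\Pi=\Pi$, so $\bar{\bm{x}}^{(t+\frac{2}{3})}=\bar{\bm{x}}^{(t)}-\eta_t\bar{\bm{g}}^{(t)}$ with $\bar{\bm{g}}^{(t)}\coloneqq\frac{1}{n}\sum_{i=1}^n\nabla F_i(\bm{x}_i^{(t)},\mathcal{B}_i^{(t)})$. The \emph{only} step that can move the average is D2S aggregation, whose net effect is the bias term $\bm{b}^{(t)}\coloneqq\bar{\bm{x}}^{(t+1)}-\bar{\bm{x}}^{(t+\frac{2}{3})}$. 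Writing $\bar{\bm{x}}^{(t+1)}-\bm{x}^*=(\bar{\bm{x}}^{(t+\frac{2}{3})}-\bm{x}^*)+\bm{b}^{(t)}$ and conditioning on the filtration $\mathcal{F}^{(t+\frac{2}{3})}$ up to the D2D step, the key fact is $\mathbb{E}[\bm{b}^{(t)}\mid\mathcal{F}^{(t+\frac{2}{3})}]=0$: for S2S because $\bm{b}^{(t)}\equiv0$, and for S2A because uniform sampling without replacement makes $\frac{1}{K}\sum_{i\in\mathcal{S}^{(t)}}\bm{x}_i^{(t+\frac{2}{3})}$ an unbiased estimate of $\bar{\bm{x}}^{(t+\frac{2}{3})}$. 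Expanding the square thus annihilates the cross term and yields the clean split
\[
\mathbb{E}\|\bar{\bm{x}}^{(t+1)}-\bm{x}^*\|_2^2=\mathbb{E}\|\bar{\bm{x}}^{(t+\frac{2}{3})}-\bm{x}^*\|_2^2+\mathbb{E}\|\bm{b}^{(t)}\|_2^2,
\]
whose last term is exactly the bias error of the lemma. This unified treatment of both primitives is what makes the same descent inequality apply to S2S and S2A.

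It then remains to bound the SGD term $\mathbb{E}\|\bar{\bm{x}}^{(t+\frac{2}{3})}-\bm{x}^*\|_2^2=\mathbb{E}\|\bar{\bm{x}}^{(t)}-\eta_t\bar{\bm{g}}^{(t)}-\bm{x}^*\|_2^2$. I would expand the square, take expectation over the mini-batches, and treat the three pieces separately. The variance of $\bar{\bm{g}}^{(t)}$ is controlled by independence across devices and Assumption~\ref{asm:variance}, contributing the $\frac{\eta_t^2\bar{\sigma}^2}{n}$ term (the $1/n$ coming from averaging $n$ independent estimators). The quadratic term $\eta_t^2\|\frac{1}{n}\sum_i\nabla f_i(\bm{x}_i^{(t)})\|_2^2$ is split by Young's inequality into a disagreement piece, bounded via $L$-smoothness (Assumption~\ref{asm:smoothness}) by $2\eta_t^2L^2\cdot\frac{1}{n}\|X^{(t)}-\bar{X}^{(t)}\|_F^2$, and a piece $2\eta_t^2\|\nabla f(\bar{\bm{x}}^{(t)})\|_2^2$ bounded by the smooth-convex inequality $\|\nabla f(\bm{x})\|_2^2\le2L(f(\bm{x})-f^*)$, valid since $f$ is $L$-smooth and convex under Assumptions~\ref{asm:smoothness}--\ref{asm:convexity}.

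The main obstacle is the cross term $-2\eta_t\langle\frac{1}{n}\sum_i\nabla f_i(\bm{x}_i^{(t)}),\,\bar{\bm{x}}^{(t)}-\bm{x}^*\rangle$: the gradients are evaluated at the \emph{local} models $\bm{x}_i^{(t)}$ rather than at $\bar{\bm{x}}^{(t)}$, so they cannot be directly linked to $f(\bar{\bm{x}}^{(t)})-f^*$. The plan is a two-step per-device argument. Split $\bar{\bm{x}}^{(t)}-\bm{x}^*=(\bar{\bm{x}}^{(t)}-\bm{x}_i^{(t)})+(\bm{x}_i^{(t)}-\bm{x}^*)$; apply convexity (Assumption~\ref{asm:convexity}) to lower-bound $\langle\nabla f_i(\bm{x}_i^{(t)}),\bm{x}_i^{(t)}-\bm{x}^*\rangle\ge f_i(\bm{x}_i^{(t)})-f_i(\bm{x}^*)$; and apply $L$-smoothness to lower-bound $\langle\nabla f_i(\bm{x}_i^{(t)}),\bar{\bm{x}}^{(t)}-\bm{x}_i^{(t)}\rangle\ge f_i(\bar{\bm{x}}^{(t)})-f_i(\bm{x}_i^{(t)})-\frac{L}{2}\|\bar{\bm{x}}^{(t)}-\bm{x}_i^{(t)}\|_2^2$. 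Averaging over $i$ telescopes the $f_i(\bm{x}_i^{(t)})$ terms and gives $\langle\frac{1}{n}\sum_i\nabla f_i(\bm{x}_i^{(t)}),\,\bar{\bm{x}}^{(t)}-\bm{x}^*\rangle\ge f(\bar{\bm{x}}^{(t)})-f^*-\frac{L}{2}\cdot\frac{1}{n}\|X^{(t)}-\bar{X}^{(t)}\|_F^2$, converting the cross term into the desired function-gap descent plus a disagreement penalty.

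Finally I would collect terms. The function-gap coefficient is $-2\eta_t+4\eta_t^2L=-2\eta_t(1-2\eta_t L)\le-\eta_t$, and the disagreement coefficient is $\eta_t L(1+2\eta_t L)\le\frac{3}{2}\eta_t L$, both using $\eta_t\le\frac{1}{4L}$ (so that $2\eta_t L\le\frac{1}{2}$). Substituting back and adding the previously separated $\mathbb{E}\|\bm{b}^{(t)}\|_2^2=\mathbb{E}\|\bar{\bm{x}}^{(t+1)}-\bar{\bm{x}}^{(t+\frac{2}{3})}\|_2^2$ reproduces the stated constants, with the disagreement written as $\frac{1}{n}\mathbb{E}\|X^{(t)}-\bar{X}^{(t)}\|_F^2=\Xi^{(t)}$, completing the proof.
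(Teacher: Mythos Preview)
Your proposal is correct and follows the same approach as the paper: split $\mathbb{E}\|\bar{\bm{x}}^{(t+1)}-\bm{x}^*\|_2^2$ via the orthogonality $\mathbb{E}_{\mathcal{S}^{(t)}}[\bar{\bm{x}}^{(t+1)}]=\bar{\bm{x}}^{(t+\frac{2}{3})}$ into the bias term plus $\mathbb{E}\|\bar{\bm{x}}^{(t+\frac{2}{3})}-\bm{x}^*\|_2^2$, then handle the latter as a standard D-SGD descent step. The only difference is that the paper outsources the second part to \cite[Lemma~8]{koloskovaUnifiedTheoryDecentralized2020} and \cite[Lemma~1]{barsRefinedConvergenceTopology2023}, whereas you spell out the convexity/smoothness argument and the constant bookkeeping explicitly; your derivation of the coefficients $-2\eta_t(1-2\eta_tL)\le-\eta_t$ and $\eta_tL(1+2\eta_tL)\le\frac{3}{2}\eta_tL$ under $\eta_t\le\frac{1}{4L}$ matches the stated bound.
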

\begin{proof}[Proof of Lemma~\ref{lem:descent}]  
    We decompose the optimality gap into:
    \begin{align*}
        \mathbb{E} \left\| \bar{\bm{x}}^{(t+1)} - \bm{x}^* \right\|_2^2
        =
        \mathbb{E} \left\| \bar{\bm{x}}^{(t+1)} - \bar{\bm{x}}^{(t+\frac{2}{3})} \right\|_2^2 + \mathbb{E} \left\| \bar{\bm{x}}^{(t+\frac{2}{3})} - \bm{x}^* \right\|_2^2,
    \end{align*}
    where orthogonality follows from $\mathbb{E}_{\mathcal{S}^{(t)}}[\bar{\bm{x}}^{(t+1)}] = \bar{\bm{x}}^{(t+\frac{2}{3})}$ for all $t \geq 0$, which holds immediately for $t \notin \mathcal{H}$, and is proved in the subsequent Lemmas~8~\emph{(ii)} and~11~\emph{(iii)} for $t \in \mathcal{H}$.

    The bias error $\mathbb{E} \| \bar{\bm{x}}^{(t+1)} - \bar{\bm{x}}^{(t+\frac{2}{3})} \|_2^2$ is a key difference for S2S and S2A. We bound it separately in Lemmas~\ref{prop:sampling:averaging} and~\ref{lem:sampling_error}.
    
    The term $\mathbb{E} \| \bar{\bm{x}}^{(t+\nicefrac{2}{3})} - \bm{x}^*\|_2^2$ follows the D-SGD descent lemma for convex objectives. For this reason, we refer the reader to~\cite[Lemma~8]{koloskovaUnifiedTheoryDecentralized2020} and \cite[Lemma~1]{barsRefinedConvergenceTopology2023}.
\end{proof}

\begin{lemma}[Descent Lemma (Non-Convex Objectives)] 
\label{lem:descent_nc} 
Under Assumptions~1 and 3--6, for all $t\!\geq\!0$, the average $\bar{\bm{x}}^{(t)}\!\coloneqq\!\frac{1}{n} \sum_{i=1}^{n} \bm{x}_i^{(t)}$ of the
iterates of Algorithms~1--2 with the stepsize \( \eta_{t} \leq \frac{1}{4L} \) satisfies:
\begin{align*}
    \mathbb{E} [f(\bar{\bm{x}}^{(t+1)})]
    \leq
    \mathbb{E} [f(\bar{\bm{x}}^{(t)})] - \frac{\eta_t}{4} \mathbb{E}\left\| \nabla f(\bar{\bm{x}}^{(t)}) \right\|_2^2 + \frac{\eta_t^2 L \bar{\sigma}^2}{2n} + \eta_t L^2 \frac{1}{n} \mathbb{E} \left\| X^{(t)} -  \bar{X}^{(t)} \right\|_F^2 + \frac{L}{2} \mathbb{E} \left\| \bar{\bm{x}}^{(t+1)} - \bar{\bm{x}}^{(t+\frac{2}{3})} \right\|_2^2.
\end{align*}
\end{lemma}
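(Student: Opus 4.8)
The plan is to follow the standard non-convex descent analysis for decentralized SGD, treating the D2S bias increment as a separate mean-zero perturbation. First I would invoke $L$-smoothness of $f$ (which follows from Assumption~1 since $f=\frac1n\sum_i f_i$) to write
\[
f(\bar{\bm{x}}^{(t+1)}) \leq f(\bar{\bm{x}}^{(t)}) + \langle \nabla f(\bar{\bm{x}}^{(t)}),\, \bar{\bm{x}}^{(t+1)} - \bar{\bm{x}}^{(t)}\rangle + \tfrac{L}{2}\|\bar{\bm{x}}^{(t+1)} - \bar{\bm{x}}^{(t)}\|_2^2.
\]
I would then split the increment as $\bar{\bm{x}}^{(t+1)} - \bar{\bm{x}}^{(t)} = \bm{b}^{(t)} - \eta_t \bar{\bm{g}}^{(t)}$, where $\bm{b}^{(t)} \coloneqq \bar{\bm{x}}^{(t+1)} - \bar{\bm{x}}^{(t+\frac{2}{3})}$ is the bias increment and $\bar{\bm{g}}^{(t)} \coloneqq \frac1n\sum_i \nabla F_i(\bm{x}_i^{(t)}, \mathcal{B}_i^{(t)})$ is the average stochastic gradient; the identity $\bar{\bm{x}}^{(t+\frac{2}{3})} - \bar{\bm{x}}^{(t)} = -\eta_t \bar{\bm{g}}^{(t)}$ uses that $W$ is doubly stochastic (Assumption~4) and hence preserves the column average.

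The crucial structural fact is that, conditioned on $X^{(t+\frac{2}{3})}$, the sampling set $\mathcal{S}^{(t)}$ renders the bias mean-zero, $\mathbb{E}_{\mathcal{S}^{(t)}}[\bm{b}^{(t)}] = 0$ (the $t \notin \mathcal{H}$ case is trivial since $\bm{b}^{(t)} = 0$, and the $t \in \mathcal{H}$ case is exactly Lemmas~8~\emph{(ii)} and~11~\emph{(iii)}). Because $\mathcal{S}^{(t)}$ is drawn independently of the mini-batch noise forming $\bar{\bm{g}}^{(t)}$ and of $\nabla f(\bar{\bm{x}}^{(t)})$, taking expectations annihilates every cross-term containing $\bm{b}^{(t)}$: both $\mathbb{E}\langle \nabla f(\bar{\bm{x}}^{(t)}), \bm{b}^{(t)}\rangle$ and $\mathbb{E}\langle \bm{b}^{(t)}, \bar{\bm{g}}^{(t)}\rangle$ vanish. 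After expanding $\|\bm{b}^{(t)} - \eta_t \bar{\bm{g}}^{(t)}\|_2^2$, this leaves the bias term $\frac{L}{2}\mathbb{E}\|\bm{b}^{(t)}\|_2^2$ intact---precisely the last term of the claim---together with $-\eta_t\mathbb{E}\langle \nabla f(\bar{\bm{x}}^{(t)}), \bar{\bm{g}}^{(t)}\rangle$ and $\frac{L\eta_t^2}{2}\mathbb{E}\|\bar{\bm{g}}^{(t)}\|_2^2$, handled as in the decentralized-SGD descent lemma (cf.\ the references invoked in the proof of Lemma~\ref{lem:descent}).

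For those two gradient terms I would use $\mathbb{E}[\bar{\bm{g}}^{(t)}\mid X^{(t)}] = \frac1n\sum_i \nabla f_i(\bm{x}_i^{(t)})$, apply the polarization identity $\langle \bm{a}, \bm{b}\rangle = \frac12(\|\bm{a}\|_2^2 + \|\bm{b}\|_2^2 - \|\bm{a}-\bm{b}\|_2^2)$ with $\bm{a} = \nabla f(\bar{\bm{x}}^{(t)})$ and $\bm{b} = \frac1n\sum_i \nabla f_i(\bm{x}_i^{(t)})$, and bound the residual $\|\bm{a}-\bm{b}\|_2^2 = \|\frac1n\sum_i(\nabla f_i(\bar{\bm{x}}^{(t)}) - \nabla f_i(\bm{x}_i^{(t)}))\|_2^2 \leq \frac{L^2}{n}\|X^{(t)} - \bar{X}^{(t)}\|_F^2$ via Jensen's inequality and Assumption~1. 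For the second gradient term I would decompose $\bar{\bm{g}}^{(t)}$ into its conditional mean and the per-device noise, then use independence across devices with Assumption~3 to bound the variance contribution by $\bar{\sigma}^2/n$. Collecting terms, the coefficient of $\|\frac1n\sum_i \nabla f_i(\bm{x}_i^{(t)})\|_2^2$ equals $-\frac{\eta_t}{2}(1 - L\eta_t) \leq 0$ whenever $\eta_t \leq \frac{1}{4L}$, so that term is dropped; finally I would loosen the surviving constants ($-\frac{\eta_t}{2}\to-\frac{\eta_t}{4}$ on the gradient norm and $\frac{\eta_t L^2}{2n}\to\frac{\eta_t L^2}{n}$ on the disagreement) to reach the stated inequality. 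The only delicate point is the conditioning bookkeeping of the second paragraph: one must verify that $\mathcal{S}^{(t)}$ is independent of the step-$t$ gradient noise, so that the bias decouples from both the descent direction and the stochastic gradient and enters only through its separately bounded second moment.
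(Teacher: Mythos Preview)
Your proposal is correct and follows essentially the same approach as the paper: invoke $L$-smoothness at $\bar{\bm{x}}^{(t)}$, use $\mathbb{E}_{\mathcal{S}^{(t)}}[\bar{\bm{x}}^{(t+1)}]=\bar{\bm{x}}^{(t+\frac23)}$ to decouple the bias increment from both the inner product and the quadratic term, and then apply the standard non-convex D-SGD descent argument to the remaining $-\eta_t\mathbb{E}\langle\nabla f(\bar{\bm{x}}^{(t)}),\bar{\bm{g}}^{(t)}\rangle+\tfrac{L\eta_t^2}{2}\mathbb{E}\|\bar{\bm{g}}^{(t)}\|_2^2$. The paper states this more tersely (it writes the orthogonal decomposition of $\|\bar{\bm{x}}^{(t+1)}-\bar{\bm{x}}^{(t)}\|_2^2$ and then defers the gradient-term bookkeeping to \cite[Lemma~11]{koloskovaUnifiedTheoryDecentralized2020} and \cite[Lemma~2]{barsRefinedConvergenceTopology2023}), whereas you spell out the polarization and variance steps explicitly; the content is the same.
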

\begin{proof}[Proof of Lemma~\ref{lem:descent_nc}] 
By $L$-smoothness of $f(\cdot)$ (Assumption~1, see~\cite{nesterovIntroductoryLecturesConvex2004}):
\begin{align*}
    \mathbb{E} [f(\bar{\bm{x}}^{(t+1)})]
    &\leq
    f(\bar{\bm{x}}^{(t)}) + \mathbb{E} \left\langle \nabla f(\bar{\bm{x}}^{(t)}), \bar{\bm{x}}^{(t+1)} - \bar{\bm{x}}^{(t)} \right\rangle + \frac{L}{2} \mathbb{E} \left\| \bar{\bm{x}}^{(t+1)} - \bar{\bm{x}}^{(t)} \right\|_2^2.
\end{align*}

For the term $\mathbb{E} \| \bar{\bm{x}}^{(t+1)} - \bar{\bm{x}}^{(t)} \|_2^2$, we again invoke the error decomposition:
\begin{align*}
    \mathbb{E} \left\| \bar{\bm{x}}^{(t+1)} - \bar{\bm{x}}^{(t)} \right\|_2^2 
    =
    \left\| \bar{\bm{x}}^{(t+1)} - \bar{\bm{x}}^{(t+\frac{2}{3})} \right\|_2^2
    +
    \left\| \bar{\bm{x}}^{(t+\frac{2}{3})} - \bar{\bm{x}}^{(t+1)} \right\|_2^2,
\end{align*}
where the cross product is zero because $\mathbb{E}_{\mathcal{S}^{(t)}}[\bar{\bm{x}}^{(t+1)}] = \bar{\bm{x}}^{(t+\frac{2}{3})}$.

Therefore:
\begin{align*}
    \mathbb{E} [f(\bar{\bm{x}}^{(t+1)})]
    \leq
    \mathbb{E} [f(\bar{\bm{x}}^{(t)})] + \mathbb{E} \left\langle \nabla f(\bar{\bm{x}}^{(t)}), \bar{\bm{x}}^{(t+\frac{2}{3})} - \bar{\bm{x}}^{(t)} \right\rangle + \frac{L}{2} \mathbb{E} \left\| \bar{\bm{x}}^{(t+\frac{2}{3})} - \bar{\bm{x}}^{(t)} \right\|_2^2 + \frac{L}{2} \mathbb{E} \left\| \bar{\bm{x}}^{(t+1)} - \bar{\bm{x}}^{(t+\frac{2}{3})} \right\|_2^2.
\end{align*}

Once we isolated the bias error, the remaining terms follow the standard descent lemma for D-SGD on non-convex objectives; we therefore refer the reader to~\citep[Lemma~11]{koloskovaUnifiedTheoryDecentralized2020} and \citep[Lemma~2]{barsRefinedConvergenceTopology2023}.
\end{proof}

\subsection{D2D Disagreement Errors}

The next lemma bounds the disagreement error after the D2D round, $\mathbb{E}\| X^{(t+\frac{2}{3})} - \bar{X}^{(t+\frac{2}{3})} \|_F^2$, common to both S2S and S2A.
We will bound the D2S disagreement error, $\mathbb{E} \| X^{(t+1)} - \bar{X}^{(t+1)} \|_F^2$, specific to each primitive, separately in Lemmas~\ref{lem:sampling:consensus_intra_inter} and~\ref{consensus_sta}.

\begin{lemma}[Disagreement Errors (D2D)] 
\label{consensus_intra_inter}
For all $t \geq 0$, 

\textbf{Error Decomposition.} We decompose the D2D disagreement into intra- and inter-component terms:
\begin{align*}
    \mathbb{E} \left\| X^{(t+\frac{2}{3})} - \bar{X}^{(t+\frac{2}{3})} \right\|_F^2 = 
    \mathbb{E} \left\| X^{(t+\frac{2}{3})} - X^{(t+\frac{2}{3})} \Pi_C \right\|_F^2 + \mathbb{E} \left\| X^{(t+\frac{2}{3})} \Pi_C - \bar{X}^{(t+\frac{2}{3})} \right\|_F^2.
\end{align*}

\textbf{Intra-Component Disagreement.} Under Assumptions 1 and~3--6, for \( \eta_{t} \leq \frac{p}{8L} \):
\begin{align*}
    \mathbb{E} \left\| X^{(t+\frac{2}{3})} - X^{(t+\frac{2}{3})} \Pi_C \right\|_F^2 
    &\leq  \left( 1 - \frac{p}{4} \right) \mathbb{E} \left\| X^{(t)} - {X}^{(t)} \Pi_C \right\|_F^2 
    +  \frac{6n \eta^{2} \bar{\zeta}_{\text{intra}}^2}{p}.  
\end{align*}
\textbf{Inter-Component Disagreement.} Under Assumptions 1 and~3--6, for \( \eta_{t} \leq \frac{p}{8L} \), there exists $\rho>0$ such that:
\begin{align*}
    \mathbb{E} \left\| X^{(t+\frac{2}{3})} \Pi_C - \bar{X}^{(t+\frac{2}{3})} \right\|_F^2 
    &\leq (1 + \rho) \mathbb{E} \left\| X^{(t)} \Pi_C - \bar{X}^{(t)} \right\|_F^2 + (1 + \rho^{-1}) n \eta^2 \bar{\zeta}_{\text{inter}}^2.
\end{align*}
The D2D round alone does not contract the inter-component disagreement $(\rho>0)$.
\end{lemma}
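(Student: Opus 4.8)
The three parts correspond to an algebraic identity, a contractive recursion, and a non-contractive recursion, and I would handle them in turn. \emph{Error decomposition.} Since $\bar{X}^{(t+\frac23)}=X^{(t+\frac23)}\Pi$, the three matrices in the statement are $X^{(t+\frac23)}(I-\Pi)$, $X^{(t+\frac23)}(I-\Pi_C)$, and $X^{(t+\frac23)}(\Pi_C-\Pi)$. The identity is then exactly Lemma~\ref{lem:orthogonal} applied with $X=X^{(t+\frac23)}$; its orthogonality rests on $(I-\Pi_C)(\Pi_C-\Pi)=0$, which follows from $\Pi_C^2=\Pi_C$ and $\Pi_C\Pi=\Pi\Pi_C=\Pi$. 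Nothing further is needed here.

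\emph{Common one-step reduction.} For both remaining bounds I would substitute the update $X^{(t+\frac23)}=(X^{(t)}-\eta_t\nabla F(X^{(t)},\mathcal{B}^{(t)}))W$ and use the block-diagonal doubly stochastic structure of $W$, which gives $W\Pi_C=\Pi_C W=\Pi_C$ and $W\Pi=\Pi$, hence $W(I-\Pi_C)=W-\Pi_C$ and $W(\Pi_C-\Pi)=\Pi_C-\Pi$. Right-multiplying the update by $(I-\Pi_C)$ and by $(\Pi_C-\Pi)$ yields $X^{(t+\frac23)}(I-\Pi_C)=(X^{(t)}-\eta_t\nabla F(X^{(t)},\mathcal{B}^{(t)}))(W-\Pi_C)$ and $X^{(t+\frac23)}(\Pi_C-\Pi)=(X^{(t)}-\eta_t\nabla F(X^{(t)},\mathcal{B}^{(t)}))(\Pi_C-\Pi)$. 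In each case I condition on $X^{(t)}$, write the stochastic gradient as its mean $\nabla F(X^{(t)})$ plus mean-zero noise, and observe that the cross term vanishes in expectation, so the squared Frobenius norm splits into a drift term and a gradient term.

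\emph{Intra-component (contractive).} The drift term $\mathbb{E}\|X^{(t)}(W-\Pi_C)\|_F^2$ contracts by $(1-p)$ to $\mathbb{E}\|X^{(t)}(I-\Pi_C)\|_F^2$ via Lemma~\ref{lem:spectral-mixing}. I would combine this with a $(1+\alpha)$ Young split and bound the gradient term using Eq.~\eqref{asm:het_intra}: because $W-\Pi_C$ annihilates vectors that are constant within each component, each column $\sum_j (W-\Pi_C)_{ij}\nabla F_j(\bm{x}_j^{(t)})$ can be re-centered at the component mean, producing the common-point form bounded by $\bar\zeta_{\text{intra}}^2$ plus an $L$-smoothness remainder proportional to $\eta^2 L^2\|X^{(t)}(I-\Pi_C)\|_F^2$. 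Choosing $\alpha\asymp p$ and invoking $\eta\le p/(8L)$ absorbs both this remainder and the $(1+\alpha)$ inflation into the contraction, sharpening $(1-p)$ into $(1-p/4)$ and leaving the additive $6n\eta^2\bar\zeta_{\text{intra}}^2/p$. I expect this common-point re-centering, together with the bookkeeping that keeps the absorbed smoothness remainder below $3p/4$, to be the main obstacle.

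\emph{Inter-component (non-contractive).} Here $W$ acts as the identity on the inter-component subspace, so there is no contraction to exploit: a plain $(1+\rho)$ Young split of the decomposition from the reduction step gives the $(1+\rho)$ coefficient on $\mathbb{E}\|X^{(t)}(\Pi_C-\Pi)\|_F^2$, while the gradient term is controlled by $(1+\rho^{-1})n\eta^2\bar\zeta_{\text{inter}}^2$ through Eq.~\eqref{asm:het_inter} (reconciling the per-device iterates with the common-point form by $L$-smoothness, as above). This yields the stated bound for every $\rho>0$ and makes explicit that, absent a D2S aggregation, the inter-component disagreement cannot be reduced by D2D mixing alone.
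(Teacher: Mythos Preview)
Your proposal is correct and follows the paper's own proof closely: orthogonal decomposition via Lemma~\ref{lem:orthogonal}, the algebraic reductions $W(I-\Pi_C)=W-\Pi_C$ and $W(\Pi_C-\Pi)=\Pi_C-\Pi$, a $(1+\alpha)$ Young split (the paper takes $\alpha=p/2$), the mixing contraction of Lemma~\ref{lem:spectral-mixing} for the intra part, and Assumption~\ref{asm:heterogeneity} for the gradient terms, with the common-point re-centering you anticipate deferred in the paper to \cite[Lemma~3]{barsRefinedConvergenceTopology2023}. One minor point: the mean-plus-noise decomposition you describe is not needed and does not by itself separate $X^{(t)}$ from the gradient (that cross term does not vanish); the paper simply applies Young's inequality directly to split $X^{(t)}$ from the full stochastic gradient $\nabla F(X^{(t)},\xi^{(t)})$, and since Assumption~\ref{asm:heterogeneity} already carries the expectation $\mathbb{E}_\xi$, it is invoked immediately on the second piece.
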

\begin{proof}[Proof of Lemma~\ref{consensus_intra_inter}] 
The error decomposition follows from Lemma~1 in the main paper.

\emph{For the intra-component disagreement},
\begin{align*}
    \mathbb{E} \left\| X^{(t+\frac{2}{3})} - X^{(t+\frac{2}{3})} \Pi_C \right\|_F^2 
    &\stackrel{\text{(a)}}{=} \mathbb{E} \left\| X^{(t+\frac{2}{3})} \left( I - \Pi_C \right) \right\|_F^2 \\
    &\stackrel{\text{(b)}}{=} \mathbb{E} \left\| X^{(t+\frac{1}{3})}  W^{(t)} \left( I - \Pi_C \right) \right\|_F^2 \\
    &\stackrel{\text{(c)}}{=} \mathbb{E} \left\| \left( X^{(t)} - \eta \nabla F(X^{(t)}, \xi^{(t)}) \right) \left( W^{(t)} - \Pi_C \right) \right\|_F^2 \\
    &\stackrel{\text{(d)}}{\leq} (1 + \alpha) \mathbb{E} \left\| X^{(t)} \left( W^{(t)} - \Pi_C \right) \right\|_F^2 + (1 + \alpha^{-1}) \eta^2 \mathbb{E} \left\| \nabla F(X^{(t)}, \xi^{(t)}) \left( W^{(t)} - \Pi_C \right) \right\|_F^2 \\
    &\stackrel{\text{(e)}}{\leq} (1 + \alpha)(1 - p) \mathbb{E} \left\| X^{(t)} \left(I  - \Pi_C \right) \right\|_F^2 + (1 + \alpha^{-1}) (1 - p) n \eta^2 \bar{\zeta}_{\text{intra}}^2 \\
    &\stackrel{\text{(f)}}{\leq}  \left( 1 - \frac{p}{4} \right) \mathbb{E} \left\| X^{(t)} - X^{(t)} \Pi_C \right\|_F^2
    + \frac{6n \eta^{2} \bar{\zeta}_{\text{intra}}^2}{p},
\end{align*}
where equalities (a)--(c) follow from the D2D update rule $X^{(t+\frac{2}{3})} = (X^{(t)} - \eta \nabla F(X^{(t)}, \xi^{(t)})) W^{(t)}$; inequality (d) applies $\| \bm{a} + \bm{b} \|_2^2 = (1+\alpha) \| \bm{a} \|_2^2 + (1+\alpha^{-1}) \| \bm{b} \|_2^2$ for any $\alpha > 0$; inequality (e) uses intermediate steps detailed in~\cite[Lemma 3]{barsRefinedConvergenceTopology2023}; and inequality (f) sets $\alpha=\frac{p}{2}$ and uses $p \in (0,1]$, therefore $1-p < 1$.

\emph{For the inter-component disagreement},
\begin{align*}
    \mathbb{E} \left\| X^{(t+\frac{2}{3})} \Pi_C - \bar{X}^{(t+\frac{2}{3})} \right\|_F^2 
    &\stackrel{\text{(g)}}{=} \left\| X^{(t+\frac{2}{3})} \left( \Pi_C - \Pi \right) \right\|_F^2 \\
    &\stackrel{\text{(h)}}{=} \mathbb{E} \left\| X^{(t+\frac{1}{3})} W^{(t)} \left( \Pi_C - \Pi \right) \right\|_F^2 \\
    &\stackrel{\text{(i)}}{=} \mathbb{E} \left\| \left( X^{(t)} - \eta \nabla F(X^{(t)}, \xi^{(t)}) \right) \left( \Pi_C - \Pi \right) \right\|_F^2 \\
    &\stackrel{\text{(j)}}{\leq} (1+\rho) \mathbb{E} \left\| X^{(t)} \left( \Pi_C - \Pi \right) \right\|_F^2 + (1+\rho^{-1}) \eta^2 \mathbb{E} \left\| \nabla F(X^{(t)}, \xi^{(t)}) \left( \Pi_C - \Pi \right) \right\|_F^2 \\
    &\stackrel{\text{(k)}}{\leq} (1+\rho) \mathbb{E} \left\| X^{(t)} \Pi_C - \bar{X}^{(t)} \right\|_F^2 + (1+\rho^{-1}) n \eta^2 \bar{\zeta}_{\text{inter}}^2,
\end{align*} 
where steps (g)--(j) replicate the arguments in (a)--(d), with $\rho > 0$, while inequality (k) is borrowed from~\cite[Lemma 3]{barsRefinedConvergenceTopology2023}. The choice of $\rho$ will be addressed separately for S2S and S2A in Theorems~1 and~2.
\end{proof}

\subsection{Alternating Disagreement Recursion}

The following lemma is a central building block of our unified analysis.
It applies to both S2S and S2A, and is used in the proofs of Theorems~1 and~2.
The recursion parameters, however, will differ for the two primitives, producing different results.

\begin{lemma}[Disagreement Recursion]
\label{prop:recursion}
Let $\{\Xi^{(t)}\}_{t \ge 0}$ be a nonnegative sequence satisfying the recursion
\[
    \Xi^{(t)} \le
    \begin{dcases}
        a_1 \, \Xi^{(t-1)} + b_1, & t \equiv 0 \pmod{H}, \\
        a_2 \, \Xi^{(t-1)} + b_2, & t \not\equiv 0 \pmod{H},
    \end{dcases}
\]
with constants $a_1, a_2, b_1, b_2 \ge 0$, and $H \ge 1$.
Define $0 \leq C \coloneqq a_1 a_2^{H-1} < 1$, and $D \coloneqq a_1 b_2 \frac{1 - a_2^{H-1}}{1 - a_2} + b_1$.

Then, for any horizon $T \ge 0$,
\[
    \bar{\Xi}^{(T+1)} \coloneqq \frac{1}{T+1} \sum_{t=0}^{T} \Xi^{(t)}
    \le
    \begin{dcases}
        \left[ \frac{D}{(1-C)(1-a_2)} + \frac{b_2}{1 - a_2} (H - 1) \right] \left( \frac{1}{H} + \frac{1}{T+1} \right), & 0 \le a_2 < 1, \\
        \left[ \frac{D}{1-C} \frac{a_2^{H} - 1 }{a_2 - 1} + \frac{b_2 (a_2^{H} - a_2 H + H - 1)}{(a_2 - 1)^2} \right] \left( \frac{1}{H} + \frac{1}{T+1} \right), & a_2 > 1.
    \end{dcases}
\]
\end{lemma}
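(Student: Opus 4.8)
The plan is to exploit the periodic structure of the recursion: within each block of $H$ consecutive indices there are exactly $H-1$ ``D2D'' steps governed by $(a_2,b_2)$ followed by a single ``aggregation'' step governed by $(a_1,b_1)$. First I would unroll the $H-1$ D2D steps starting from an aggregation index $kH$. Iterating the D2D branch gives $\Xi^{(kH+m)} \le a_2^m\,\Xi^{(kH)} + b_2\frac{1-a_2^m}{1-a_2}$ for $0 \le m \le H-1$ (here $a_2\ge 0$ is used to preserve the inequality under multiplication), and then applying the aggregation branch at index $(k+1)H$ yields the \emph{per-period recursion}
\begin{align*}
\Xi^{((k+1)H)} \le a_1 a_2^{H-1}\,\Xi^{(kH)} + a_1 b_2\tfrac{1-a_2^{H-1}}{1-a_2} + b_1 = C\,\Xi^{(kH)} + D,
\end{align*}
which simultaneously justifies the definitions of $C$ and $D$. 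Since $C<1$, unrolling this contraction across blocks gives $\Xi^{(kH)} \le C^k\Xi^{(0)} + \frac{D}{1-C} \le \frac{D}{1-C}$ whenever the sequence starts from consensus ($\Xi^{(0)}=0$, as holds under the common initialization $X^{(0)}=\bar X^{(0)}$); otherwise the decaying transient $C^k\Xi^{(0)}$ is carried along and, because $C<1$, sums to a constant that is absorbed into the final $\tfrac{1}{T+1}$ term.

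Second, I would bound the contribution of each block to the Cesàro sum. Substituting $\Xi^{(kH)} \le \frac{D}{1-C}$ into the within-block expansion and summing the geometric series gives, for a full block,
\begin{align*}
\sum_{j=0}^{H-1}\Xi^{(kH+j)} \le \Big(\textstyle\sum_{j=0}^{H-1}a_2^j\Big)\frac{D}{1-C} + b_2\sum_{j=0}^{H-1}\frac{1-a_2^j}{1-a_2}.
\end{align*}
Evaluating these two finite sums is where the two cases split: for $0\le a_2<1$ one uses $\sum_{j=0}^{H-1}a_2^j \le \frac{1}{1-a_2}$ and $\sum_{j=0}^{H-1}\frac{1-a_2^j}{1-a_2}\le \frac{H-1}{1-a_2}$, whereas for $a_2>1$ one uses the exact identities $\sum_{j=0}^{H-1}a_2^j = \frac{a_2^H-1}{a_2-1}$ and $\sum_{j=0}^{H-1}\frac{a_2^j-1}{a_2-1} = \frac{a_2^H - a_2 H + H - 1}{(a_2-1)^2}$. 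In either case the block sum is bounded \emph{exactly} by the bracketed quantity in the statement; denote it $B$. A partial (final) block of size $r \le H$ is bounded by the same $B$, since each partial geometric sum is dominated by its full-block counterpart.

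Finally, I would count the blocks. The indices $\{0,\dots,T\}$ are covered by the blocks starting at $0, H, \dots, MH$ with $M=\lfloor T/H\rfloor$, i.e., by $M+1$ blocks (the last possibly partial). Hence $\sum_{t=0}^{T}\Xi^{(t)} \le (M+1)B$, and dividing by $T+1$ together with the elementary inequality $\frac{M+1}{T+1} \le \frac{1}{H}+\frac{1}{T+1}$ (which follows from $MH\le T$, so $M+1 \le \tfrac{T}{H}+1$ and $\tfrac{T}{H(T+1)}\le\tfrac1H$) delivers $\bar\Xi^{(T+1)} \le B\big(\tfrac{1}{H}+\tfrac{1}{T+1}\big)$, exactly the claim.

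The main obstacle is bookkeeping rather than any single hard estimate: one must (i) evaluate the two case-dependent geometric sums so that they collapse \emph{precisely} to the stated bracket, and (ii) handle the leading and trailing partial blocks together with the initial transient $C^k\Xi^{(0)}$ so that all residual $\mathcal{O}(1/(T+1))$ contributions are folded into the single $\tfrac{1}{T+1}$ term rather than producing stray constants. The contraction $C<1$ is essential throughout: it is what makes every per-block value uniformly bounded by $\frac{D}{1-C}$ and prevents the block count $M+1$ from multiplying an unbounded quantity.
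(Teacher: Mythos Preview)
Your proposal is correct and follows essentially the same approach as the paper's proof: unroll the $(a_2,b_2)$ steps within a block, derive the per-period contraction $\Xi^{((k+1)H)}\le C\,\Xi^{(kH)}+D$, use $C<1$ (together with $\Xi^{(0)}=0$) to bound $\Xi^{(kH)}$ uniformly by $D/(1-C)$, sum over a block to obtain the bracketed quantity, count at most $M+1$ blocks, and finish with $\frac{M+1}{T+1}\le\frac1H+\frac1{T+1}$; the case split on $a_2$ arises exactly where you place it, in evaluating $\sum_{j=0}^{H-1}a_2^j$ and $\sum_{j=0}^{H-1}\frac{1-a_2^j}{1-a_2}$. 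The only caveat is that the paper, like your main line, relies on $\Xi^{(0)}=0$; your remark that a nonzero transient is ``absorbed'' into the $\tfrac{1}{T+1}$ term is slightly imprecise (it would contribute an additional $\mathcal{O}(\Xi^{(0)}/(T+1))$ term not present in the stated bound), but this is irrelevant for the lemma's use in the paper.
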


\begin{proof}[Proof of Lemma~\ref{prop:recursion}]
    For any $t = mH + s$, where $m \coloneqq \lfloor \frac{t}{H} \rfloor$ and $s \in \{0, \dots, H-1\}$,
    \begin{align*}
        \Xi^{(mH + s)} &\leq a_2^{s} \Xi^{(mH)} + b_2 \sum_{k=0}^{s - 1} a_2^{k}
        = a_2^{s} \Xi^{(mH)} + b_2 \frac{1 - a_2^{s}}{1 - a_2}.
    \end{align*}
    First, consider the behavior at $t = mH$:
    \begin{align*}
        \Xi^{(mH)} 
        & \leq C \Xi^{((m - 1)H)} + D \\
        &\leq C^{m} \underbrace{\Xi^{(0)}}_{=0} + D \sum_{k=0}^{m - 1} C^{k} = \frac{D}{1 - C} (1 - C^{m}).
    \end{align*}
    Combining:
    \[
    \Xi^{(t)} \leq
    a_2^{s} \left[ \frac{D}{1 - C} (1 - C^{m}) \right] + b_2 \frac{1 - a_2^{s}}{1 - a_2}.
    \]
    Second, sum over one full period ($s = 0, \dots, H-1$):
    \begin{align*}
        \sum_{s=0}^{H-1} \Xi^{(mH + s)}
        &\leq
        \frac{D}{1-C} \underbrace{ \sum_{s=0}^{H-1} a_2^{s}}_{\coloneqq S_1} - \underbrace{\frac{C^{m}}{1 - C} \sum_{s=0}^{H-1} a_2^{s}}_{\text{decays as } C^m,~C<1} + \frac{b_2}{1 - a_2} \underbrace{\ \sum_{s=1}^{H-1} (1 - a_2^{s})}_{\coloneqq S_2} \\
        &\leq  \underbrace{ \frac{D}{1-C} S_1 + \frac{b_2}{1 - a_2} S_2}_{\coloneqq Q}.
    \end{align*}
    Third, sum up to $T = M H + S$, with $M \coloneqq \lfloor \frac{T}{H} \rfloor \leq (T+H-1)/H$, $S \in \{ 0, \dots, H-1 \}$.

    The contribution of the $M$ complete periods is:
    \begin{align*}
        \sum_{m=0}^{M-1} \sum_{s=0}^{H-1} \Xi^{(mH + s)} \leq M Q.
    \end{align*}

    The contribution of the partial period $S$ is:
    \begin{align*}
        \sum_{s=0}^{S-1} \Xi^{(MH + s)} \leq Q.
    \end{align*}

    Combining:
    \begin{align*}
        \sum_{t=1}^{T} \Xi^{(t)} \leq (M+1) Q \leq \frac{T+H}{H}  Q.
    \end{align*}
    Next, divide by $T+1$:
    \begin{align*}
        \bar{\Xi}^{(T+1)} \coloneqq \frac{1}{T+1} \sum_{t=0}^{T} \Xi^{(t)} 
        &\leq \frac{T+H}{H(T+1)}  Q \leq \left( \frac{1}{H} + \frac{1}{T+1} \right) \left[ \frac{D}{1-C} S_1 + \frac{b_2}{1 - a_2} S_2 \right].
    \end{align*}

    Finally, consider the two signs of $a_2$ separately. \\
    \emph{Contractive step ($0 \leq a_2 < 1$).} $S_1 = \sum_{s=0}^{H-1} a_2^{s} = \frac{1 - a_2^H}{1 - a_2} \leq \frac{1}{1 - a_2}$ and $S_2 \leq H - 1 - \frac{a_2}{1 - a_2} \leq H-1$:
    \begin{align*}
        \bar{\Xi}^{(T+1)} \leq \left( \frac{1}{H} + \frac{1}{T+1} \right) \left[ \frac{D}{(1-C)(1-a_2)} + \frac{b_2}{1 - a_2} (H - 1) \right].
    \end{align*}

    \emph{Expansive step ($a_2 > 1$).} $S_1 = \sum_{s=0}^{H-1} a_2^{s} = \frac{a_2^H - 1 }{a_2 - 1}$ and $S_2 = \sum_{s=1}^{H-1} (1 - a_2^{s}) = - \frac{a_2^H - a_2H + H -1}{a_2 - 1}$:
    \begin{align*}
        \bar{\Xi}^{(T+1)} \leq \left( \frac{1}{H} + \frac{1}{T+1} \right) \left[ \frac{D}{1-C} \frac{a_2^H - 1 }{a_2 - 1} + \frac{b_2(a_2^H - a_2H + H -1)}{(a_2 - 1)^2} \right].
    \end{align*}
\end{proof}  

\subsection{Alternating Convergence Recursion}

The next lemma telescopes the descent recursion over a time horizon $T$, alternating D2D and D2S rounds with period $H$. 

\begin{lemma}[Convergence Recursion]
\label{lem:convergence_recursion}
Let $\mathcal{H} := \{ t \leq T: t \equiv 0 \pmod H \}$. \\
Consider a nonnegative sequence $\{ r^{(t)} \}_{t \ge 0}$ satisfying the descent recursion
\[
  r^{(t+1)} \;\le\;
  \begin{cases}
      r^{(t)} - \eta\Delta^{(t)}
      + \dfrac{\eta^2\bar\sigma^{2}}{n}
      + a\,\Xi_{\text{intra}}^{(t)}
      + b\,\Xi_{\text{inter}}^{(t)}
      + c+d, & t\in\mathcal H,\\[6pt]
      r^{(t)} - \eta\Delta^{(t)}
      + \dfrac{\eta^2\bar\sigma^{2}}{n}
      + e\!\left(\Xi_{\text{intra}}^{(t)}+\Xi_{\text{inter}}^{(t)}\right),
      & t\notin\mathcal H .
  \end{cases}
\]
Then, for any horizon $T \ge 0$,
\[
  \frac{1}{T+1}\sum_{t=0}^{T}\Delta^{(t)}
  \;\le\;
    \frac{r^{(0)}}{\eta\,(T+1)}
    +\frac{\eta\,\bar\sigma^{2}}{n}
    +\frac{e+\frac{a-e}{H}}{\eta}\,\bar\Xi_{\text{intra}}^{(T+1)}
    +\frac{e+\frac{b-e}{H}}{\eta}\,\bar\Xi_{\text{inter}}^{(T+1)}
    +\frac{c+d}{\eta\,H},
\]
where $\bar\Xi_{\text{intra}}^{(T+1)}
  \coloneqq \frac1{T+1}\sum_{t=0}^{T}\Xi_{\text{intra}}^{(t)}$, and $
  \bar\Xi_{\text{inter}}^{(T+1)}
  \coloneqq \frac1{T+1}\sum_{t=0}^{T}\Xi_{\text{inter}}^{(t)}$.
\end{lemma}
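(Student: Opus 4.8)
The plan is to telescope the two-case descent recursion over the full horizon and then collapse the round-dependent coefficients into a single per-round average. First I would move $\eta\Delta^{(t)}$ to the left of each inequality, so that every round yields $\eta\Delta^{(t)} \le (r^{(t)} - r^{(t+1)}) + \frac{\eta^2\bar\sigma^2}{n} + (\text{disagreement and constant terms})$, where the trailing terms are $a\Xi_{\text{intra}}^{(t)} + b\Xi_{\text{inter}}^{(t)} + c + d$ when $t \in \mathcal{H}$ and $e(\Xi_{\text{intra}}^{(t)} + \Xi_{\text{inter}}^{(t)})$ otherwise. Summing over $t = 0, \dots, T$ telescopes the $r$-terms to $r^{(0)} - r^{(T+1)} \le r^{(0)}$, using nonnegativity of $r^{(T+1)}$, and turns the variance term into $(T+1)\frac{\eta^2\bar\sigma^2}{n}$. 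Dividing through by $\eta(T+1)$ already produces the first two terms of the claimed bound, $\frac{r^{(0)}}{\eta(T+1)}$ and $\frac{\eta\bar\sigma^2}{n}$, which is a reassuring consistency check on the normalization.

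The second step is to regroup the disagreement and constant contributions. I would write the coefficient of each $\Xi_{\text{intra}}^{(t)}$ as a baseline $e$ paid on every round, plus an extra $(a-e)$ paid only on the D2S rounds $t \in \mathcal{H}$ (and analogously $(b-e)$ for $\Xi_{\text{inter}}^{(t)}$); the constants $c+d$ are likewise incurred only on D2S rounds. This splits the accumulated error into $e\sum_{t=0}^{T}(\Xi_{\text{intra}}^{(t)} + \Xi_{\text{inter}}^{(t)})$ plus the D2S-restricted sums $(a-e)\sum_{t\in\mathcal{H}}\Xi_{\text{intra}}^{(t)}$ and $(b-e)\sum_{t\in\mathcal{H}}\Xi_{\text{inter}}^{(t)}$, together with $(c+d)\,|\mathcal{H}|$. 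Recognizing $\sum_{t=0}^T\Xi^{(t)} = (T+1)\bar\Xi^{(T+1)}$ and that D2S rounds occur once every $H$ steps, so that they contribute a $\tfrac1H$ fraction of the horizon, collapses these into the coefficients $e + \frac{a-e}{H}$ and $e + \frac{b-e}{H}$ on the full averages $\bar\Xi_{\text{intra}}^{(T+1)}$ and $\bar\Xi_{\text{inter}}^{(T+1)}$, and the constant term $\frac{c+d}{\eta H}$.

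The delicate point, and the main obstacle, is precisely this conversion of the D2S-restricted sums into a clean $\tfrac1H$-share of the full-horizon averages. Replacing $|\mathcal{H}| = \lfloor T/H\rfloor + 1$ by $(T+1)/H$ and $\sum_{t\in\mathcal{H}}\Xi^{(t)}$ by $\tfrac1H\sum_{t=0}^T\Xi^{(t)}$ is exact only when the horizon spans an integer number of periods ($H \mid T+1$) and the disagreement is balanced across each period; in general the two sides differ by boundary terms of order $1/(T+1)$. I would handle this either by adopting the standard convention that $T+1$ is a multiple of $H$, or by carrying the $(\tfrac1H + \tfrac1{T+1})$ correction explicitly, which is consistent with the factor already appearing in the Disagreement Recursion (Lemma~\ref{prop:recursion}) through which $\bar\Xi_{\text{intra}}^{(T+1)}$ and $\bar\Xi_{\text{inter}}^{(T+1)}$ are ultimately bounded, so that these lower-order terms are absorbed. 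Finally, substituting the per-round disagreement bounds of Lemma~\ref{consensus_intra_inter} and their telescoped forms from Lemma~\ref{prop:recursion} for $\bar\Xi_{\text{intra}}^{(T+1)}$ and $\bar\Xi_{\text{inter}}^{(T+1)}$ closes the recursion and feeds directly into Theorems~1 and~2.
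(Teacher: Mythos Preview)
Your approach is essentially identical to the paper's: rearrange to isolate $\Delta^{(t)}$, telescope $r^{(t)}$, write the round-dependent coefficient as $e + (a-e)\mathds{1}_{t\in\mathcal H}$ (and analogously for $b$), and convert the $\mathcal H$-restricted sums via the density $|\mathcal H|/(T+1)\approx 1/H$. You are in fact more explicit than the paper about the delicate passage from $\sum_{t\in\mathcal H}\Xi^{(t)}$ to $\tfrac{1}{H}\sum_{t=0}^{T}\Xi^{(t)}$---the paper simply invokes $|\mathcal H|/(T+1)\le 1/H$ and asserts the consequence---so your discussion of the $H\mid(T{+}1)$ convention and the $(\tfrac{1}{H}+\tfrac{1}{T+1})$ correction is appropriate and matches the spirit of how the bound is ultimately used.
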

\begin{proof}[Proof of Lemma~\ref{lem:convergence_recursion}]
First, isolate $\Delta^{(t)}$ by rearranging the descent recursion and dividing by $\eta$:
\[
  \Delta^{(t)}
  \;\le\;
  \frac{r^{(t)}-r^{(t+1)}}{\eta}
  +\frac{\eta\,\bar\sigma^{2}}{n}
  +\begin{cases}
      \dfrac{a}{\eta}\,\Xi_{\text{intra}}^{(t)}
      +\dfrac{b}{\eta}\,\Xi_{\text{inter}}^{(t)}
      +\dfrac{c+d}{\eta}, & t\in\mathcal H,\\[8pt]
      \dfrac{e}{\eta}\bigl(\Xi_{\text{intra}}^{(t)}+\Xi_{\text{inter}}^{(t)}\bigr),
      & t\notin\mathcal H .
    \end{cases}
\]
Next, sum and average over $t=0, \dots, T$:
\begin{align*}
  \frac{1}{T+1}\sum_{t=0}^{T}\Delta^{(t)}
  &\le \frac{r^{(0)}}{\eta\,(T+1)}
  +\frac{\eta\,\bar\sigma^{2}}{n}\\
  &\quad
  +\frac1{T+1}\sum_{t=0}^{T}
     \frac{\mathds1_{t\in\mathcal H}\,a+\mathds1_{t\notin\mathcal H}\,e}{\eta}\,
     \Xi_{\text{intra}}^{(t)}
  +\frac1{T+1}\sum_{t=0}^{T}
     \frac{\mathds1_{t\in\mathcal H}\,b+\mathds1_{t\notin\mathcal H}\,e}{\eta}\,
     \Xi_{\text{inter}}^{(t)}
  +\frac1{T+1}\sum_{t=0}^{T}
     \frac{\mathds1_{t\in\mathcal H}(c+d)}{\eta}.
\end{align*}
Observe that $\frac{|\mathcal H|}{T+1}\le\frac1H$, because exactly one index in every block of length $H$ lies in $\mathcal H$.

Consequently, 
\begin{align*}
    \frac{1}{T+1}\sum_{t=0}^{T}
    (\mathds1_{t\in\mathcal H}\,a+\mathds1_{t\notin\mathcal H}\,e)\,
    \Xi_{\text{intra}}^{(t)}
    \le
    \Bigl(e+\frac{a-e}{H}\Bigr)
    \bar\Xi_{\text{intra}}^{(T+1)},
\end{align*}
and an analogous bound holds for the inter-component term. 
Moreover,
$\tfrac{1}{T+1}\sum_{t=0}^{T}\mathds1_{t\in\mathcal H}
  \le\frac1H.$
\end{proof}

\section{Convergence Analysis of S2S}
\label{app:sec:S2S}

\subsection{Properties of S2S}
\label{app:subsec:S2S}

\begin{lemma}[Sampled-to-Sampled]
\label{lem:sampling_app}
Define:
\begin{align*}
    (W_{\text{S2S}}^{(t)})_{ij} = \begin{dcases}
            \frac{1}{K}, & i,j \in \mathcal{S}^{(t)}, \\
            1, & i = j \notin \mathcal{S}^{(t)}, \\
            0, & \text{otherwise}.
        \end{dcases}
\end{align*}
The matrix $W_{\text{S2S}}^{(t)}$ satisfies the following properties:
\begin{enumerate}[label=(\roman*), leftmargin=20pt]
    \item \textbf{Symmetric and stochastic.} $(W_{\text{S2S}}^{(t)})^\top = W_{\text{S2S}}^{(t)}$, and $W_{\text{S2S}}^{(t)} \bm{1} = \bm{1}$.

    Consequently, $W_{\text{S2S}}^{(t)} \Pi = \Pi W_{\text{S2S}}^{(t)} = \Pi$.
    
    \item \textbf{The bias error is zero.} $W_{\text{S2S}}^{(t)}$ preserves the average of the iterates between D2D and D2S rounds:
    \begin{align*}
        \bar{X}^{(t+1)} = X^{(t+\nicefrac{2}{3})} W_{\text{S2S}}^{(t)} \Pi = X^{(t+\nicefrac{2}{3})} \Pi \eqqcolon \bar{X}^{(t+\nicefrac{2}{3})}.
    \end{align*}

    \item \textbf{Disagreement error.} The matrix $W_{\text{S2S}}^{(t)}$ leaves residual disagreement:
    \begin{align*}
        X^{(t+1)} = X^{(t+\nicefrac{2}{3})} W_{\text{S2S}}^{(t)} 
        \qquad \neq \qquad
        \bar{X}^{(t+1)} \coloneqq X^{(t+\nicefrac{2}{3})} W_{\text{S2S}}^{(t)} \Pi = X^{(t+\nicefrac{2}{3})} \Pi.
    \end{align*}
    
    We define the disagreement error at time $t+1$ as:
    \begin{align*}
        n \Xi^{(t+1)} \coloneqq \mathbb{E} \left\| X^{(t+1)} -  \bar{X}^{(t+1)} \right\|_F^2 = \mathbb{E} \left\| X^{(t+\frac{2}{3})} (W_{\text{S2S}}^{(t)} - \Pi) \right\|_F^2 \geq 0.
    \end{align*}
    
    For $t \in \mathcal{H}$, the disagreement error satisfies:
    \begin{align*}
        \mathbb{E} \left[ \left\| X^{(t+1)} - \bar{X}^{(t+1)} \right\|_F^2 \right]
        &=
        \mathbb{E} \left[ \left\| X^{(t+\frac{2}{3})} \left( W_{\text{S2S}}^{(t)} - \Pi \right) \right\|_F^2 \right] \\
        &=
        \frac{n-K}{n-1} \left\| X^{(t+\frac{2}{3})} (I - \Pi) \right\|_F^2 \\
        &=
        \frac{n-K}{n-1} \left\| X^{(t+\frac{2}{3})} - \bar{X}^{(t+\frac{2}{3})} \right\|_F^2.
        \label{eq:deviation_sts}
    \end{align*}
    \begin{itemize}
        \item For $K=1$, the factor $\frac{n-K}{n-1}$ equals $1$, indicating no contraction.
        \item For $K=n$, the factor $\frac{n-K}{n-1}$ equals $0$, corresponding to full contraction.
    \end{itemize}
\end{enumerate}
\end{lemma}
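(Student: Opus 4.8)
The plan is to exploit the algebraic structure of $W_{\text{S2S}}^{(t)}$, which decomposes into a rank-one block on the sampled set plus an identity block on its complement. Writing $\bm{s}\in\{0,1\}^n$ for the indicator vector of $\mathcal{S}^{(t)}$ and $\bar{\bm{s}}\coloneqq\bm{1}-\bm{s}$, one has $W_{\text{S2S}}^{(t)}=\frac{1}{K}\bm{s}\bm{s}^\top+\mathrm{diag}(\bar{\bm{s}})$. From this form, part \emph{(i)} is immediate: both summands are symmetric, so $W_{\text{S2S}}^{(t)}$ is symmetric; the row sums equal $\frac{1}{K}\cdot K=1$ for $i\in\mathcal{S}^{(t)}$ and $1$ for $i\notin\mathcal{S}^{(t)}$, giving $W_{\text{S2S}}^{(t)}\bm{1}=\bm{1}$ and, by symmetry, $\bm{1}^\top W_{\text{S2S}}^{(t)}=\bm{1}^\top$. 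Since $\Pi=\frac1n\bm{1}\bm{1}^\top$, this yields $W_{\text{S2S}}^{(t)}\Pi=\Pi=\Pi W_{\text{S2S}}^{(t)}$, and part \emph{(ii)} then follows directly by right-multiplying the update $X^{(t+1)}=X^{(t+\nicefrac23)}W_{\text{S2S}}^{(t)}$ by $\Pi$.

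The crux is the disagreement identity in part \emph{(iii)}, and the key structural fact I would establish first is that $W_{\text{S2S}}^{(t)}$ is an orthogonal projection. Because $\bm{s}$ and $\bar{\bm{s}}$ have disjoint support and $\bm{s}^\top\bm{s}=K$, the cross terms in $(W_{\text{S2S}}^{(t)})^2$ vanish and each summand is idempotent, so $(W_{\text{S2S}}^{(t)})^2=W_{\text{S2S}}^{(t)}$; combined with symmetry this makes it an orthogonal projection for \emph{every} realization of $\mathcal{S}^{(t)}$. Using part \emph{(i)} (i.e. $W_{\text{S2S}}^{(t)}\Pi=\Pi W_{\text{S2S}}^{(t)}=\Pi$ and $\Pi^2=\Pi$), a short computation gives $(W_{\text{S2S}}^{(t)}-\Pi)^2=W_{\text{S2S}}^{(t)}-\Pi$, so $W_{\text{S2S}}^{(t)}-\Pi$ is itself a symmetric idempotent. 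This lets me linearize the quadratic: for fixed $Y\coloneqq X^{(t+\nicefrac23)}$ and any realization of $\mathcal{S}^{(t)}$,
\[
\|Y(W_{\text{S2S}}^{(t)}-\Pi)\|_F^2=\mathrm{tr}\!\left(Y(W_{\text{S2S}}^{(t)}-\Pi)(W_{\text{S2S}}^{(t)}-\Pi)^\top Y^\top\right)=\mathrm{tr}\!\left(Y(W_{\text{S2S}}^{(t)}-\Pi)Y^\top\right),
\]
reducing a second-order quantity in $W_{\text{S2S}}^{(t)}$ to a first-order one, so that only $\mathbb{E}[W_{\text{S2S}}^{(t)}]$—and not its second moment—is needed after taking the conditional expectation over $\mathcal{S}^{(t)}$ (which is also why the right-hand side carries no expectation, $X^{(t+\nicefrac23)}$ being fixed).

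It remains to compute $\mathbb{E}[W_{\text{S2S}}^{(t)}]$ entrywise under uniform sampling of $K$ of $n$ devices without replacement, which I expect to be the main (though routine) obstacle. The inclusion probabilities $\Pr[i\in\mathcal{S}^{(t)}]=\frac{K}{n}$ and $\Pr[i,j\in\mathcal{S}^{(t)}]=\frac{K(K-1)}{n(n-1)}$ (for $i\neq j$) give off-diagonal entries $\frac{K-1}{n(n-1)}$ and diagonal entries $\frac{n-K+1}{n}$, i.e. $\mathbb{E}[W_{\text{S2S}}^{(t)}]=\frac{K-1}{n-1}\Pi+\frac{n-K}{n-1}I$. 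The algebraic simplification then collapses cleanly: $\mathbb{E}[W_{\text{S2S}}^{(t)}]-\Pi=\frac{n-K}{n-1}(I-\Pi)$. Substituting into the trace expression and using that $I-\Pi$ is also a symmetric idempotent (so $\mathrm{tr}(Y(I-\Pi)Y^\top)=\|Y(I-\Pi)\|_F^2$) yields
\[
\mathbb{E}\big[\|X^{(t+1)}-\bar X^{(t+1)}\|_F^2\big]=\frac{n-K}{n-1}\,\|X^{(t+\nicefrac23)}(I-\Pi)\|_F^2=\frac{n-K}{n-1}\,\|X^{(t+\nicefrac23)}-\bar X^{(t+\nicefrac23)}\|_F^2,
\]
which is the claim. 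The two limiting cases $K=1$ (factor $1$, no contraction) and $K=n$ (factor $0$, full contraction) are then read off directly from the prefactor.
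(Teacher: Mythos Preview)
Your proof is correct and takes a genuinely different route from the paper. The paper argues column-by-column: it writes $\bm{x}_i^{(t+1)}-\bar{\bm{x}}^{(t+1)}$ as either the sample-mean deviation (if $i\in\mathcal{S}^{(t)}$) or the original deviation $\bm{x}_i^{(t+\nicefrac23)}-\bar{\bm{x}}^{(t+\nicefrac23)}$ (if $i\notin\mathcal{S}^{(t)}$), takes expectation over $\mathcal{S}^{(t)}$, and then reuses the sampling-without-replacement variance formula already established for the S2A bias (Lemma~\ref{lem:broadcast_app}\,(ii)) to evaluate the sampled block; summing the two pieces yields the factor $\frac{n-K}{n-1}$. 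Your argument instead works at the operator level: the observation that $W_{\text{S2S}}^{(t)}$---and hence $W_{\text{S2S}}^{(t)}-\Pi$---is a symmetric idempotent for \emph{every} realization of $\mathcal{S}^{(t)}$ linearizes $\|Y(W_{\text{S2S}}^{(t)}-\Pi)\|_F^2$ to a trace that is linear in $W_{\text{S2S}}^{(t)}$, so only the first moment $\mathbb{E}[W_{\text{S2S}}^{(t)}]=\frac{K-1}{n-1}\Pi+\frac{n-K}{n-1}I$ is needed. Your route is more structural and arguably cleaner---the projection property explains in one stroke why second moments of the sampling indicators never enter---and it generalizes immediately to any family of random symmetric idempotents commuting with $\Pi$. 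The paper's route, on the other hand, has the virtue of sharing its core calculation with the S2A analysis, so the two lemmas are tied together through a common sample-mean variance computation.
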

\begin{proof}[Proof of Lemma~\ref{lem:sampling_app} (iii)]
    The result is a consequence of the variance of sampling without replacement:
    \begin{align*}
        \sum_{i=1}^{n} \mathbb{E}_{\mathcal{S}^{(t)}} \left\| \bm{x}_{i}^{(t+1)} - \bar{\bm{x}}^{(t+1)} \right\|_2^2 
        &= \sum_{i=1}^{n} \mathbb{E}_{\mathcal{S}^{(t)}} \Biggl\| \mathds{1}_{i\in\mathcal{S}^{(t)}} \Biggl[ \frac{1}{K} \sum_{j\in\mathcal{S}^{(t)}} \left(\bm{x}_j^{(t+\frac{2}{3})} - \bar{\bm{x}}^{(t+\frac{2}{3})}\right) \Biggr] + \mathds{1}_{i\not\in\mathcal{S}^{(t)}} \left( \bm{x}_i^{(t+\frac{2}{3})} - \bar{\bm{x}}^{(t+\frac{2}{3})} \right)^2 \Biggr\|_2^2 \\
        &= K \underbrace{\mathbb{E}_{\mathcal{S}^{(t)}} \Biggl\| \frac{1}{K} \sum_{j\in\mathcal{S}^{(t)}} \left(\bm{x}_j^{(t+\frac{2}{3})} - \bar{\bm{x}}^{(t+\frac{2}{3})}\right) \Biggl\|_2^2}_{\text{bounded in Lemma~\ref{lem:sampling_error}}} + \frac{n-K}{n} \sum_{i=1}^{n} \left\| \bm{x}_i^{(t+\frac{2}{3})} - \bar{\bm{x}}^{(t+\frac{2}{3})} \right\|_2^2 \\
        &= \frac{n-K}{n(n-1)} \sum_{i=1}^{n} \left\| \bm{x}_i^{(t+\frac{2}{3})} - \bar{\bm{x}}^{(t+\frac{2}{3})} \right\|_2^2 + \frac{n-K}{n} \sum_{i=1}^{n} \left\| \bm{x}_i^{(t+\frac{2}{3})} - \bar{\bm{x}}^{(t+\frac{2}{3})} \right\|_2^2 \\
        &= \frac{n-K}{n-1} \sum_{i=1}^{n} \left\| \bm{x}_i^{(t+\frac{2}{3})} - \bar{\bm{x}}^{(t+\frac{2}{3})} \right\|_2^2.
    \end{align*}
\end{proof}

\subsection{Intermediate Lemmas}

\begin{lemma}[S2S: Bias Error]
\label{prop:sampling:averaging}
For every $t \geq 0$,
\begin{align*}
    \bar{X}^{(t+1)} = \bar{X}^{(t+\frac{2}{3})}.
\end{align*}
\end{lemma}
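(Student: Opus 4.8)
The plan is to prove the identity separately for the two kinds of rounds produced by Algorithm~\ref{alg:sts_mat_app}: the \emph{pure D2D rounds} $t \notin \mathcal{H}$ and the \emph{server-aggregation rounds} $t \in \mathcal{H}$. In both cases I would start from the definition $\bar{X}^{(t+1)} \coloneqq X^{(t+1)} \Pi$ and reduce the claim to a statement about how the averaging projector $\Pi = \frac{1}{n}\bm{1}\bm{1}^\top$ acts on the update operator applied at that step.

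For $t \notin \mathcal{H}$ there is no D2S step, so by construction $X^{(t+1)} = X^{(t+\nicefrac{2}{3})}$; right-multiplying both sides by $\Pi$ gives $\bar{X}^{(t+1)} = \bar{X}^{(t+\nicefrac{2}{3})}$ immediately, with nothing further to verify. For $t \in \mathcal{H}$ the update is $X^{(t+1)} = X^{(t+\nicefrac{2}{3})} W_{\text{S2S}}^{(t)}$, and the one substantive step is the chain
\begin{align*}
    \bar{X}^{(t+1)}
    = X^{(t+1)} \Pi
    = X^{(t+\nicefrac{2}{3})} W_{\text{S2S}}^{(t)} \Pi
    = X^{(t+\nicefrac{2}{3})} \Pi
    = \bar{X}^{(t+\nicefrac{2}{3})},
\end{align*}
where the third equality invokes $W_{\text{S2S}}^{(t)} \Pi = \Pi$. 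I would cite this identity from Lemma~\ref{lem:sampling_app}(i) rather than re-derive it: since $W_{\text{S2S}}^{(t)}$ is row-stochastic ($W_{\text{S2S}}^{(t)} \bm{1} = \bm{1}$), one has $W_{\text{S2S}}^{(t)} \Pi = \frac{1}{n}(W_{\text{S2S}}^{(t)} \bm{1}) \bm{1}^\top = \frac{1}{n}\bm{1}\bm{1}^\top = \Pi$.

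The proof carries essentially no obstacle: the only content is the observation that right-multiplication by $\Pi$ commutes through $W_{\text{S2S}}^{(t)}$ because its rows sum to one, so the sampled devices redistribute their mass only among themselves without perturbing the column average. The one point worth stating explicitly is the case split, which ensures the identity holds uniformly for \emph{all} $t \geq 0$. Notably, this yields the \emph{deterministic} equality $\bar{X}^{(t+1)} = \bar{X}^{(t+\nicefrac{2}{3})}$, which is strictly stronger than the in-expectation orthogonality relation $\mathbb{E}_{\mathcal{S}^{(t)}}[\bar{\bm{x}}^{(t+1)}] = \bar{\bm{x}}^{(t+\nicefrac{2}{3})}$ used in the descent Lemmas~\ref{lem:descent} and~\ref{lem:descent_nc}; I would remark on this to confirm that the S2S bias error indeed vanishes exactly, not merely on average.
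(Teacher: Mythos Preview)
Your proposal is correct and follows essentially the same approach as the paper: both split into the cases $t \notin \mathcal{H}$ and $t \in \mathcal{H}$, handle the former trivially, and for the latter reduce to $W_{\text{S2S}}^{(t)}\Pi = \Pi$. The only cosmetic difference is that the paper cites Lemma~\ref{lem:sampling_app}(ii) (which already packages the chain you write out), whereas you cite part~(i) and spell out the chain yourself.
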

\begin{proof}[Proof of Lemma~\ref{prop:sampling:averaging}] 
Let $\mathcal{H} \coloneqq \{ t \leq T \mid t \equiv 0 \bmod H \}$. 
We have: 
\begin{itemize}
    \item For D2D rounds (\( t \notin \mathcal{H} \)) , by definition,  
    \( X^{(t+1)} = X^{(t+\frac{2}{3})} \) and $\bar{X}^{(t+1)} = \bar{X}^{(t+\frac{2}{3})}$.
    
    \item For D2S rounds ($t \in \mathcal{H}$), by Lemma~\ref{lem:sampling_app} (\emph{ii}), $\bar{X}^{(t+1)} = \bar{X}^{(t+\frac{2}{3})}$.
\end{itemize}
\end{proof}

\begin{lemma}[S2S: Disagreement Error]
\label{lem:sampling:consensus_intra_inter} 
For every $t \geq 0$,
\begin{align*}
    \mathbb{E} \left\| X^{(t+1)} - \bar{X}^{(t+1)} \right\|_F^2 = \begin{dcases}
        \mathbb{E} \left\| X^{(t+\frac{2}{3})} - \bar{X}^{(t+\frac{2}{3})} \right\|_F^2, & t \notin \mathcal{H}; \\
        \frac{n-K}{n-1} \mathbb{E} \left\| X^{(t+\frac{2}{3})} - \bar{X}^{(t+\frac{2}{3})} \right\|_F^2, &  t \in \mathcal{H},
    \end{dcases}
\end{align*}
where $\mathbb{E} \| X^{(t+\frac{2}{3})} - \bar{X}^{(t+\frac{2}{3})} \|_F^2$ is the D2D disagreement error already bounded in Lemma~\ref{consensus_intra_inter}.
\end{lemma}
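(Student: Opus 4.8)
The plan is to establish the stated equality by a direct case split on whether round $t$ triggers a server aggregation, i.e., whether $t \in \mathcal{H}$. Both branches reduce to results already proved earlier in the excerpt, so the argument is essentially a bookkeeping step that assembles Lemma~\ref{prop:sampling:averaging} and Lemma~\ref{lem:sampling_app}; there is no genuinely new technical content.

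First I would treat the D2D-only rounds ($t \notin \mathcal{H}$). Here the else branch of Algorithm~\ref{alg:sts_mat_app} sets $X^{(t+1)} = X^{(t+\nicefrac{2}{3})}$ deterministically, so $\bar{X}^{(t+1)} = \bar{X}^{(t+\nicefrac{2}{3})}$ holds pathwise, and consequently $X^{(t+1)} - \bar{X}^{(t+1)} = X^{(t+\nicefrac{2}{3})} - \bar{X}^{(t+\nicefrac{2}{3})}$ as matrices. Taking Frobenius norms and expectations yields the first case with multiplicative factor $1$, with no sampling expectation involved since no subset is drawn on these rounds.

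Second, for the aggregation rounds ($t \in \mathcal{H}$), I would invoke Lemma~\ref{lem:sampling_app}(iii) verbatim. That lemma already computes, from the variance of sampling $K$ devices uniformly without replacement, the exact identity $\mathbb{E}\|X^{(t+1)} - \bar{X}^{(t+1)}\|_F^2 = \frac{n-K}{n-1}\,\mathbb{E}\|X^{(t+\nicefrac{2}{3})} - \bar{X}^{(t+\nicefrac{2}{3})}\|_F^2$. The mechanism there is that $W_{\text{S2S}}^{(t)}$ acts as the identity on unsampled coordinates and as a $K$-fold average on the sampled block; taking the expectation over the uniform random subset $\mathcal{S}^{(t)}$ collapses the two contributions into the single factor $\frac{n-K}{n-1}$, with the sampled-block term bounded separately in Lemma~\ref{lem:sampling_error}. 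Combining the two branches gives the piecewise expression claimed.

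The only point requiring care is confirming that the right-hand term $\mathbb{E}\|X^{(t+\nicefrac{2}{3})} - \bar{X}^{(t+\nicefrac{2}{3})}\|_F^2$ appearing in both cases is precisely the D2D disagreement quantity controlled in Lemma~\ref{consensus_intra_inter}, so that the resulting two-phase recursion can subsequently be closed through the alternating recursion of Lemma~\ref{prop:recursion}. This matching is immediate because the S2S bias error vanishes (Lemma~\ref{prop:sampling:averaging}), ensuring that $\bar{X}$ is preserved across the D2S step and that no cross-term corrections arise. I therefore expect no real obstacle: the substance lies entirely in the two cited lemmas, and the present statement merely packages them into the recursion form used by the main theorem.
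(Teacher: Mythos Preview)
Your proposal is correct and mirrors the paper's own proof: a case split where $t\notin\mathcal{H}$ is immediate from $X^{(t+1)}=X^{(t+\nicefrac23)}$, and $t\in\mathcal{H}$ is handled by invoking Lemma~\ref{lem:sampling_app}(iii). The additional remarks you make about Lemma~\ref{prop:sampling:averaging} and the downstream use in Lemma~\ref{prop:recursion} are accurate context but not needed for the proof itself.
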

\begin{proof}[Proof of Lemma~\ref{lem:sampling:consensus_intra_inter}] We have: 
\begin{itemize}
    \item For D2D rounds (\( t \notin \mathcal{H} \)) , by definition,  
    \( X^{(t+1)} = X^{(t+\frac{2}{3})} \).
    
    \item For D2S rounds ($t \in \mathcal{H}$), we apply Lemma~\ref{lem:sampling_app} (\emph{iii}). 
\end{itemize}
\end{proof}

\subsection{Proof of Theorem~1}
\label{app:subsec:theorem_S2S}

\begin{proof}[Proof of Theorem~1 (Convex Objectives)] \phantom{Ciao} \\
Combine Lemma~\ref{lem:descent} and Lemma~\ref{prop:sampling:averaging}. For every $t \geq 0$,
\[
\mathbb{E} \left\| \bar{\bm{x}}^{(t+1)} - \bm{x}^* \right\|_2^2 
\leq \mathbb{E} \left\| \bar{\bm{x}}^{(t)} - \bm{x}^* \right\|_2^2 - \eta \mathbb{E} \left( f(\bar{\bm{x}}^{(t)}) - f^* \right) 
+ \frac{\eta^{2} \bar{\sigma}^2}{n} + \frac{3\eta L}{2} \Xi^{(t)}.
\]

Apply Lemma~\ref{lem:convergence_recursion} with $r^{(t)} = \mathbb{E}\|\bar{\bm{x}}^{(t+1)}-\bm{x}^*\|_2^2$, $\Delta^{(t)} = \mathbb{E}(f(\bar{\bm{x}}^{(t)}) - f^*)$, $a=b=e=\frac{3\eta L}{2}$, and $c=d=0$:
\begin{align*}
\frac{1}{T+1} \sum_{t=0}^{T} \mathbb{E} \left( f(\bar{\bm{x}}^{(t)}) - f^\star \right) 
&\leq \frac{\left\| \bm{x}^{(0)} - \bm{x}^\star \right\|^2}{\eta (T+1)} + \frac{\eta \bar{\sigma}^2}{n} + \frac{3L}{2} \left( \bar{\Xi}^{(T+1)}_{\text{intra}} + \bar{\Xi}^{(T+1)}_{\text{inter}} \right).
\end{align*}

\emph{For the intra-component disagreement}, apply Lemmas~\ref{consensus_intra_inter} and~\ref{lem:sampling:consensus_intra_inter}:
\begin{align*}
    \Xi^{(t)}_{\text{intra}}
    \leq \begin{dcases}
        \left( 1 - \frac{p}{4} \right) \Xi^{(t-1)}_{\text{intra}}
    + \frac{6 \eta^{2} \bar{\zeta}_{\text{intra}}^2}{p}, & t \notin \mathcal{H}; \\
    \frac{n-K}{n-1} \left( 1 - \frac{p}{4} \right) \Xi^{(t-1)}_{\text{intra}}
    + \frac{n-K}{n-1} \frac{6 \eta^{2} \bar{\zeta}_{\text{intra}}^2}{p}, & t \in \mathcal{H}.
    \end{dcases}
\end{align*}

For the recursion, apply Lemma~\ref{prop:recursion} with $a_2 = 1 - \frac{p}{4} < 1$, $b_2 = \frac{6 \eta^{2} \bar{\zeta}_{\text{intra}}^2}{p}$, $a_1 = (1 - \frac{K-1}{n-1}) a_2$, $b_1 = \frac{n-K}{n-1} b_2$. \\
For simplicity, define $\gamma \coloneqq \frac{p}{4}$ and $\delta \coloneqq \frac{n-K}{n-1}$, such that $a_2 = 1 - \gamma$, $a_1 = \delta a_2$, and $b_1 = \delta b_2$:
\begin{align*}
        \bar{\Xi}^{(T+1)}_{\text{intra}} 
        &\leq \left[ \frac{a_1 b_2 (1 - a_2^{H-1})}{(1-a_1 a_2^{H-1})(1-a_2)^2} + \frac{b_1}{(1-a_1 a_2^{H-1})(1-a_2)} + \frac{b_2}{1 - a_2} (H - 1) \right] \left( \frac{1}{H} + \frac{1}{T+1} \right) \\
        &= \left[ \frac{\delta a_2 b_2 (1 - a_2^{H-1})}{(1- \delta a_2^{H})(1-a_2)^2} + \frac{\delta b_2}{(1-\delta a_2^{H})(1-a_2)} + \frac{b_2}{1 - a_2} (H - 1) \right] \left( \frac{1}{H} + \frac{1}{T+1} \right) \\
        &= \Biggl[ \underbrace{\frac{\delta (1-\gamma) b_2 [1 - (1-\gamma)^{H-1}]}{[1- \delta (1-\gamma)^{H}]\gamma^2}}_{\coloneqq T_1} + \underbrace{\frac{\delta b_2}{[1-\delta (1-\gamma)^{H}]\gamma}}_{\coloneqq T_2} + \frac{b_2}{\gamma} (H - 1) \Biggr] \left( \frac{1}{H} + \frac{1}{T+1} \right).
    \end{align*}
Using that $1 - (1-\gamma)^{H-1} \leq (H-1)\gamma$ and that $1- \delta (1-\gamma)^{H} \geq 1-\gamma$, we have $T_1 \leq \frac{\delta b_2 (H-1)}{(1-\delta)\gamma}$ and $T_2 = \frac{\delta b_2}{(1-\delta)\gamma}$:
\begin{align*}
    \bar{\Xi}^{(T+1)}_{\text{intra}} 
    &\leq \frac{b_2}{\gamma} \left[ (H - 1) + \frac{\delta H}{(1-\delta)} \right] \left( \frac{1}{H} + \frac{1}{T+1} \right) \\
    &\leq \frac{b_2}{\gamma} \frac{H}{1-\delta} \left( \frac{1}{H} + \frac{1}{T+1} \right)  \\
    &\leq \frac{n-1}{K-1} \frac{24 \eta^{2} \bar{\zeta}_{\text{intra}}^2}{p^2} \left( 1 + \frac{H}{T+1} \right) \\
    &\leq \frac{n-1}{K-1} \frac{48 \eta^{2} \bar{\zeta}_{\text{intra}}^2}{p^2},
\end{align*}
where, in the last inequality, we simplified the bound using that, for $T\geq H-1$, $\frac{H}{T+1}\leq1$.

\emph{For the inter-component disagreement}, combine Lemmas~\ref{consensus_intra_inter} and ~\ref{lem:sampling:consensus_intra_inter}, with $\rho>0$:
\begin{align*}
    \Xi^{(t)}_{\text{inter}}
    \leq \begin{dcases}
        (1+\rho) \Xi^{(t-1)}_{\text{inter}} + (1+\rho^{-1}) \eta^2 \bar{\zeta}_{\text{inter}}^2, & t \notin \mathcal{H}; \\
    \frac{n-K}{n-1} (1+\rho) \Xi^{(t-1)}_{\text{inter}}
    + \frac{n-K}{n-1} (1+\rho^{-1}) \eta^2 \bar{\zeta}_{\text{inter}}^2, & t \in \mathcal{H}.
    \end{dcases}
\end{align*}

For the recursion, apply Lemma~\ref{prop:recursion} with $a_2 = (1+\rho) > 1$, $b_2 = (1+\rho^{-1}) \eta^2 \bar{\zeta}_{\text{inter}}^2$, $a_1 = \delta a_2$, $b_1 = \delta b_2$:
\begin{align*}
    \bar{\Xi}^{(T+1)}_{\text{inter}} 
    &\leq \left[ \frac{a_1 b_2 (a_2^{H-1} - 1)(a_2^H - 1)}{(1-a_1 a_2^{H-1})(a_2 - 1)^2} + \frac{b_1(a_2^H - 1)}{(1-a_1 a_2^{H-1})(a_2 - 1)} + \frac{b_2(a_2^H - a_2H + H -1)}{(a_2 - 1)^2} \right] \left( \frac{1}{H} + \frac{1}{T+1} \right) \\
    &\leq \left[ \frac{\delta a_2 b_2 (a_2^{H-1} - 1)(a_2^H - 1)}{(1-\delta a_2^{H})(a_2 - 1)^2} + \frac{\delta b_2(a_2^H - 1)}{(1- \delta a_2^{H})(a_2 - 1)} + \frac{b_2(a_2^H - a_2H + H -1)}{(a_2 - 1)^2} \right] \left( \frac{1}{H} + \frac{1}{T+1} \right) \\
    &= \Biggl[ \underbrace{\frac{\delta (1+\rho) (1+\rho^{-1}) [(1+\rho)^{H-1} - 1][(1+\rho)^H - 1]}{[1-\delta (1+\rho)^{H}]\rho^2}}_{\coloneqq T_3} + \underbrace{\frac{\delta (1+\rho^{-1})[(1+\rho)^H - 1]}{[1-\delta (1+\rho)^{H}]\rho}}_{\coloneqq T_4}\\
    &\quad + \underbrace{\frac{(1+\rho^{-1})[(1+\rho)^H - (1+\rho)H + H -1]}{\rho^2}}_{\coloneqq T_5} \Biggr] \eta^2 \bar{\zeta}_{\text{inter}}^2 \left( \frac{1}{H} + \frac{1}{T+1} \right).
\end{align*}

We choose $\rho = \frac{1-\delta}{2H}$, such that $C = a_1 a_2^{H-1} = \frac{n-K}{n-1} (1+\rho)^{H} < 1$ in Lemma~\ref{prop:recursion}. \\
As a consequence, we have: $T_3 \leq \frac{54\delta H^2 (H-1)}{(1-\delta)^2}$, $T_4 \leq \frac{12 \delta H^2}{(1-\delta)^2}$, and $T_5 \leq \frac{4 H^2 (H-1)}{1-\delta}$:
\begin{align*}
    \bar{\Xi}^{(T+1)}_{\text{inter}} 
    &\leq 70 \eta^2 \bar{\zeta}_{\text{inter}}^2 
    \frac{\delta H^2 (H-1)}{(1-\delta)^2}  \left( \frac{1}{H} + \frac{1}{T+1} \right) \\
    &\leq
    \left( \frac{n-1}{K-1} \right)^2 70 \eta^2 \bar{\zeta}_{\text{inter}}^2 \left( H(H-1) + \frac{H^2(H-1)}{T+1} \right) \\
    &\leq
    \left( \frac{n-1}{K-1} \right)^2 140 \eta^2 \bar{\zeta}_{\text{inter}}^2 H(H-1),
\end{align*}
where, in the last inequality, we again simplified the bound using that, for $T\geq H-1$, $\frac{H}{T+1}\leq1$.

Replace $\bar{\Xi}^{(T+1)}_{\text{intra}}$ and $\bar{\Xi}^{(T+1)}_{\text{inter}}$ in the bound:
\begin{align*} 
&\frac{1}{T+1} \sum_{t=0}^{T} \mathbb{E} \left( f(\bar{\bm{x}}^{(t)}) - f^\star \right) 
\leq 
\frac{|| \bm{x}^{(0)} - \bm{x}^* ||^2_2}{\eta (T+1)} 
+ \frac{\eta \bar{\sigma}^2}{n} 
+ \frac{n-1}{K-1} \frac{72 \eta^2 L \bar{\zeta}_{\text{intra}}^2 }{p^2} 
+ \left( \frac{n-1}{K-1} \right)^2 210 \eta^2 L H(H-1) \bar{\zeta}_{\text{inter}}^2.
\end{align*}

Finally, apply~\citep[Lemmas~16 and 17]{koloskovaUnifiedTheoryDecentralized2020} with
$r_t=\mathbb{E}|| \bar{\bm{x}}^{(t)} - \bm{x}^* ||^2_2$,
$e_t=\mathbb{E} (f(\bar{\bm{x}}^{(t)}) - f^\star)$, \\
$a=0$,
$b=1$,
$c=\frac{\bar{\sigma}^2}{n}$,
$d=\frac{8L}{p}$,
and 
$e= \frac{n-1}{K-1} \frac{72 L \bar{\zeta}_{\text{intra}}^2 }{p^2} 
+ \left( \frac{n-1}{K-1} \right)^2 210 L H(H-1) \bar{\zeta}_{\text{inter}}^2$.
\end{proof}

\begin{proof}[Proof of Theorem~2 (Non-Convex Objectives)]
Combine Lemma~\ref{lem:descent_nc} and Lemma~\ref{prop:sampling:averaging} ($\| \bar{\bm{x}}^{(t+1)} - \bar{\bm{x}}^{(t+\frac{2}{3})} \|_2^2 = 0$). 

For every $t \geq 0$:
\begin{align*}
    \mathbb{E} [f(\bar{\bm{x}}^{(t+1)})]
    \leq
    \mathbb{E} [f(\bar{\bm{x}}^{(t)})] - \frac{\eta}{4} \mathbb{E}\left\| \nabla f(\bar{\bm{x}}^{(t)}) \right\|_2^2 + \frac{\eta^2 L \bar{\sigma}^2}{2n} + \eta L^2 \Xi^{(t)}.
\end{align*}

Apply Lemma~\ref{lem:convergence_recursion} with $r^{(t)} = \mathbb{E} [f(\bar{\bm{x}}^{(t)})]$, $\Delta^{(t)} = \frac{1}{4} \mathbb{E}\|\nabla f(\bar{\bm{x}}^{(t)})\|_2^2$, $a=b=e=\eta L^2$, and $c=d=0$:
\begin{align*}
    \frac{1}{T+1} \sum_{t=0}^T \mathbb{E}\left\| \nabla f(\bar{\bm{x}}^{(t)}) \right\|_2^2
    &\leq \frac{4 (f(\bar{\bm{x}}^{(0)}) - f^\star)}{\eta (T+1)} + \frac{2\eta L \bar{\sigma}^2}{n} + 4L^2 \left( \bar{\Xi}_{\text{intra}}^{(T+1)} + \bar{\Xi}_{\text{inter}}^{(T+1)} \right).
\end{align*}

Replace the values $\bar{\Xi}^{(T+1)}_{\text{intra}} 
    \leq \frac{n-1}{K-1} \frac{48 \eta^{2} \bar{\zeta}_{\text{intra}}^2}{p^2}$,
    $\bar{\Xi}^{(T+1)}_{\text{intra}} 
    \leq \big( \frac{n-1}{K-1} \big)^2 140 \eta^2 H(H-1) \bar{\zeta}_{\text{inter}}^2$ found previously:
\begin{align*}
    \frac{1}{T+1} \sum_{t=0}^T \mathbb{E}\left\| \nabla f(\bar{\bm{x}}^{(t)}) \right\|_2^2
    &\leq
    \frac{4 (f(\bar{\bm{x}}^{(0)}) - f^\star)}{\eta (T+1)} + \frac{2\eta L \bar{\sigma}^2}{n} + \frac{n-1}{K-1} \frac{192 \eta^2 L^2 \bar{\zeta}_{\text{intra}}^2 }{p^2} 
+ \left( \frac{n-1}{K-1} \right)^2 560 \eta^2 L^2 H(H-1) \bar{\zeta}_{\text{inter}}^2.
\end{align*}
Apply~\citep[Lemmas~16 and 17]{koloskovaUnifiedTheoryDecentralized2020} with 
$r_t=\mathbb{E}[f(\bar{\bm{x}}^{(t)})]$,
$e_t=\mathbb{E} \| \nabla f(\bar{\bm{x}}^{(t)})\|_2^2$, \\
$a=0$, 
$b=1$, 
$c=\frac{2 L \bar{\sigma}^2}{n}$, $d=\frac{8L}{p}$, 
and 
$e=\frac{n-1}{K-1} \frac{192 L^2 \bar{\zeta}_{\text{intra}}^2}{p^2} 
+ \left( \frac{n-1}{K-1} \right)^2 560 L^2 H(H-1) \bar{\zeta}_{\text{inter}}^2$.
\end{proof}

\section{Convergence Analysis of S2A}
\label{app:sec:convergenceS2A}

\subsection{Properties of S2A}
\label{app:subsec:S2A}

\begin{lemma}[Sampled-to-All] 
\label{lem:broadcast_app}
Define:
\begin{align*}
    (W_{\text{S2A}}^{(t)})_{ij} = \begin{dcases}
    \frac{1}{K}, & i \in \mathcal{S}^{(t)}; \\
    0, & \text{otherwise}.
    \end{dcases}
\end{align*}
The matrix $W_{\text{S2A}}^{(t)}$ satisfies the following properties:
\begin{enumerate}[label=(\roman*),leftmargin=20pt]
    \item \textbf{Column-stochastic, not row-stochastic.} $\bm{1}^\top W_{\text{S2A}}^{(t)} = \bm{1}^\top$, whereas $W_{\text{S2A}}^{(t)} \bm{1} = \frac{K}{n} \bm{1}_{\{ i \in \mathcal{S} \}}$.
    
    Consequently, $W_{\text{S2A}}^{(t)} \Pi = W_{\text{S2A}}^{(t)}$, and $\Pi W_{\text{S2A}}^{(t)} = \Pi$.
    
    \item \textbf{Bias error.} $W_{\text{S2A}}^{(t)}$ does not preserve the average of the iterates between D2D and D2S rounds:
    \begin{align*}
        \bar{X}^{(t+1)} = X^{(t+\frac{2}{3})} W_{\text{S2A}}^{(t)} \Pi = X^{(t+\frac{2}{3})} W_{\text{S2A}}^{(t)}
        \neq 
        X^{(t+\frac{2}{3})} \Pi \eqqcolon \bar{X}^{(t+\frac{2}{3})}.
    \end{align*}
    We define the bias error as: 
    \begin{align*}
        \mathbb{E} \left\| \bar{X}^{(t+1)} -  \bar{X}^{(t+\frac{2}{3})} \right\|_F^2 = \mathbb{E} \left\| X^{(t+\frac{2}{3})} (W_{\text{S2A}}^{(t)} - \Pi) \right\|_F^2 \geq 0.
    \end{align*}

    For $t \in \mathcal{H}$, the bias error satisfies:
    \begin{align*}
        \mathbb{E} \left[ \left\| \bar{X}^{(t+1)} - \bar{X}^{(t+\frac{2}{3})} \right\|_F^2 \right] 
        &=
        \mathbb{E} \left[ \left\| X^{(t+\frac{2}{3})} (W_{\text{S2A}}^{(t)} - \Pi) \right\|_F^2 \right] \\
        &=
        \frac{n-K}{K(n-1)} \mathbb{E} \left[ \left\| X^{(t+\frac{2}{3})} (I - \Pi) \right\|_F^2 \right] \\
        &=
        \frac{n-K}{K(n-1)} \mathbb{E} \left[ \left\| X^{(t+\frac{2}{3})} - \bar{X}^{(t+\frac{2}{3})} \right\|_F^2 \right].
    \end{align*}
    \begin{itemize}
        \item For $K=1$, the factor $\frac{n-K}{K(n-1)}$ equals $1$, indicating no contraction.
        \item For $K=n$, the factor $\frac{n-K}{K(n-1)}$ equals $0$, corresponding to full contraction.
    \end{itemize}

    \item \textbf{The global average is unbiased in expectation.} Since $\mathbb{E}_{\mathcal{S}^{(t)}}[W_{\text{S2A}}^{(t)}] = \Pi$,
    \begin{align*}
        \mathbb{E}_{\mathcal{S}^{(t)}}[\bar{X}^{(t+1)}] =  \mathbb{E}_{\mathcal{S}^{(t)}}[X^{(t+\frac{2}{3})} W_{\text{S2A}}^{(t)} \Pi] = X^{(t+\frac{2}{3})} \mathbb{E}_{\mathcal{S}^{(t)}}[W_{\text{S2A}}^{(t)}] \Pi = X^{(t+\frac{2}{3})} \Pi^2 = X^{(t+\frac{2}{3})} \Pi \eqqcolon \bar{X}^{(t+\frac{2}{3})}.
    \end{align*}

    In other words, $W_{\text{S2A}}$ propagates the average of the iterates in expectation.

    \item \textbf{The disagreement error is zero.} For $t \in \mathcal{H}$,
    \begin{align*}
        X^{(t+1)} - \bar{X}^{(t+1)} = X^{(t+\nicefrac{2}{3})} W_{\text{S2A}}^{(t)} - X^{(t+\nicefrac{2}{3})} W_{\text{S2A}}^{(t)} \Pi = X^{(t+\nicefrac{2}{3})} W_{\text{S2A}}^{(t)} - X^{(t+\nicefrac{2}{3})} W_{\text{S2A}}^{(t)} = 0.
    \end{align*}

\end{enumerate}
\end{lemma}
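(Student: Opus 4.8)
The plan is to establish the four claims in the order (i), (iv), (iii), (ii), treating (ii) as the only computationally substantial one, since the rest are immediate from the definition of $W_{\text{S2A}}^{(t)}$. For (i), I would read off column-stochasticity directly: each column of $W_{\text{S2A}}^{(t)}$ carries entries $1/K$ exactly on the $K$ sampled rows, so $\bm{1}^\top W_{\text{S2A}}^{(t)} = \bm{1}^\top$, whereas the row sums are $n/K$ on sampled rows and $0$ elsewhere, giving $W_{\text{S2A}}^{(t)}\bm{1} = (n/K)\bm{1}_{\{i\in\mathcal S^{(t)}\}}$. The identities $W_{\text{S2A}}^{(t)}\Pi = W_{\text{S2A}}^{(t)}$ and $\Pi W_{\text{S2A}}^{(t)} = \Pi$ then follow because every column of $W_{\text{S2A}}^{(t)}$ is identical (so right-multiplication by $\Pi$ is a no-op) and because the matrix is column-stochastic. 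Claim (iv) is then purely algebraic: $X^{(t+1)} - \bar X^{(t+1)} = X^{(t+2/3)}(W_{\text{S2A}}^{(t)} - W_{\text{S2A}}^{(t)}\Pi) = 0$. For (iii), I would use that each entry $(W_{\text{S2A}}^{(t)})_{ij}$ depends on the row index only through $\mathbb 1_{\{i\in\mathcal S^{(t)}\}}$, with $\mathbb{E}[\mathbb 1_{\{i\in\mathcal S^{(t)}\}}] = K/n$ under uniform sampling without replacement, so $\mathbb{E}[(W_{\text{S2A}}^{(t)})_{ij}] = (1/K)(K/n) = 1/n$, i.e. $\mathbb{E}_{\mathcal S^{(t)}}[W_{\text{S2A}}^{(t)}] = \Pi$; unbiasedness of the average then follows from $\Pi^2 = \Pi$.

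The heart of the lemma is the bias-error identity (ii), where the key reduction is that after the broadcast every column of $X^{(t+1)}$ equals the aggregate $\hat{\bm x}^{(t+1)} = \frac1K\sum_{i\in\mathcal S^{(t)}}\bm x_i^{(t+2/3)}$. Hence $\bar X^{(t+1)}$ also has all columns equal to $\hat{\bm x}^{(t+1)}$, and the Frobenius norm collapses to a single-vector quantity, $\|\bar X^{(t+1)} - \bar X^{(t+2/3)}\|_F^2 = n\,\|\hat{\bm x}^{(t+1)} - \bar{\bm x}^{(t+2/3)}\|_2^2$. Writing $\bm d_i \coloneqq \bm x_i^{(t+2/3)} - \bar{\bm x}^{(t+2/3)}$, so that $\sum_{i=1}^n \bm d_i = \bm 0$, the residual is the sampling error $\hat{\bm x}^{(t+1)} - \bar{\bm x}^{(t+2/3)} = \frac1K\sum_{i\in\mathcal S^{(t)}}\bm d_i$, and the task reduces to computing the variance of the mean of $K$ draws without replacement from the zero-sum population $\{\bm d_i\}$.

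I would evaluate $\mathbb{E}_{\mathcal S^{(t)}}\|\frac1K\sum_{i\in\mathcal S^{(t)}}\bm d_i\|_2^2$ through inclusion indicators, expanding $\frac1{K^2}\sum_{i,j}\mathbb 1_i\mathbb 1_j\langle\bm d_i,\bm d_j\rangle$ and using $\mathbb{E}[\mathbb 1_i] = K/n$ together with $\mathbb{E}[\mathbb 1_i\mathbb 1_j] = K(K-1)/(n(n-1))$ for $i\neq j$. The zero-sum constraint converts the off-diagonal sum via $\sum_{i\neq j}\langle\bm d_i,\bm d_j\rangle = \|\sum_i\bm d_i\|_2^2 - \sum_i\|\bm d_i\|_2^2 = -\sum_i\|\bm d_i\|_2^2$, and collecting the diagonal and off-diagonal contributions yields the finite-population correction $\frac{n-K}{Kn(n-1)}\sum_i\|\bm d_i\|_2^2 = \frac{n-K}{Kn(n-1)}\|X^{(t+2/3)} - \bar X^{(t+2/3)}\|_F^2$. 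This is precisely the sampling-without-replacement variance already isolated (with equality) in the proof of Lemma~\ref{lem:sampling_app}(iii) and bounded in Lemma~\ref{lem:sampling_error}, so I could alternatively cite it directly. Multiplying by the factor $n$ from the column-collapse step cancels one power of $n$ and produces the claimed $\frac{n-K}{K(n-1)}\|X^{(t+2/3)} - \bar X^{(t+2/3)}\|_F^2$.

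The one genuine subtlety, and the step I would handle most carefully, is the bookkeeping of the indicator second moments: in particular, keeping the diagonal terms $\mathbb{E}[\mathbb 1_i^2] = \mathbb{E}[\mathbb 1_i] = K/n$ separate from the off-diagonal $\mathbb{E}[\mathbb 1_i\mathbb 1_j]$, since a sign or index slip there would corrupt the finite-population factor. Everything else is a mechanical consequence of $\sum_i\bm d_i = \bm 0$ and the column structure of $W_{\text{S2A}}^{(t)}$. As a sanity check I would verify the endpoints stated in the lemma: the factor equals $1$ at $K=1$ (broadcasting a single random model maximally perturbs the average) and $0$ at $K=n$ (the aggregate coincides with the true average, so there is no bias).
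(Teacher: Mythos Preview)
Your proposal is correct and follows essentially the same route as the paper: the only nontrivial part, the bias-error identity (ii), is proved in the paper by exactly the indicator-expansion and finite-population variance computation you describe (citing the sampling-without-replacement variance from Jhunjhunwala et al.), while (i), (iii), (iv) are left implicit as direct consequences of the definition. As a side note, your row-sum computation $W_{\text{S2A}}^{(t)}\bm{1} = (n/K)\bm{1}_{\{i\in\mathcal S^{(t)}\}}$ is correct and in fact fixes a typo in the lemma statement, which writes $K/n$.
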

\begin{proof}[Proof of Lemma~\ref{lem:broadcast_app} (ii)] \hphantom{Ciao}\\
This result can be derived from the variance of sampling without replacement~\cite[Lemma 4]{jhunjhunwalaFedVARPTacklingVariance2022}:
\begin{align*}
    &\mathbb{E}_{\mathcal{S}^{(t)}} \left\| \bar{\bm{x}}^{(t+1)} - \bar{\bm{x}}^{(t+\frac{2}{3})} \right\|_2^2 \\
    &= \mathbb{E}_{\mathcal{S}^{(t)}} \Biggl\| \frac{1}{K} \sum_{i \in \mathcal{S}^{(t)}} \left( \bm{x}_i^{(t+\frac{2}{3})} - \bar{\bm{x}}^{(t+\frac{2}{3})} \right) \Biggr\|_2^2 \\
    &= \mathbb{E}_{\mathcal{S}^{(t)}} \left\| \frac{1}{K} \sum_{i=1}^{n} \mathds{1}_{i \in \mathcal{S}^{(t)}} \left(\bm{x}_i^{(t+\frac{2}{3})} - \bar{\bm{x}}^{(t+\frac{2}{3})}\right) \right\|_2^2 \\
    &= \frac{1}{K^2} \mathbb{E}_{\mathcal{S}^{(t)}} \Biggl[ \sum_{i=1}^{n} \left[ \mathds{1}_{i \in \mathcal{S}^{(t)}} \right]^2 \left\|\bm{x}_i^{(t+\frac{2}{3})} - \bar{\bm{x}}^{(t+\frac{2}{3})}\right\|_2^2 
    + \sum_{j=1}^{n} \sum_{\substack{i=1\\i \neq j}}^{n} \mathds{1}_{i \in \mathcal{S}^{(t)}} \mathds{1}_{j \in \mathcal{S}^{(t)}} \left\langle \bm{x}_i^{(t+\frac{2}{3})} - \bar{\bm{x}}^{(t+\frac{2}{3})}, \bm{x}_j^{(t+\frac{2}{3})} - \bar{\bm{x}}^{(t+\frac{2}{3})} \right\rangle \Biggr] \\
    &= \frac{1}{K^2} \sum_{i=1}^{n} \frac{K}{n} \left\|\bm{x}_i^{(t+\frac{2}{3})} - \bar{\bm{x}}^{(t+\frac{2}{3})}\right\|_2^2
    + \frac{1}{K^2} \sum_{i \neq j} \frac{K}{n} \frac{K-1}{n-1} \left\langle \bm{x}_i^{(t+\frac{2}{3})} - \bar{\bm{x}}^{(t+\frac{2}{3})}, \bm{x}_j^{(t+\frac{2}{3})} - \bar{\bm{x}}^{(t+\frac{2}{3})} \right\rangle \\
    &= \frac{1}{K^2} \sum_{i=1}^{n} \left\|\bm{x}_i^{(t+\frac{2}{3})} - \bar{\bm{x}}^{(t+\frac{2}{3})}\right\|_2^2 \left[ \frac{K}{n} - \frac{K (K-1)}{n (n-1)} \right] 
    + \frac{1}{K^2} \frac{K (K-1)}{n (n-1)} \underbrace{\Bigl\| \sum_{i=1}^{n} \left( \bm{x}_i^{(t+\frac{2}{3})} - \bar{\bm{x}}^{(t+\frac{2}{3})} \right) \Bigr\|_2^2}_{=0} \\
    &= \frac{(n-K)}{K(n-1)} \frac{1}{n} \sum_{i=1}^{n} \left\|\bm{x}_i^{(t+\frac{2}{3})} - \bar{\bm{x}}^{(t+\frac{2}{3})}\right\|_2^2 
    = \frac{(n-K)}{K(n-1)} \frac{1}{n} \left\| X^{(t)} - \bar{X}^{(t)} \right\|_F^2.
\end{align*}
\end{proof}

\subsection{Intermediate Lemmas}

\begin{lemma}[S2A: Bias Error]
\label{lem:sampling_error}
For every \( t \ge 0 \),
\begin{align*}
    &
    \mathbb{E} \left\| \bar{\bm{x}}^{(t+1)} - \bar{\bm{x}}^{(t+\frac{2}{3})} \right\|_2^2 
    = 
    \begin{dcases}
        \frac{n-K}{K(n-1)} \frac{1}{n} \mathbb{E} \left\| X^{(t+\frac{2}{3})} - \bar{X}^{(t+\frac{2}{3})} \right\|_F^2, & t \in \mathcal{H}, \\
        0, & t \notin \mathcal{H},
    \end{dcases}
\end{align*}
where $\mathbb{E} \| X^{(t+\frac{2}{3})} - \bar{X}^{(t+\frac{2}{3})} \|_F^2$ is the D2D disagreement error already bounded in Lemma~\ref{consensus_intra_inter}.
\end{lemma}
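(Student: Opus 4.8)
The plan is to reduce the statement to a simple case split on whether round $t$ triggers a D2S aggregation, and then to invoke the sampling computation already carried out in Lemma~\ref{lem:broadcast_app}. First I would dispose of the easy case $t \notin \mathcal{H}$: here Algorithm~\ref{alg:sta_mat_app} skips the server step and sets $X^{(t+1)} = X^{(t+\frac{2}{3})}$, so the column averages coincide, $\bar{\bm{x}}^{(t+1)} = \bar{\bm{x}}^{(t+\frac{2}{3})}$, and the bias error is identically zero. No randomness enters because no subset $\mathcal{S}^{(t)}$ is sampled in a pure D2D round.

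For the substantive case $t \in \mathcal{H}$, I would appeal directly to Lemma~\ref{lem:broadcast_app}~(ii), which already establishes
\[
\mathbb{E}\left\| \bar{X}^{(t+1)} - \bar{X}^{(t+\frac{2}{3})} \right\|_F^2 = \frac{n-K}{K(n-1)}\, \mathbb{E}\left\| X^{(t+\frac{2}{3})} - \bar{X}^{(t+\frac{2}{3})} \right\|_F^2 .
\]
Since $\bar{X}^{(t+1)}$ and $\bar{X}^{(t+\frac{2}{3})}$ stack $n$ identical copies of the respective average vectors, the Frobenius norm on the left equals $n$ times the squared Euclidean norm $\mathbb{E}\|\bar{\bm{x}}^{(t+1)} - \bar{\bm{x}}^{(t+\frac{2}{3})}\|_2^2$; dividing through by $n$ yields exactly the claimed per-vector identity with prefactor $\frac{n-K}{K(n-1)}\frac{1}{n}$. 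This bookkeeping step—translating between the matrix (Frobenius) and vector (Euclidean) formulations of the bias—is the only thing requiring care here, and it is purely mechanical.

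Were Lemma~\ref{lem:broadcast_app}~(ii) not available, the real work would be the variance computation for sampling $K$ indices uniformly without replacement: writing $\bar{\bm{x}}^{(t+1)} - \bar{\bm{x}}^{(t+\frac{2}{3})} = \frac{1}{K}\sum_{i=1}^n \mathds{1}_{i \in \mathcal{S}^{(t)}}(\bm{x}_i^{(t+\frac{2}{3})} - \bar{\bm{x}}^{(t+\frac{2}{3})})$, expanding the squared norm, and using the marginal inclusion probability $\mathbb{P}(i \in \mathcal{S}^{(t)}) = K/n$ together with the pairwise probability $\mathbb{P}(i,j \in \mathcal{S}^{(t)}) = \frac{K(K-1)}{n(n-1)}$. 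The main obstacle there is handling the cross terms: after collecting diagonal and off-diagonal contributions, one recognizes that the off-diagonal sum reconstructs $\|\sum_i (\bm{x}_i^{(t+\frac{2}{3})} - \bar{\bm{x}}^{(t+\frac{2}{3})})\|_2^2 = 0$ by definition of the average, so it vanishes and leaves only the diagonal term scaled by $\frac{K}{n} - \frac{K(K-1)}{n(n-1)} = \frac{K(n-K)}{n(n-1)}$, which after the $1/K^2$ prefactor produces the stated constant. Since this is precisely the content of Lemma~\ref{lem:broadcast_app}~(ii), in the present proof it suffices to cite it and perform the Frobenius-to-Euclidean rescaling, together with the trivial $t \notin \mathcal{H}$ case.
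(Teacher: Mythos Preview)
Your proposal is correct and follows essentially the same approach as the paper: a case split on $t \in \mathcal{H}$ versus $t \notin \mathcal{H}$, with the nontrivial case handled by invoking Lemma~\ref{lem:broadcast_app}~(ii) and converting the Frobenius-norm identity to the per-vector Euclidean form via the factor $1/n$. Your additional sketch of the underlying sampling-without-replacement variance calculation is exactly the content of the proof of Lemma~\ref{lem:broadcast_app}~(ii), so it is not needed here but is a correct summary of that argument.
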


\begin{proof}[Proof of Lemma~\ref{lem:sampling_error}] We have:
\begin{itemize}
    \item For D2D rounds ($t \notin \mathcal{H}$), by definition, $\bar{\bm{x}}^{(t+1)} = \bar{\bm{x}}^{(t+\frac{2}{3})}$.
    \item For D2S rounds ($t \in \mathcal{H}$), by Lemma~\ref{lem:broadcast_app} (\emph{ii}):
    \begin{align*}
        \mathbb{E} \left\| \bar{\bm{x}}^{(t+1)} - \bar{\bm{x}}^{(t+\frac{2}{3})} \right\|_2^2 = \frac{1}{n} \mathbb{E} \left\| \bar{X}^{(t+1)} - \bar{X}^{(t+\frac{2}{3})} \right\|_F^2 
        =
        \frac{n-K}{K(n-1)} \frac{1}{n} \mathbb{E} \left\| X^{(t+\frac{2}{3})} - \bar{X}^{(t+\frac{2}{3})} \right\|_F^2.
    \end{align*}
\end{itemize}
\end{proof}

\begin{lemma}[S2A: Disagreement Error] 
\label{consensus_sta}
For every $t \geq 0$,
\begin{align*}
    \left\| X^{(t+1)} - \bar{X}^{(t+1)} \right\|_F^2 
    =
    \begin{dcases}
        \left\| X^{(t+\frac{2}{3})} - \bar{X}^{(t+\frac{2}{3})} \right\|_F^2, & t \not \in \mathcal{H}; \\
        0, & t \in \mathcal{H},
    \end{dcases}
\end{align*}
where $\mathbb{E} \| X^{(t+\frac{2}{3})} - \bar{X}^{(t+\frac{2}{3})} \|_F^2$ is the D2D disagreement error already bounded in Lemma~\ref{consensus_intra_inter}.
\end{lemma}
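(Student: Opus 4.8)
The plan is to prove the identity by splitting on whether $t$ is a server-aggregation round, exactly mirroring the companion S2S disagreement lemma (Lemma~\ref{lem:sampling:consensus_intra_inter}). In each of the two cases, the result is read off directly from the algebraic properties of the operator that governs the transition from $X^{(t+\frac{2}{3})}$ to $X^{(t+1)}$, so no new estimation is required; the work reduces to invoking the already-established structure of $W_{\text{S2A}}^{(t)}$ from Lemma~\ref{lem:broadcast_app}. I note at the outset that the claimed equality is \emph{deterministic} (no expectation), reflecting that the S2A broadcast enforces exact consensus for \emph{every} realization of the sampled set $\mathcal{S}^{(t)}$, not merely in expectation.

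For the D2D rounds ($t \notin \mathcal{H}$), the update is $X^{(t+1)} = X^{(t+\frac{2}{3})}$ by definition (Algorithm~\ref{alg:sta_mat_app}), and therefore also $\bar{X}^{(t+1)} = X^{(t+1)}\Pi = X^{(t+\frac{2}{3})}\Pi = \bar{X}^{(t+\frac{2}{3})}$; the two disagreement quantities coincide and the first branch follows trivially. For the D2S rounds ($t \in \mathcal{H}$), I invoke Lemma~\ref{lem:broadcast_app}(iv): since $W_{\text{S2A}}^{(t)}$ is column-stochastic with the rank-one structure $W_{\text{S2A}}^{(t)}\Pi = W_{\text{S2A}}^{(t)}$ (property (i)), we have $\bar{X}^{(t+1)} = X^{(t+\frac{2}{3})} W_{\text{S2A}}^{(t)} \Pi = X^{(t+\frac{2}{3})} W_{\text{S2A}}^{(t)} = X^{(t+1)}$, so $X^{(t+1)} - \bar{X}^{(t+1)} = 0$ columnwise and the Frobenius norm vanishes. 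Intuitively, after the broadcast all columns of $X^{(t+1)}$ equal the same vector $\hat{\bm{x}}^{(t+1)}$, so the matrix already equals its own column average.

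The only conceptual content—and thus the ``hard part,'' though it is light here—is the identity $W_{\text{S2A}}^{(t)}\Pi = W_{\text{S2A}}^{(t)}$, i.e.\ the fact that S2A is column-stochastic but \emph{not} row-stochastic; this is precisely what distinguishes it from the S2S operator and is exactly what trades the residual disagreement of S2S for the broadcast-induced bias quantified in Lemma~\ref{lem:broadcast_app}(ii). Since that property is already proved in Lemma~\ref{lem:broadcast_app}, the present lemma is a direct corollary, and the remaining bound on the D2D disagreement $\| X^{(t+\frac{2}{3})} - \bar{X}^{(t+\frac{2}{3})} \|_F^2$ appearing in the first branch is supplied by Lemma~\ref{consensus_intra_inter}. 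I would conclude simply by assembling the two cases into the stated piecewise identity.
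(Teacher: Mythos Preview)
Your proposal is correct and follows essentially the same approach as the paper's proof: split on whether $t\in\mathcal{H}$, use the definition $X^{(t+1)}=X^{(t+\frac{2}{3})}$ for D2D rounds, and invoke Lemma~\ref{lem:broadcast_app}(iv) (equivalently the identity $W_{\text{S2A}}^{(t)}\Pi=W_{\text{S2A}}^{(t)}$) for D2S rounds to conclude $X^{(t+1)}=\bar{X}^{(t+1)}$. Your added commentary on the rank-one structure and the deterministic (per-realization) nature of the consensus is accurate and simply elaborates on what the paper states more tersely.
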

\begin{proof}[Proof of Lemma~\ref{consensus_sta}] We have:
\begin{itemize}
    \item For D2D rounds ($t \notin \mathcal{H}$), by definition, $X^{(t+1)} = X^{(t+\frac{2}{3})}$ and $\bar{X}^{(t+1)} = \bar{X}^{(t+\frac{2}{3})}$.
    
    \item For D2S rounds ($t \in \mathcal{H}$), by Lemma~\ref{lem:broadcast_app} (\emph{iv}), $X^{(t+1)} = \bar{X}^{(t+1)}$.
\end{itemize}
\end{proof}

\subsection{Proof of Theorem~2}
\label{app:subsec:theorem_S2A}

\begin{proof}[Proof of Theorem~2 (Convex Objectives)]
Combine Lemmas \ref{lem:descent}, \ref{consensus_intra_inter}, \ref{lem:sampling_error}, and~\ref{consensus_sta}:
\begin{itemize}
    \item If $t \in \mathcal{H}$:
    \begin{align*}
        \mathbb{E} \left\| \bar{\bm{x}}^{(t+1)} - \bm{x}^* \right\|_2^2 
        &\leq 
        \mathbb{E} \left\| \bar{\bm{x}}^{(t)} - \bm{x}^* \right\|_2^2 
        - \eta \mathbb{E} \left( f(\bar{\bm{x}}^{(t)}) - f^* \right) 
        + \frac{\eta^{2} \bar{\sigma}^2}{n} \\
        &\quad 
        + \left[ \frac{3\eta L}{2}  + \frac{(n-K)}{ K(n-1)} \left( 1 - \frac{p}{4} \right) \right] \Xi_{\text{intra}}^{(t)} \\
        &\quad 
        + \left[ \frac{3\eta L}{2} + \frac{(n-K)}{ K(n-1)} (1+\rho) \right] \Xi_{\text{inter}}^{(t)} \\
        &\quad 
        + \frac{(n-K)}{ K(n-1)} \frac{6 \eta^{2} \bar{\zeta}_{\text{intra}}^2}{p}
        + \frac{(n-K)}{ K(n-1)} (1+\rho^{-1}) \eta^2 \bar{\zeta}_{\text{inter}}^2.
    \end{align*}
    \item If $t \not \in \mathcal{H}$:
    \begin{align*}
        \mathbb{E} \left\| \bar{\bm{x}}^{(t+1)} - \bm{x}^* \right\|_2^2 
        &\leq 
        \mathbb{E} \left\| \bar{\bm{x}}^{(t)} - \bm{x}^* \right\|_2^2 - \eta \mathbb{E} \left( f(\bar{\bm{x}}^{(t)}) - f^* \right) 
        + \frac{\eta^{2} \bar{\sigma}^2}{n} \\
        &\quad
        + \frac{3 \eta L}{2}  \Xi_{\text{intra}}^{(t)}
        + \frac{3 \eta L}{2} \eta \Xi_{\text{inter}}^{(t)}.
    \end{align*}
\end{itemize}

Apply Lemma~\ref{lem:convergence_recursion} with 
$r^{(t)} \coloneqq \mathbb{E} \| \bar{\bm{x}}^{(t)} - \bm{x}^* \|_2^2$, 
$\Delta^{(t)} \coloneqq \mathbb{E} (f(\bar{\bm{x}}^{(t)}) - f^*)$, 
$a \coloneqq \frac{3\eta L}{2} + \frac{(n-K)}{ K(n-1)} \left( 1 - \frac{p}{4} \right)$, \\
$b \coloneqq \frac{3\eta L}{2}  + \frac{(n-K)}{ K(n-1)} (1+\rho)$, 
$c \coloneqq \frac{n - K}{K (n - 1)} \frac{6 \eta^2 \bar{\zeta}_{\text{intra}}^2}{p}$, 
$d \coloneqq \frac{n - K}{K (n - 1)} (1 + \rho^{-1}) \eta^2 \bar{\zeta}_{\text{inter}}^2$, and 
$e \coloneqq \frac{3\eta L}{2}$:
\begin{align*}
    \frac{1}{T+1} \sum_{t=0}^{T} \mathbb{E} \left( f(\bar{\bm{x}}^{(t)}) - f^\star \right) 
    &\leq 
    \frac{\| \bm{x}^{(0)} - \bm{x}^\star \|^2}{\eta (T+1)} + \frac{\eta \bar{\sigma}^2}{n} +  \frac{2 \eta L + \frac{n-K}{K(n-1)}\frac{1 - \frac{p}{4}}{H}}{\eta} \bar{\Xi}_{\text{intra}}^{(T+1)} +  \frac{2 \eta L + \frac{n-K}{K(n-1)} \frac{1+\rho}{H}}{\eta} \bar{\Xi}_{\text{inter}}^{(T+1)} \\
    &\quad 
    + \frac{n - K}{K (n - 1)} \frac{6 \eta \bar{\zeta}_{\text{intra}}^2}{H p} + \frac{n - K}{K (n - 1)} \frac{1 + \rho^{-1}}{H} \eta \bar{\zeta}_{\text{inter}}^2.
\end{align*}

\emph{For the intra-component disagreement}, combine Lemmas~\ref{consensus_intra_inter} and~\ref{consensus_sta}:
\begin{align*}
    \Xi_{\text{intra}}^{(t)} \leq 
    \begin{dcases}
    0, & t \in \mathcal{H}, \\
    \left( 1 - \frac{p}{4} \right) \Xi_{\text{intra}}^{(t-1)} + \frac{6 \eta^2 \bar{\zeta}_{\text{intra}}^2}{p}, & t \notin \mathcal{H}.
    \end{dcases}
\end{align*}
For the recursion, apply Lemma~\ref{prop:recursion} with $a_1 = b_1 = 0$, $a_2 = 1 - \frac{p}{4} < 1$ and $b_2 = \frac{6 \eta^2 \bar{\zeta}_{\text{intra}}^2}{p}$:
\begin{align*}
    \bar{\Xi}_{\text{intra}}^{(T+1)}
    \leq
    \frac{24 \eta^2 \bar{\zeta}_{\text{intra}}^2 }{p^2} \left( \frac{H-1}{H} + \frac{H-1}{T+1} \right)
    \leq \frac{48 \eta^2 \bar{\zeta}_{\text{intra}}^2 }{p^2},
\end{align*}
where, to simplify the bound, we used that for $T\geq H-1$, $\frac{1}{T+1}\leq\frac{1}{H}$, and that $\frac{H-1}{H} < 1$.

\emph{For the inter-component disagreement}, combine Lemmas~\ref{consensus_intra_inter} and~\ref{consensus_sta}, with $\rho>0$:
\begin{align*}
    \Xi_{\text{inter}}^{(t)} \leq 
    \begin{dcases}
    0, & t \in \mathcal{H}, \\
    (1 + \rho) \Xi_{\text{inter}}^{(t-1)} + (1 + \rho^{-1}) \eta^2 \bar{\zeta}_{\text{inter}}^2, & t \notin \mathcal{H}.
    \end{dcases}
\end{align*}
For the recursion, apply Lemma~\ref{prop:recursion} with $a_1 = b_1 = 0$, $a_2 = 1 + \rho > 1$ and $b_2 = (1 + \rho^{-1}) \eta^2 \bar{\zeta}_{\text{inter}}^2$:
\begin{align*}
    \bar{\Xi}_{\text{inter}}^{(T+1)}
    &\leq   
    \eta^2 \bar{\zeta}_{\text{inter}}^2 (1 + \rho^{-1}) \frac{(1 + \rho)^H - (1 + \rho) H + H - 1}{\rho^2} 
    \left(\frac{1}{H} + \frac{1}{T+1} \right) \\
    &\leq 4 \eta^2 \bar{\zeta}_{\text{inter}}^2 \left( H(H-1) + \frac{H^2(H-1)}{T+1} \right) \\
    &\leq 8 \eta^2 \bar{\zeta}_{\text{inter}}^2 H(H-1).
\end{align*}
In the second inequality, we chose $\rho = \frac{2}{H}$, such that $F_H(\rho) \coloneqq \frac{(1 + \rho)^H - (1 + \rho) H + H - 1}{\rho^2} = \sum_{k=2}^{H} \binom{H}{k} \rho^{k-2} \leq 2 H (H-1)$, \\ $(1 + \rho^{-1}) = 1 + \frac{H}{2} \leq 2H$, $(1 + \rho) = 1 + \frac{2}{H} \leq 3$, $(1 + \rho^{-1}) F_H(\rho) \leq 4H^2(H-1)$, and $(1 + \rho)(1 + \rho^{-1}) F_H(\rho) \leq 12H^2(H-1)$.

In the last inequality, to simplify the bound, we used again that for $T\geq H-1$, $\frac{1}{T+1}\leq\frac{1}{H}$.

Replace $\bar{\Xi}_{\text{intra}}^{(T+1)}$ and $\bar{\Xi}_{\text{inter}}^{(T+1)}$ in the bound:
\begin{align*}
    \frac{1}{T+1} \sum_{t=0}^{T} \mathbb{E} \left( f(\bar{\bm{x}}^{(t)}) - f^\star \right) 
    &\leq 
    \frac{\| \bm{x}^{(0)} - \bm{x}^\star \|^2}{\eta (T+1)} + \frac{\eta \bar{\sigma}^2}{n} \\
    &\quad
    + \frac{96 \eta^2 L \bar{\zeta}_{\text{intra}}^2 }{p^2} 
    + \frac{n-K}{K(n-1)}\left( 1 - \frac{p}{4} \right) \frac{48 \eta \bar{\zeta}_{\text{intra}}^2 }{p^2} \left( \frac{H-1}{H^2} \right) 
    + \frac{n - K}{K (n - 1)} \frac{6 \eta \bar{\zeta}_{\text{intra}}^2}{H p}
    \\
    &\quad
    + 16 \eta^2 L \bar{\zeta}_{\text{inter}}^2 H(H-1) 
    + \frac{n-K}{K(n-1)} 12 \eta \bar{\zeta}_{\text{inter}}^2 \left(H-1 + \frac{H-1}{H} \right) 
    + \frac{n - K}{K (n - 1)} 2 \eta \bar{\zeta}_{\text{inter}}^2.
\end{align*}

To simplify the bound, use that $p \leq 1$, therefore $(1 - \frac{p}{4})<1$, and that $\frac{H-1}{H} \leq 1$:
\begin{align*} 
\frac{1}{T+1} \sum_{t=0}^{T} \mathbb{E} \left( f(\bar{\bm{x}}^{(t)}) - f^\star \right) 
&\leq \frac{\| \bm{x}^{(0)} - \bm{x}^\star \|^2}{\eta (T+1)} 
+ \frac{\eta \bar{\sigma}^2}{n}\\
&\quad + \frac{96 \eta^2 L \bar{\zeta}_{\text{intra}}^2}{p^2}
+ \frac{n-K}{K(n-1)} \frac{54 \eta \bar{\zeta}_{\text{intra}}^2}{H p^2} 
+ 16 \eta^2 L H^2 \bar{\zeta}_{\text{inter}}^2 
+ \frac{n-K}{K(n-1)} 26 \eta H \bar{\zeta}_{\text{inter}}^2.
\end{align*}

Finally, apply~\citep[Lemmas~16 and 17]{koloskovaUnifiedTheoryDecentralized2020} with 
$r_t=\mathbb{E} || \bar{\bm{x}}^{(t)} - \bm{x}^* ||^2_2$,
$e_t=\mathbb{E} (f(\bar{\bm{x}}^{(t)}) - f^\star)$, \\
$a=0$,
$b=1$,
$c=\frac{\bar{\sigma}^2}{n} + \frac{n-K}{K(n-1)} 54 \frac{\bar{\zeta}_{\text{intra}}^2}{H p^2} + \frac{n-K}{K(n-1)} 26 H \bar{\zeta}_{\text{inter}}^2$,
$d=\frac{8L}{p}$, and 
$e= \frac{96 L \bar{\zeta}_{\text{intra}}^2}{p^2} + 16 L H^2 \bar{\zeta}_{\text{inter}}^2$.
\end{proof}

\begin{proof}[Proof of Theorem~1 (Non-Convex Objectives)]
Combine Lemmas \ref{lem:descent_nc}, \ref{consensus_intra_inter}, \ref{lem:sampling_error}, and~\ref{consensus_sta}:
\begin{itemize}
    \item If $t \in \mathcal{H}$:
    \begin{align*}
        &\mathbb{E} [f(\bar{\bm{x}}^{(t+1)})] \leq \\
        &\leq
        \mathbb{E} [f(\bar{\bm{x}}^{(t)})] - \frac{\eta}{4} \mathbb{E}\left\| \nabla f(\bar{\bm{x}}^{(t)}) \right\|_2^2 + \frac{\eta^2 L \bar{\sigma}^2}{2n} + \eta L^2 \frac{1}{n} \mathbb{E} \left\| X^{(t)} -  \bar{X}^{(t)} \right\|_F^2 + \frac{L}{2} \mathbb{E} \left\| \bar{\bm{x}}^{(t+1)} - \bar{\bm{x}}^{(t+\frac{2}{3})} \right\|_2^2  \\
        &\leq
        \mathbb{E} [f(\bar{\bm{x}}^{(t)})] - \frac{\eta}{4} \mathbb{E}\left\| \nabla f(\bar{\bm{x}}^{(t)}) \right\|_2^2 + \frac{\eta^2 L \bar{\sigma}^2}{2n} + \eta L^2 \frac{1}{n} \mathbb{E} \left\| X^{(t)} -  \bar{X}^{(t)} \right\|_F^2 + \frac{L}{2} \frac{(n-K)}{K(n-1)} \frac{1}{n} \mathbb{E} \left\|X^{(t+\frac{2}{3})} - \bar{X}^{(t+\frac{2}{3})}\right\|_F^2 \\
        &\leq
        \mathbb{E} [f(\bar{\bm{x}}^{(t)})] - \frac{\eta}{4} \mathbb{E}\left\| \nabla f(\bar{\bm{x}}^{(t)}) \right\|_2^2 + \frac{\eta^2 L \bar{\sigma}^2}{2n} 
        + \frac{L}{2} \left[ 2\eta L  +  \frac{(n-K)}{K(n-1)} \left( 1 - \frac{p}{4} \right) \right] \Xi_{\text{intra}}^{(t)} \\
        & \quad + \frac{L}{2} \left[ 2 \eta L 
        + \frac{(n-K)}{K(n-1)} (1+\rho) \right] \Xi_{\text{inter}}^{(t)} 
        + \frac{(n-K)}{ K(n-1)} \frac{3 \eta^{2} L \bar{\zeta}_{\text{intra}}^2}{p}
        + \frac{(n-K)}{ K(n-1)} (1+\rho^{-1}) \frac{L \eta^2 \bar{\zeta}_{\text{inter}}^2}{2} .
    \end{align*}
    \item If $t \not \in \mathcal{H}$:
    \begin{align*}
        \mathbb{E} [f(\bar{\bm{x}}^{(t+1)})]
        &\leq
        \mathbb{E} [f(\bar{\bm{x}}^{(t)})] - \frac{\eta}{4} \mathbb{E}\left\| \nabla f(\bar{\bm{x}}^{(t)}) \right\|_2^2 + \frac{\eta^2 L \bar{\sigma}^2}{2n} + \eta L^2 \Xi_{\text{intra}}^{(t)} + \eta L^2 \Xi_{\text{inter}}^{(t)}.
    \end{align*}
\end{itemize}

Apply Lemma~\ref{lem:convergence_recursion} with 
$r^{(t)} = \mathbb{E} [f(\bar{\bm{x}}^{(t)})]$, 
$\Delta^{(t)} = \frac{1}{4} \mathbb{E} \| \nabla f(\bar{\bm{x}}^{(t)}) \|_2^2$,  
$a = \frac{L}{2} \left[ 2\eta L  +  \frac{(n-K)}{K(n-1)} \left( 1 - \frac{p}{4} \right) \right]$, \\ 
$b = \frac{L}{2} \left[ 2 \eta L + \frac{(n-K)}{K(n-1)} (1+\rho) \right]$,  
$c = \frac{(n-K)}{ K(n-1)} \frac{3 \eta^{2} L \bar{\zeta}_{\text{intra}}^2}{p}$,  
$d = \frac{(n-K)}{ K(n-1)} (1+\rho^{-1}) \frac{L \eta^2 \bar{\zeta}_{\text{inter}}^2}{2}$, and
$e = \eta L^2$:
\begin{align*}
    \frac{1}{T+1} \sum_{t=0}^{T} \mathbb{E} \left\| \nabla f(\bar{\bm{x}}^{(t)}) \right\|_2^2
    &\leq \frac{4(f(\bar{\bm{x}}^{(0)}) - f^\star)}{\eta (T+1)} + \frac{2 \eta L \bar{\sigma}^2}{n} \\
    &\quad + \frac{2 L}{\eta} \left[ 2\eta L  +  \frac{(n-K)}{K(n-1)} \frac{1 - \frac{p}{4}}{H} \right]  \bar{\Xi}_{\text{intra}}^{(T+1)} + \frac{2 L}{\eta} \left[ 2 \eta L 
    + \frac{(n-K)}{K(n-1)} \frac{1+\rho}{H} \right] \bar{\Xi}_{\text{inter}}^{(T+1)} \\ 
    &\quad + \frac{(n-K)}{ K(n-1)} \frac{12 \eta L \bar{\zeta}_{\text{intra}}^2}{H p} + \frac{(n-K)}{ K(n-1)} \frac{1+\rho^{-1}}{H} 2 \eta L  \bar{\zeta}_{\text{inter}}^2.
\end{align*}

Replace the values $\bar{\Xi}_{\text{intra}}^{(T+1)} \leq \frac{48 \eta^2 \bar{\zeta}_{\text{intra}}^2 }{p^2}$, $\bar{\Xi}_{\text{inter}}^{(T+1)} \leq 8 \eta^2 \bar{\zeta}_{\text{inter}}^2 H(H-1)$, $1 + \rho \leq 3$, $1 + \rho^{-1} \leq 2H$ found previously, \\
and simplify again using $p \leq 1$, therefore $(1 - \frac{p}{4})<1$, and $\frac{H-1}{H} \leq 1$:
\begin{align*}
    \frac{1}{T+1} \sum_{t=0}^{T} \mathbb{E} \left\| \nabla f(\bar{\bm{x}}^{(t)}) \right\|_2^2
    &\leq \frac{4(f(\bar{\bm{x}}^{(0)}) - f^\star)}{\eta (T+1)} + \frac{2 \eta L \bar{\sigma}^2}{n} \\
    &\quad + \frac{192 \eta^2 L^2 \bar{\zeta}_{\text{intra}}^2}{p^2} 
    + \frac{n-K}{K(n-1)} \frac{108 \eta L \bar{\zeta}_{\text{intra}}^2}{H p^2}
    + 32 \eta^2 L^2 H^2 \bar{\zeta}_{\text{inter}}^2
    + \frac{n-K}{K(n-1)} 52 \eta L H \bar{\zeta}_{\text{inter}}^2.
\end{align*}

Finally, apply~\citep[Lemmas~16 and 17]{koloskovaUnifiedTheoryDecentralized2020} with 
$r^{(t)} = \mathbb{E} [f(\bar{\bm{x}}^{(t)})]$, 
$\Delta^{(t)} = \mathbb{E} \| \nabla f(\bar{\bm{x}}^{(t)}) \|_2^2$, \\
$a=0$,
$b=1$,
$c=\frac{2L \bar{\sigma}^2}{n} + \frac{n-K}{K(n-1)} \frac{108 L \bar{\zeta}_{\text{intra}}^2}{H p^2} + \frac{n-K}{K(n-1)} 52 L H \bar{\zeta}_{\text{inter}}^2$,
$d=\frac{8L}{p}$, and 
$e= \frac{192 L^2 \bar{\zeta}_{\text{intra}}^2}{p^2} + 32 L^2 H^2 \bar{\zeta}_{\text{inter}}^2$.
\end{proof}

\newpage
\section{Additional Theoretical Results}
\label{sec:additional_theory}

\subsection{Mixing Parameter for Multi-Component Communication Graphs}
\label{app:sec_mixing}

\begin{lemma}[Mixing parameter for a block-diagonal communication matrix]\label{lem:block-mixing}
Let $W \in \mathbb{R}^{n \times n}$
be a block-diagonal matrix with $C$ blocks, where each block $W_c \in \mathbb{R}^{n_c \times n_c}$ corresponds to the D2D mixing matrix of component $c$.

For each $c$, define the local averaging projector $\Pi_c \coloneqq \frac{1}{n_c} \bm{1} \bm{1}^\top \in \mathbb{R}^{n_c \times n_c}$ and the local mixing parameter $p_c \in (0, 1]$ such that:
\begin{align*}
    \norm{W_c-\Pi_c}_F^2 
    \leq (1-p_c) \norm{I_{n_{c}} -\Pi_c}_F^2.
\end{align*}
For fixed $W_c$, one can take $p_c = 1 - \lambda_2(W_c^\top W_c)$~\cite{boydRandomizedGossipAlgorithms2006}. 

Then, the matrix $W$ satisfies:
\begin{align*}
    \norm{W-\Pi_C}_F^2 
    \leq (1-p) \norm{I_{n} -\Pi_C}_F^2,
\end{align*}
with mixing parameter $ p 
    \geq 
    \frac{\sum_{c=1}^{C} p_c (n_c-1)}{\sum_{c=1}^{C} (n_c - 1)}$.
\end{lemma}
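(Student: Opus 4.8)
The plan is to exploit the block-diagonal structure throughout, reducing the global statement to a per-component bound and a single rearrangement. First I would observe that, under the given partition into components, both $W-\Pi_C$ and $I_n-\Pi_C$ are block-diagonal: $W$ is block-diagonal by hypothesis, and $\Pi_C$ is supported only on index pairs $(i,j)$ lying in the same component, with its $c$-th diagonal block equal to $\Pi_c=\frac{1}{n_c}\mathbf{1}\mathbf{1}^\top\in\mathbb{R}^{n_c\times n_c}$. Hence the off-diagonal blocks of both matrices vanish, the $c$-th diagonal blocks are $W_c-\Pi_c$ and $I_{n_c}-\Pi_c$, and the squared Frobenius norm splits additively over components:
\[
\|W-\Pi_C\|_F^2=\sum_{c=1}^{C}\|W_c-\Pi_c\|_F^2,
\qquad
\|I_n-\Pi_C\|_F^2=\sum_{c=1}^{C}\|I_{n_c}-\Pi_c\|_F^2.
\]

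Next I would evaluate the normalizing quantity. Since $\Pi_c$ is the orthogonal projector onto $\mathrm{span}(\mathbf{1})$, the matrix $I_{n_c}-\Pi_c$ is the orthogonal projector onto $\mathbf{1}^\perp$; being symmetric and idempotent, its squared Frobenius norm equals its trace, which equals its rank, so $\|I_{n_c}-\Pi_c\|_F^2=n_c-1$ and therefore $\|I_n-\Pi_C\|_F^2=\sum_{c=1}^{C}(n_c-1)$. I would then invoke the per-component hypothesis $\|W_c-\Pi_c\|_F^2\le(1-p_c)\|I_{n_c}-\Pi_c\|_F^2=(1-p_c)(n_c-1)$, sum over $c$, and factor out $\sum_c(n_c-1)$:
\[
\|W-\Pi_C\|_F^2
\le\sum_{c=1}^{C}(1-p_c)(n_c-1)
=\Bigl(1-\tfrac{\sum_{c}p_c(n_c-1)}{\sum_{c}(n_c-1)}\Bigr)\sum_{c=1}^{C}(n_c-1).
\]
This is exactly $(1-p^\star)\|I_n-\Pi_C\|_F^2$ with $p^\star=\frac{\sum_{c}p_c(n_c-1)}{\sum_{c}(n_c-1)}$, so the claimed contraction holds with this value of the mixing parameter.

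There is essentially no hard step here — the argument is bookkeeping plus the rank computation — but the one point to state carefully is the direction of the bound, i.e.\ why the conclusion reads $p\ge p^\star$ rather than $p=p^\star$. The derivation shows that the inequality $\|W-\Pi_C\|_F^2\le(1-p)\|I_n-\Pi_C\|_F^2$ is satisfied when $p=p^\star$; because a larger mixing parameter corresponds to a stronger (harder to satisfy) contraction, $p^\star$ is a \emph{guaranteed} value, and the best admissible mixing parameter of $W$ is at least $p^\star$ (strictly larger only if the per-component bounds are loose). I would close the proof by noting that one may therefore take $p=p^\star$ in Lemma~\ref{lem:spectral-mixing}, which is exactly the value quoted in the main text.
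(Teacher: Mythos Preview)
Your proof is correct and follows essentially the same approach as the paper: decompose the Frobenius norms block-wise using the block-diagonal structure of $W$ and $\Pi_C$, use $\|I_{n_c}-\Pi_c\|_F^2=n_c-1$, sum the per-component contraction bounds, and rearrange to identify the weighted-average mixing parameter. Your added justification for the rank computation and the discussion of the direction of the inequality $p\ge p^\star$ are welcome clarifications but do not change the argument.
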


\begin{proof}
Because $W$ and $\Pi_C$ are block-diagonal, the Frobenius norm  decomposes over components:
\begin{align*}
    \norm{W - \Pi_C}_F^2 
    = \sum_{c=1}^{C} \norm{W_c-\Pi_c}_F^2 
    \leq \sum_{c=1}^{C} (1-p_c) \norm{I_{n_c} -\Pi_c}_F^2
    = \sum_{c=1}^{C} (1-p_c) (n_c-1),
\end{align*}
where the rightmost equality uses $\|I_{n_c}-\Pi_c\|_F^2=n_c-1$.

For the same argument,
\begin{align*}
    \norm{I_n -\Pi_C}_F^2 = \sum_{c=1}^{C} \norm{I_{n_c}-\Pi_c}_F^2 = \sum_{c=1}^{C} (n_c - 1).
\end{align*}

Combining:
\begin{align*}
    p 
    \geq 
    1 - \frac{\norm{W - \Pi_C}_F^2}{\norm{I_n -\Pi_C}_F^2}
    \geq
    1 - \frac{\sum_{c=1}^{C} (1-p_c) (n_c-1)}{\sum_{c=1}^{C} (n_c - 1)}
    =
    \frac{\sum_{c=1}^{C} p_c (n_c-1)}{\sum_{c=1}^{C} (n_c - 1)}.
\end{align*}
\end{proof}

We observe that $p \!=\! \frac{\sum_{c=1}^{C} p_c (n_c-1)}{\sum_{c=1}^{C} (n_c - 1)} \!\geq\! p_{\min} \!\coloneqq\! \min\limits_{1\leq c \leq C} p_c \!=\! 1 - \lambda_{C+1}(W^\top W)$, \\where $\lambda_{C+1}(W^{\top}W)$ denotes the largest eigenvalue of~$W^{\top}W$
strictly below~1 .  

\subsection{Extension to Random Mixing Matrices}
\label{app:sec_mixing_random}

As we mentioned in Section~\ref{sec:analysis}, our theoretical results extend to random mixing matrices, following the approach in~\cite{koloskovaUnifiedTheoryDecentralized2020, barsRefinedConvergenceTopology2023}. At each round~$t$ of Algorithm~1, the mixing matrix $W^{(t)}$ is sampled from a distribution $\mathcal{W}^{(t)}$, independent of the iterates $\bm{x}^{(t)}$, and possibly time-varying.

To analyze convergence in this setting, we modify Lemma~\ref{lem:spectral-mixing} and Assumption~\ref{asm:heterogeneity} by taking expectation with respect to~$W$.
Specifically, $\mathbb{E}_{W} \| X (W - \Pi_C) \|_F^2 \leq (1 - p) \| X (I - \Pi_C) \|_F^2$
and
$\frac{1}{n} \sum_{i=1}^{n} \mathbb{E}_{W, \xi} \| \sum_{j=1}^{n} (W - \Pi_C)_{ij} \nabla F_j(\bm{x}, \xi) \|_2^2 \leq \bar{\zeta}_{\text{intra}}^2$.
Theorems~1 and~2 then hold under the assumption that the distributions $\mathcal{W}^{(0)}, \dots, \mathcal{W}^{(T)}$ satisfy these conditions. 

The proof follows exactly the same steps as in the deterministic case, by appropriately conditioning on the realization of the random mixing matrices and on the iterates~\cite{koloskovaUnifiedTheoryDecentralized2020, barsRefinedConvergenceTopology2023}.

\clearpage
\subsection{Iteration vs. Communication Complexity}
\label{sec:comm_complex}

While Section~5.3 compares S2S and S2A in terms of iteration complexity, we now analyze their total communication cost, measured as the number of messages required to reach a target accuracy~$\epsilon \geq 0$.

\paragraph{Per-round communication cost.}  
At each D2S round $t \in \mathcal{H}$, both primitives involve uplink (device-to-server) and downlink (server-to-device) communication. The number of messages exchanged per round is:
\begin{center}
\begin{tabular}{@{}lcc@{}}
\toprule
\textbf{Primitive} & \textbf{uplinks} & \textbf{downlinks}\\
\midrule
\textsc{S2S} & $K$ & $K$\\
\textsc{S2A} & $K$ & $n$\\
\bottomrule
\end{tabular}
\end{center}

In typical federated learning settings, where uplink cost dominates, both primitives incur a per-round communication cost of $K$ messages, making iteration complexity a reasonable proxy for communication cost, and our theoretical comparison of S2S and S2A in Section 5.3 remains valid.
However, when downlink cost is not negligible, S2S incurs a lower communication cost, saving $n-K$ downlink messages per server round, and we could keep this into account for the comparison.

\paragraph{Total communication cost.}
Given the iteration complexities $T_{\text{S2S}}(\epsilon)$ and $T_{\text{S2A}}(\epsilon)$ from Theorems~1 and~2, and defining the number of server rounds as $R \coloneqq  \lceil \frac{T}{H}\rceil$, the total number of messages exchanged by S2S and S2A to reach the accuracy~$\epsilon$ are:
\begin{align*}
\Gamma_{\text{S2S}}(\epsilon) &= 2K R_{\text{S2S}}(\epsilon)
 = \frac{2K}{H} T_{\text{S2S}}(\epsilon), \\
\Gamma_{\text{S2A}}(\epsilon) &= (K+n) R_{\text{S2A}}(\epsilon)
 = \frac{K+n}{H} T_{\text{S2A}}(\epsilon).
\end{align*}

The communication cost ratio is:
\begin{align*}
\frac{\Gamma_{\text{S2A}}(\epsilon)}{\Gamma_{\text{S2S}}(\epsilon)}
=
\frac{K+n}{2K}
\frac{T_{\text{S2A}}(\epsilon)}{T_{\text{S2S}}(\epsilon)}.
\end{align*}

Interestingly, the qualitative regimes follow those in Section~\ref{sec:regimes_theory}:
\begin{enumerate}[label=\emph{\textbf{R\arabic*.}},leftmargin=20pt]
\item \emph{$\bm{\bar{\zeta}}_{\text{\textbf{intra}}},\, \bm{\bar{\zeta}}_{\text{\textbf{inter}}}$ \textbf{are low:}} 
S2A converges faster than S2S for high sampling rates ($K/n\!\geq\!0.1$, Fig.~\ref{fig:bound_app}(a)), most server periods (Fig.~\ref{fig:bound_app}(e)), and higher D2D network connectivities ($p\!\geq\!0.3$, Fig.~\ref{fig:bound_app}(i)).

\item \emph{$\bm{\bar\zeta}_{\text{\textbf{inter}}}\!\bm{\ll}\!\bm{\bar\zeta}_{\text{\textbf{intra}}}\textbf{:}$}
S2S converges faster
for most sampling rates (Fig.~\ref{fig:bound_app}(b)), low server periods ($H\!\leq\!10$, Fig~\ref{fig:bound_app}(f)), and for most mixing parameters (Fig~\ref{fig:bound_app}(j));
S2A converges faster
otherwise.

\item \emph{$\bm{\bar\zeta}_{\text{\textbf{inter}}}$ \textbf{is high:}}
S2S converges faster for most values of $K/n$, $H$, and $p$, irrespective of
$\bar\zeta_{\text{intra}}$ (Figs.~\ref{fig:bound_app}(c,d,g,h,k,l)).
\end{enumerate}

\begin{figure*}[h]
  \centering
  
  \includegraphics[width=0.51\linewidth]{Figs/bound_legend.pdf}
  
  %======================== 1st big panel ========================%
  \fbox{%
  \begin{subfigure}[t]{0.31\textwidth}
    \centering
    
    % ---- first row ----
    \begin{subfigure}[t]{0.48\linewidth}
      \caption{\tiny $\bar{\zeta}_{\text{intra}} {=} \bar{\zeta}_{\text{inter}} {=} \frac{1}{10}$}
      \includegraphics[width=\linewidth]{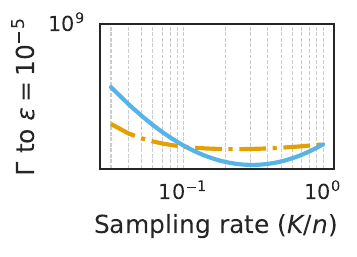}
    \end{subfigure}\hfill
    \begin{subfigure}[t]{0.48\linewidth}
      \caption{\tiny $\bar{\zeta}_{\text{intra}} {=} 1, \bar{\zeta}_{\text{inter}} {=} \frac{1}{10}$}
      \includegraphics[width=\linewidth]{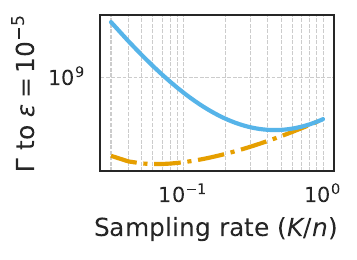}
    \end{subfigure}\\[4pt]
    % ---- second row ----
    \begin{subfigure}[t]{0.48\linewidth}
      \caption{\tiny $\bar{\zeta}_{\text{intra}} {=} \frac{1}{10}, \bar{\zeta}_{\text{inter}} {=} 1$}
      \includegraphics[width=\linewidth]{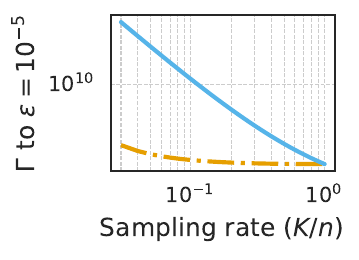}
    \end{subfigure}\hfill
    \begin{subfigure}[t]{0.48\linewidth}
      \caption{\tiny $\bar{\zeta}_{\text{intra}} {=} \bar{\zeta}_{\text{inter}} {=} 1$}
      \includegraphics[width=\linewidth]{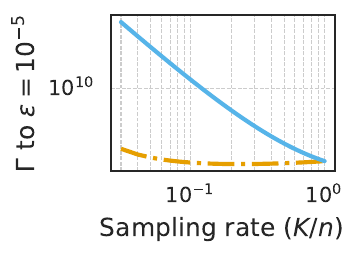}
    \end{subfigure}
  \end{subfigure}
  }
  \hfill
  %======================== 2nd big panel ========================%
  \fbox{%
  \begin{subfigure}[t]{0.31\textwidth}
    \centering
    \begin{subfigure}[t]{0.48\linewidth}
      \caption{\tiny $\bar{\zeta}_{\text{intra}} {=} \bar{\zeta}_{\text{inter}} {=} \frac{1}{10}$}
      \includegraphics[width=\linewidth]{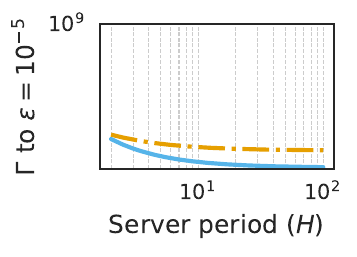}
    \end{subfigure}\hfill
    \begin{subfigure}[t]{0.48\linewidth}
      \caption{\tiny $\bar{\zeta}_{\text{intra}} {=} 1, \bar{\zeta}_{\text{inter}} {=} \frac{1}{10}$}
      \includegraphics[width=\linewidth]{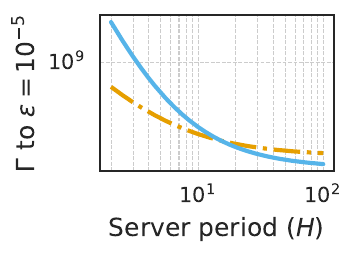}
    \end{subfigure}\\[4pt]
    \begin{subfigure}[t]{0.48\linewidth}
      \caption{\tiny $\bar{\zeta}_{\text{intra}} {=} \frac{1}{10}, \bar{\zeta}_{\text{inter}} {=} 1$}
      \includegraphics[width=\linewidth]{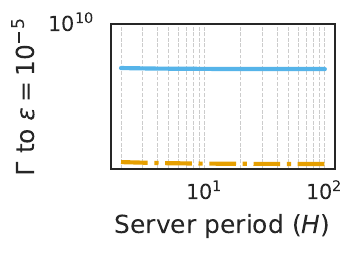}
    \end{subfigure}\hfill
    \begin{subfigure}[t]{0.48\linewidth}
      \caption{\tiny $\bar{\zeta}_{\text{intra}} {=} \bar{\zeta}_{\text{inter}} {=} 1$}
      \includegraphics[width=\linewidth]{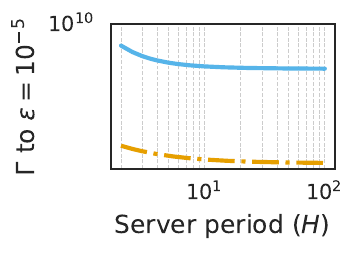}
    \end{subfigure}
  \end{subfigure}
  }
  \hfill
  %======================== 3rd big panel ========================%
  \fbox{%
  \begin{subfigure}[t]{0.31\textwidth}
    \centering
    \begin{subfigure}[t]{0.48\linewidth}
      \caption{\tiny $\bar{\zeta}_{\text{intra}} {=} \bar{\zeta}_{\text{inter}} {=} \frac{1}{10}$}
      \includegraphics[width=\linewidth]{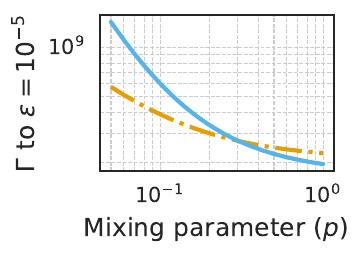}
    \end{subfigure}\hfill
    \begin{subfigure}[t]{0.48\linewidth}
      \caption{\tiny $\bar{\zeta}_{\text{intra}} {=} 1, \bar{\zeta}_{\text{inter}} {=} \frac{1}{10}$}
      \includegraphics[width=\linewidth]{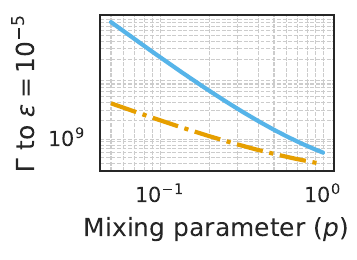}
    \end{subfigure}\\[4pt]
    \begin{subfigure}[t]{0.48\linewidth}
      \caption{\tiny $\bar{\zeta}_{\text{intra}} {=} \frac{1}{10}, \bar{\zeta}_{\text{inter}} {=} 1$}
      \includegraphics[width=\linewidth]{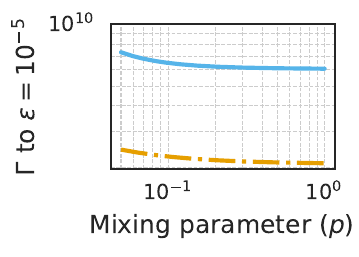}
    \end{subfigure}\hfill
    \begin{subfigure}[t]{0.48\linewidth}
      \caption{\tiny $\bar{\zeta}_{\text{intra}} {=} \bar{\zeta}_{\text{inter}} {=} 1$}
      \includegraphics[width=\linewidth]{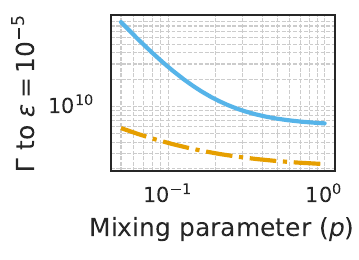}
    \end{subfigure}
  \end{subfigure}
  }

   \caption{Communication costs $\Gamma_{\text{S2S}}(\epsilon)$ and $\Gamma_{\text{S2A}}(\epsilon)$ for $n\!=\!100$, $\epsilon=10^{-5}$, $L\!=\!f_0\!=\!1$, $\bar{\sigma}\!=\!0$. 
           Left panel: Sampling rate ($K/n$) at $H\!=\!5$, $p\!=\!1$. Center panel: Server period ($H$) at $K/n\!=\!0.2$, $p\!=\!1$.
           Right panel: Mixing parameter~($p$) at $K/n\!=\!0.2$, $H\!=\!5$.}
  \label{fig:bound_app}
\end{figure*}

\clearpage
\section{Additional Experimental Results}
\label{app:sec:experiments}

This appendix complements Section~6 by providing further details on the experimental setup, supporting the main results (via Tables~\ref{tab:aggregate_K}--\ref{tab:cifar_H}), and presenting new experiments (Figs.~\ref{fig:app:mnist_heatmap}--\ref{fig:track}), including heatmap summaries on MNIST and CIFAR-10 (Figs.~\ref{fig:app:mnist_heatmap}--\ref{fig:app:cifar_heatmap}), a deeper analysis of outlier behaviors (Figs.~\ref{fig:app:cifar_rounds}--\ref{fig:app:cifar_rounds_H}), comparison of fixed and dynamic topologies (Fig.~\ref{fig:fixed_vs_dynamic}), different server optimizers (Fig.~\ref{fig:fedavg_vs_fedavgm}), additional CIFAR-100 runs (Fig.~\ref{fig:app:cifar100}), and empirical measurements of bias and disagreement errors (Fig.~\ref{fig:track}).

\paragraph{Detailed Experimental Setup.} 
In line with prior work on semi-decentralized federated learning~\cite{linSemiDecentralizedFederatedLearning2021, guoHybridLocalSGD2021, chenTamingSubnetDriftD2DEnabled2024}, we benchmark the S2S and S2A primitives on two standard image classification datasets: MNIST~\cite{lecun-mnisthandwrittendigit-2010} and CIFAR-10~\cite{krizhevsky2009learning}. These datasets offer a controlled and reproducible testbed for evaluating algorithmic performance in federated settings.

For MNIST, we use a single-layer linear classifier with $7,\!850$ trainable parameters. This model is computationally lightweight and supports efficient training for our $9,\!600$ runs on an Intel(R) Xeon(R) E-2224G CPU @ 3.50GHz.

For CIFAR-10, we adopt a standard convolutional neural network consisting of two $5\!\times\!5$ convolutional layers. The first layer maps the $3$-channel RGB input to $32$ channels, followed by ReLU activation and $2\!\times\!2$ max pooling. The second layer outputs $64$ channels and is again followed by ReLU and max pooling. The resulting features are flattened and passed through a fully connected layer with $2048$ ReLU units, followed by a final linear classifier with 10 logits. 

For CIFAR-100, we consider a deeper convolutional neural network. The feature extractor consists of three convolutional blocks with $3\!\times\!3$ convolutions and ELU activations. The first block maps the $3$-channel RGB input to $128$ channels through two convolutional layers, followed by $2\!\times\!2$ max pooling. The second block increases the width to $256$ channels via two further $3\!\times\!3$ convolutions, again followed by max pooling and a dropout layer with rate $0.25$. The third block maps to $512$ channels with two $3\!\times\!3$ convolutions, a final max-pooling layer, and dropout $0.25$, yielding a $512\!\times\!2\!\times\!2$ feature map. The classifier flattens these features and applies a fully connected layer with $1024$ ELU units and dropout $0.5$, followed by a final linear layer producing $100$ logits. 

These model architectures are included in benchmark libraries such as FedML~\cite{heFedMLResearchLibrary2020} and has been widely used in prior semi-decentralized FL work---e.g., for evaluating S2A~\cite{linSemiDecentralizedFederatedLearning2021, guoHybridLocalSGD2021} and S2S~\cite{chenTamingSubnetDriftD2DEnabled2024}. 

Our experiments on CIFAR-10 and CIFAR-100 comprise $9,\!600$ training runs on a Linux server with four NVIDIA GeForce GTX 1080 Ti GPUs.

\paragraph{Detailed Comparison of Figures~\ref{fig:MNIST}--\ref{fig:CIFAR}.} Tables~\ref{tab:aggregate_K}--\ref{tab:cifar_H} complement the visual comparison in Figures~\ref{fig:MNIST}--\ref{fig:CIFAR} by quantifying the test accuracy gap between S2S and S2A across the four heterogeneity regimes and key parameters: sampling rate ($K/n$), server period ($H$), and D2D topology. Tables~\ref{tab:aggregate_K}–\ref{tab:aggregate_H} aggregate the average performance gap over both datasets and all topologies, reporting mean and maximum accuracy gaps, and their statistical significance. Tables~\ref{tab:mnist_K}--\ref{tab:cifar_H} report detailed results by dataset, heterogeneity regime, and key parameter: for MNIST, Tables~\ref{tab:mnist_K} and~\ref{tab:mnist_H} vary sampling rate and server period, respectively; for CIFAR-10, the corresponding results are shown in Tables~\ref{tab:cifar_K} and~\ref{tab:cifar_H}. These tables serve to support the empirical trends discussed in Section~6 and to identify regimes where either primitive yields statistically significant improvements.

\paragraph{Heatmap Summaries.} Figures~\ref{fig:app:mnist_heatmap} and~\ref{fig:app:cifar_heatmap} extend Figure~\ref{fig:cifar_heatmaps} by reporting accuracy gaps on both MNIST and CIFAR-10 datasets for the ring topology, across all four heterogeneity regimes. 
Each heatmap reports the test accuracy difference (S2S minus S2A) after $T = 100$ rounds, with sampling rate ($K/n$) varying across columns and server period ($H$) varying across rows. Positive values favor S2S. 
The new panels in Figures~\ref{fig:app:mnist_heatmap}(a)--(d) and~\ref{fig:app:cifar_heatmap}(a),(d) reinforce the empirical regimes discussed in Section~6. While the trends largely follow theoretical predictions, a few configurations---e.g., $(K/n, H)\!=\!(0.2, 20)$ in Figure~\ref{fig:app:cifar_heatmap}(c), and $(0.2, 15)$ and $(0.2, 20)$ in Figure~\ref{fig:app:cifar_heatmap}(d)---depart from the expected behavior. These are examined in detail below.

\paragraph{Analysis of Outlier Cases.} 
Figures~\ref{fig:app:cifar_rounds} and~\ref{fig:app:cifar_rounds_H} report test accuracy over $T = 1000$ communication rounds on CIFAR-10 with $K/n\!=\!0.2$ and ring topology, focusing on the outlier configurations identified in Figure~\ref{fig:app:cifar_heatmap}(c)--(d).
Figure~\ref{fig:app:cifar_rounds} fixes $H\!=\!20$ and compares the two opposite heterogeneity regimes: (a) intra non-IID, inter IID; and (b) intra IID, inter non-IID. 
Figure~\ref{fig:app:cifar_rounds_H} fixes the regime to intra non-IID, inter non-IID and reports two configurations from Figure~\ref{fig:app:cifar_heatmap}(d) with $H\!\in\!\{15, 20\}$. Specifically:
\begin{itemize}
    \item In Figure~\ref{fig:app:cifar_rounds}(a), S2A performs comparably to or better than S2S when inter-component heterogeneity is IID, even under non-IID intra-component distributions (Regimes R1 and R2).
    \item In Figure~\ref{fig:app:cifar_rounds}(b), where inter-component heterogeneity is non-IID, S2A performs better in early rounds (explaining the accuracy gap at $T = 100$ for $(K/n, H)\!=\!(0.2, 20)$ in Figure~\ref{fig:app:cifar_heatmap}(c)), but its performance deteriorates over time, with periodic drops at each D2S round. S2S, in contrast, shows more stable convergence and outperforms S2A for $T > 100$ (Regime R3).
    \item Figure~\ref{fig:app:cifar_rounds_H}(a) reports the case $(K/n, H)\!=\!(0.2, 15)$ from Figure~\ref{fig:app:cifar_heatmap}(d), where S2A performs best at $T\!=\!100$ (+4.8 p.p.), but is eventually outperformed by S2S (+6 p.p. at $T\!=\!1000$), consistent with theoretical results under high inter heterogeneity.
    \item Similarly, Figure~\ref{fig:app:cifar_rounds_H}(b) reports the case $(K/n, H)\!=\!(0.2, 20)$ from Figure~\ref{fig:app:cifar_heatmap}(d), where S2A initially performs better than S2S (+6.9 p.p. at $T = 100$), but is eventually outperformed by S2S (+11 p.p. at $T = 1000$).
\end{itemize}

\paragraph{Dynamic Topologies.}
Figure~\ref{fig:fixed_vs_dynamic} compares fixed and time-varying D2D graphs on CIFAR-10 in Regime~R3 (intra IID, inter non-IID). In these experiments, we fix $K/n = 0.2$ and $H = 20$, and consider a fixed regular graph and a random regular graph with the same degree (4). Recall that in our analysis, D2D variability is fully captured by $p$ (Lemma~\ref{lem:spectral-mixing}); the random regular topology yields faster intra-component mixing (we measure $p_\text{random} \!\approx\! 0.8 \!\gg\! p_\text{fixed} \!\approx\! 0.2$). We observe that moving from a fixed to a dynamically switching topology improves the final test accuracy of both S2S and S2A (up to $+3.4$ and $+4.5$~p.p., respectively), while leaving the qualitative S2S/S2A regime unchanged. Specifically, the average S2S/S2A gap increases from $+8.58 \pm 0.32$~p.p.\ (fixed) to $+11.52 \pm 0.42$~p.p.\ (random), i.e., by $+2.94 \pm 0.20$~p.p. These experiments with dynamic topologies (random regular graphs) are consistent with our theory for time-varying graphs with randomly switching links (Appendix~D) and suggest that dynamic topologies benefit S2S more than S2A in Regime~R3.

\paragraph{Alternative Server Optimizers.}
Figure~\ref{fig:fedavg_vs_fedavgm} compares FedAvg and FedAvgM (with momentum $\beta = 0.9$) on CIFAR-10 with ring topology in Regime~R3, for $K/n = 0.2$ and $H = 20$. We observe that the mean S2S--S2A gap is essentially unchanged: $+8.10 \pm 0.37$~p.p.\ for FedAvg and $+8.07 \pm 0.35$~p.p.\ for FedAvgM, yielding a difference of $-0.03 \pm 0.10$~p.p. Thus, the empirical results suggest that introducing momentum does not significantly alter the relative performance between S2S and S2A. In the last $100$ rounds, however, FedAvgM+S2A exhibits smaller accuracy drops (about $20\%$ reduction) compared to FedAvg+S2A, indicating that momentum can help reduce the S2A accuracy drops under high inter-component heterogeneity, although without significantly changing its asymptotic accuracy gap to S2S.

\paragraph{Experiments on CIFAR-100.}
Figure~\ref{fig:app:cifar100} reports test accuracy on CIFAR-100 under Regimes~R2 and~R3 for $K/n = 0.2$, $H = 20$, and complete topology. In Regime~R2 (intra non-IID, inter IID), S2A outperforms S2S by approximately $1.9 \pm 0.02$~p.p.\ on average, consistent with the conclusion that the broadcast operator (S2A) is beneficial when inter-component heterogeneity is low. In Regime~R3 (intra IID, inter non-IID), the performance advantage switches in favor of S2S, which now outperforms S2A by approximately $13.6 \pm 1.0$~p.p.\ on average, confirming that the broadcast-induced bias of S2A becomes detrimental under high inter-component heterogeneity.

\paragraph{Bias and Disagreement Errors.}
To further validate our theoretical analysis and the S2S/S2A comparison in Section~\ref{subsec:comparison}, Figure~\ref{fig:track} tracks the bias and disagreement errors, as defined in Section~\ref{subsec:comparison}(i)--(ii), over $T = 1000$ rounds on CIFAR-10 (ring topology, Regime~R3, $K/n = 0.2$, $H = 20$). In detail:
\begin{itemize}
    \item Figure~\ref{fig:track}(a) reports the test loss over communication rounds;
    \item Figure~\ref{fig:track}(b) reports the disagreement error at D2D rounds;
    \item Figures~\ref{fig:track}(c)--(d) report the disagreement and bias errors at D2S rounds;
    \item Figures~\ref{fig:track}(e)--(f) report the empirical disagreement and bias ratios at D2S rounds.
\end{itemize}

Consistently with our analysis, for S2A we observe pronounced spikes in the bias error after each D2S round and (numerically) zero disagreement error, whereas for S2S the bias error is zero but a non-zero disagreement remains after the D2S step. The non-zero bias of S2A correlates with its performance degradation relative to S2S in Regime~R3, while the residual disagreement of S2S does not prevent it from achieving higher final accuracy in this regime. Finally, the empirical disagreement and bias ratios in Figs.~\ref{fig:track}(e)--(f) oscillate around the theoretical values predicted by Eqs.~\eqref{eq:deviation_sts}--\eqref{eq:deviation_sta}: for $n = 100$ and $K = 20$, we have a disagreement ratio of $\frac{n-K}{n-1} \approx 0.81$ and a bias ratio of $\frac{n-K}{K(n-1)} \approx 0.04$, reinforcing the consistency between our analysis and experimental results.

\newpage

\begin{table}[h]
\vspace{3cm}
\centering
\caption{Test accuracy gap (percentage-point difference) between S2S and S2A, aggregated from Figures~\ref{fig:MNIST} and~\ref{fig:CIFAR} over 12 configurations ($K/n\!\in\!\{0.2,0.4,0.6,0.8\}$ $\times$ 3 topologies) with fixed server period $H\!=\!5$. S2S/S2A/-- counts the number of configurations where each method outperforms the other (gaps below the standard error are denoted by ``--''). 
Positive gaps favor~S2S. We report the mean, standard error, maximum gap values, and $p$-values from a $t$-test over the 12 comparisons.}

\vspace{0.5em}
\begin{tabular}{llcccccc}
\toprule
\textbf{Dataset} & \textbf{Intra/Inter Regime}
                 & \textbf{S2S/S2A/--}
                 & \textbf{Gap (mean $\pm$ se)}
                 & \textbf{$p$-value}
                 & \textbf{Gap (max)}
                 & \textbf{($K/n$, D2D topology)} \\ 
\midrule
MNIST   & IID / IID          & 0/9/3 & $-0.01 \pm 0.00$ & 0.007 & $-0.05$ & (0.2, ring) \\
MNIST   & non-IID / IID     & 6/6/0  & $+0.00 \pm 0.03$ & 0.859 & $+0.25$ & (0.2, complete) \\
MNIST   & IID / non-IID      & 12/0/0 & $+0.91 \pm 0.22$ & 0.001 & $+2.37$ & (0.2, complete) \\
MNIST   & non-IID / non-IID  & 12/0/0 & $+0.86 \pm 0.18$ & $<\!0.001$ & $+1.96$ & (0.2, ring) \\
\addlinespace[0.25em]
CIFAR-10 & IID / IID         & 1/10/1 & $-0.31 \pm 0.08$ & 0.002 & $-0.97$ & (0.2, ring) \\
CIFAR-10 & non-IID / IID     & 9/3/0 & $+1.80 \pm 0.73$ & 0.028 & $+8.43$ & (0.2, ring) \\
CIFAR-10 & IID / non-IID     & 12/0/0 & $+1.11 \pm 0.28$ & 0.001 & $+3.41$ & (0.2, ring) \\
CIFAR-10 & non-IID / non-IID & 12/0/0 & $+2.87 \pm 0.55$ & $<\!0.001$ & $+7.01$ & (0.2, complete) \\
\bottomrule
\label{tab:aggregate_K}
\end{tabular}
\vspace{0.5em}
\end{table}
\hfill
\vspace{2cm}
\begin{table}[h]
\centering
\caption{Test accuracy gap (percentage‐point difference) between S2S and S2A, aggregated from Figures~\ref{fig:MNIST} and~\ref{fig:CIFAR} over 12 configurations ($H\!\in\!\{5,10,15,20\}$ $\times$ 3 topologies) with fixed sampling rate $K/n\!=\!0.2$.  
S2S/S2A/-- counts the number of configurations where each method outperforms the other.
Positive gaps favor S2S.
We report the mean, standard error, maximum gap values, and $p$-values from a $t$-test over the 12 comparisons.
}
\vspace{0.5em}
\begin{tabular}{llcccccc}
\toprule
\textbf{Dataset} & \textbf{Intra/Inter Regime}
                 & \textbf{S2S/S2A/--}
                 & \textbf{Gap (mean $\pm$ se)}
                 & \textbf{$p$-value}
                 & \textbf{Gap (max)}
                 & \textbf{($H$, D2D topology)} \\ 
\midrule
MNIST   & IID / IID          & 1/10/1 & $-0.02 \pm 0.01$ & 0.006 & $-0.06$ & (10, ring) \\
MNIST   & non-IID / IID      & 8/2/2  & $+0.17 \pm 0.09$ & 0.076 & $+0.84$ & (20, complete) \\
MNIST   & IID / non-IID      & 11/1/0 & $+1.62 \pm 0.24$ & $<\!0.001$ & $+2.35$ & (5, complete) \\
MNIST   & non-IID / non-IID  & 12/0/0 & $+2.01 \pm 0.18$ & $<\!0.001$ & $+3.00$ & (10, complete) \\
\addlinespace[0.25em]
CIFAR-10 & IID / IID         & 0/10/2 & $-0.40 \pm 0.09$ & $<\!0.001$ & $-0.97$ & (5, ring) \\
CIFAR-10 & non-IID / IID     & 7/5/0  & $+0.99 \pm 1.15$ & 0.411 & $+8.52$ & (5, ring) \\
CIFAR-10 & IID / non-IID     & 11/1/0 & $+2.44 \pm 0.42$ & $<\!0.001$ & $+4.42$ & (10, complete) \\
CIFAR-10 & non-IID / non-IID & 7/5/0  & $-0.26 \pm 1.15$ & 0.827 & $+7.01$ & (5, complete) \\
\bottomrule
\end{tabular}
\label{tab:aggregate_H}
\vspace{0.5em}
\end{table}

\begin{table}
\centering
\caption{Test accuracy gap (percentage‐point difference) between S2S and S2A on MNIST, reported from Figure~\ref{fig:MNIST} for varying sampling rates $K/n\!\in\!\{0.2,0.4,0.6,0.8\}$ with fixed server period $H\!=\!5$. Positive gaps favor S2S. Each row corresponds to a heterogeneity regime and sampling rate ($K/n$), and each column to a D2D topology.  
Each entry is annotated with the best strategy: S2S, S2A, or -- (gap below standard error).}
\begin{tabular}{lcccc}
\toprule
\textbf{Intra/Inter Regime} & \textbf{Sampling rate ($K/n$)} & \textbf{Complete} & \textbf{Grid} & \textbf{Ring} \\
\midrule
IID / IID         & 0.2 & $-0.01$ (S2A) & $-0.02$ (S2A) & $-0.05$ (S2A) \\
                  & 0.4 & -- & $-0.02$ (S2A) & $-0.04$ (S2A) \\
                  & 0.6 & -- & $-0.01$ (S2A) & $-0.02$ (S2A) \\
                  & 0.8 & -- & $-0.01$ (S2A) & $-0.01$ (S2A) \\
\addlinespace
non-IID / IID     & 0.2 & $+0.25$ (S2S) & $+0.17$ (S2S) & $+0.04$ (S2S) \\
                  & 0.4 & $+0.04$ (S2S) & $-0.06$ (S2A) & $-0.16$ (S2A) \\
                  & 0.6 & $+0.06$ (S2S) & $-0.04$ (S2A) & $-0.10$ (S2A) \\
                  & 0.8 & $+0.01$ (S2S) & $-0.03$ (S2A) & $-0.05$ (S2A) \\
\addlinespace
IID / non-IID     & 0.2 & $+2.37$ (S2S) & $+2.36$ (S2S) & $+2.14$ (S2S) \\
                  & 0.4 & $+1.38$ (S2S) & $+1.38$ (S2S) & $+1.26$ (S2S) \\
                  & 0.6 & $+0.65$ (S2S) & $+0.71$ (S2S) & $+0.76$ (S2S) \\
                  & 0.8 & $+0.31$ (S2S) & $+0.29$ (S2S) & $+0.06$ (S2S) \\
\addlinespace
non-IID / non-IID & 0.2 & $+1.65$ (S2S) & $+1.87$ (S2S) & $+1.96$ (S2S) \\
                  & 0.4 & $+1.14$ (S2S) & $+1.31$ (S2S) & $+1.41$ (S2S) \\
                  & 0.6 & $+0.73$ (S2S) & $+0.84$ (S2S) & $+0.86$ (S2S) \\
                  & 0.8 & $+0.35$ (S2S) & $+0.39$ (S2S) & $+0.35$ (S2S) \\
\bottomrule
\end{tabular}
\label{tab:mnist_K}
\end{table}

\begin{table}
\centering
\caption{Test accuracy gap (percentage-point difference) between S2S and S2A on CIFAR-10, reported from Figure~\ref{fig:CIFAR} for varying sampling rates $K/n\!\in\!\{0.2,0.4,0.6,0.8\}$ with fixed server period $H\!=\!5$.  
Positive gaps favor S2S.  
Each row corresponds to a heterogeneity regime and sampling rate ($K/n$); each column to a D2D topology.  
Every entry is annotated with the best strategy: S2S, S2A, or -- (gap below standard error).}
\begin{tabular}{lcccc}
\toprule
\textbf{Intra/Inter Regime} & \textbf{Sampling rate ($K/n$)} & \textbf{Complete} & \textbf{Grid} & \textbf{Ring} \\
\midrule
IID / IID         & 0.2 & --  & $-0.30$ (S2A) & $-0.97$ (S2A) \\
                  & 0.4 & $+0.18$ (S2S) & $-0.44$ (S2A) & $-0.71$ (S2A) \\
                  & 0.6 & $-0.45$ (S2A) & $-0.38$ (S2A) & $-0.54$ (S2A) \\
                  & 0.8 & $-0.41$ (S2A) & $-0.24$ (S2A) & $-0.34$ (S2A) \\
\addlinespace
non-IID / IID     & 0.2 & $+0.01$ (S2S) & $+7.74$ (S2S) & $+8.43$ (S2S) \\
                  & 0.4 & $-0.50$ (S2A) & $+2.69$ (S2S) & $+3.99$ (S2S) \\
                  & 0.6 & $-0.25$ (S2A) & $+1.81$ (S2S) & $+1.42$ (S2S) \\
                  & 0.8 & $-0.08$ (S2A) & $+0.92$ (S2S) & $+1.03$ (S2S) \\
\addlinespace
IID / non-IID     & 0.2 & $+1.50$ (S2S) & $+2.96$ (S2S) & $+3.42$ (S2S) \\
                  & 0.4 & $+1.12$ (S2S) & $+0.24$ (S2S) & $+1.21$ (S2S) \\
                  & 0.6 & $+0.76$ (S2S) & $+1.21$ (S2S) & $+1.92$ (S2S) \\
                  & 0.8 & $+0.19$ (S2S) & $+0.34$ (S2S) & $+2.03$ (S2S) \\
\addlinespace
non-IID / non-IID & 0.2 & $+7.01$ (S2S) & $+2.50$ (S2S) & $+3.60$ (S2S) \\
                  & 0.4 & $+5.99$ (S2S) & $+4.38$ (S2S) & $+4.54$ (S2S) \\
                  & 0.6 & $+3.97$ (S2S) & $+3.33$ (S2S) & $+2.88$ (S2S) \\
                  & 0.8 & $+2.33$ (S2S) & $+1.44$ (S2S) & $+1.15$ (S2S) \\
\bottomrule
\end{tabular}
\label{tab:cifar_K}
\end{table}

\begin{table}
\centering
\caption{Test accuracy gap (percentage-point difference) between S2S and S2A on MNIST, reported from Figure~\ref{fig:MNIST} for varying server periods $H\in\{5,10,15,20\}$ with fixed sampling rate $K/n=0.2$.  
Positive gaps favor S2S.  
Each row corresponds to a heterogeneity regime and a server period; each column to a D2D topology.  
Every entry is annotated with the best strategy: S2S, S2A, or -- (gap below standard error).}
\begin{tabular}{lcccc}
\toprule
\textbf{Intra/Inter Regime} & \textbf{Server period ($H$)} & \textbf{Complete} & \textbf{Grid} & \textbf{Ring} \\
\midrule
IID / IID         & 5  & $-0.01$ (S2A) & $-0.02$ (S2A) & $-0.05$ (S2A) \\
                  & 10 & $-0.01$ (S2A) & $-0.02$ (S2A) & $-0.06$ (S2A) \\
                  & 15 & --            & $-0.03$ (S2A) & $-0.05$ (S2A) \\
                  & 20 & $+0.03$ (S2S) & $-0.03$ (S2A) & $-0.04$ (S2A) \\
\addlinespace
non-IID / IID     & 5  & $+0.25$ (S2S) & $+0.17$ (S2S) & $+0.04$ (S2S) \\
                  & 10 & $+0.29$ (S2S) & $+0.02$ (S2S) & $-0.10$ (S2A) \\
                  & 15 & $+0.71$ (S2S) & $+0.29$ (S2S) & $-0.14$ (S2A) \\
                  & 20 & $+0.84$ (S2S) & --   & --   \\
\addlinespace
IID / non-IID     & 5  & $+2.35$ (S2S) & $+2.35$ (S2S) & $+2.18$ (S2S) \\
                  & 10 & $+2.34$ (S2S) & $+2.33$ (S2S) & $+2.02$ (S2S) \\
                  & 15 & $+1.64$ (S2S) & $+1.66$ (S2S) & $+0.79$ (S2S) \\
                  & 20 & $+1.08$ (S2S) & $+1.05$ (S2S) & $-0.30$ (S2A) \\
\addlinespace
non-IID / non-IID & 5  & $+1.65$ (S2S) & $+1.87$ (S2S) & $+1.96$ (S2S) \\
                  & 10 & $+3.00$ (S2S) & $+2.96$ (S2S) & $+2.62$ (S2S) \\
                  & 15 & $+2.31$ (S2S) & $+2.06$ (S2S) & $+1.69$ (S2S) \\
                  & 20 & $+1.51$ (S2S) & $+1.36$ (S2S) & $+1.09$ (S2S) \\
\bottomrule
\end{tabular}
\label{tab:mnist_H}
\end{table}

\begin{table}
\centering
\caption{Test accuracy gap (percentage-point difference) between S2S and S2A on CIFAR-10, reported from Figure~\ref{fig:CIFAR} for varying server periods $H\!\in\!\{5,10,15,20\}$ with fixed sampling rate $K/n=0.2$.  
Positive gaps favor S2S.  
Each row corresponds to a heterogeneity regime and a server period; each column to a D2D topology.  
Every entry is annotated with the best strategy: S2S, S2A, or -- (gap below standard error).}
\begin{tabular}{lcccc}
\toprule
\textbf{Intra/Inter Regime} & \textbf{Server period ($H$)} & \textbf{Complete} & \textbf{Grid} & \textbf{Ring} \\
\midrule
IID / IID         & 5  & --   & $-0.30$ (S2A) & $-0.97$ (S2A) \\
                  & 10 & $-0.52$ (S2A) & $-0.40$ (S2A) & $-0.41$ (S2A) \\
                  & 15 & $-0.16$ (S2A) & $-0.74$ (S2A) & $-0.30$ (S2A) \\
                  & 20 & $-0.13$ (S2A) & $-0.78$ (S2A) & --   \\
\addlinespace
non-IID / IID     & 5  & $+0.46$ (S2S) & $+7.28$ (S2S) & $+8.52$ (S2S) \\
                  & 10 & $-3.99$ (S2A) & $+2.50$ (S2S) & $+2.46$ (S2S) \\
                  & 15 & $-3.33$ (S2A) & $+1.14$ (S2S) & $-0.45$ (S2A) \\
                  & 20 & $-2.50$ (S2A) & $+1.25$ (S2S) & $-2.37$ (S2A) \\
\addlinespace
IID / non-IID     & 5  & $+1.58$ (S2S) & $+2.94$ (S2S) & $+3.49$ (S2S) \\
                  & 10 & $+4.43$ (S2S) & $+4.14$ (S2S) & $+3.42$ (S2S) \\
                  & 15 & $+2.98$ (S2S) & $+1.74$ (S2S) & $+1.07$ (S2S) \\
                  & 20 & $+1.23$ (S2S) & $+2.58$ (S2S) & $-0.84$ (S2A) \\
\addlinespace
non-IID / non-IID & 5  & $+7.01$ (S2S) & $+2.42$ (S2S) & $+3.60$ (S2S) \\
                  & 10 & $+0.21$ (S2S) & $+2.18$ (S2S) & $+1.87$ (S2S) \\
                  & 15 & $-3.27$ (S2A) & --   & $-4.79$ (S2A) \\
                  & 20 & $-4.04$ (S2A) & $-1.38$ (S2A) & $-6.90$ (S2A) \\
\bottomrule
\end{tabular}
\label{tab:cifar_H}
\end{table}

\begin{figure}
    \centering
    % Row 1
    \begin{subfigure}[t]{0.3\textwidth}
        \includegraphics[width=\linewidth]{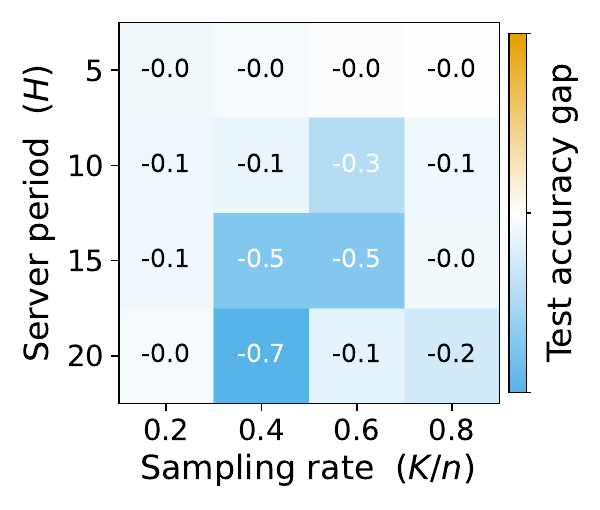}
        \caption{Intra IID, Inter IID}
    \end{subfigure}
    \hspace{2cm}
    \begin{subfigure}[t]{0.3\textwidth}
        \includegraphics[width=\linewidth]{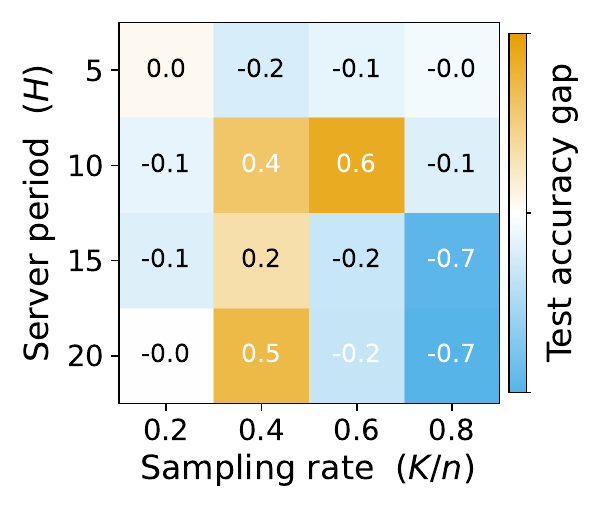}
        \caption{Intra non-IID, Inter IID}
    \end{subfigure}
    % Row 2
    \begin{subfigure}[t]{0.3\textwidth}
        \includegraphics[width=\linewidth]{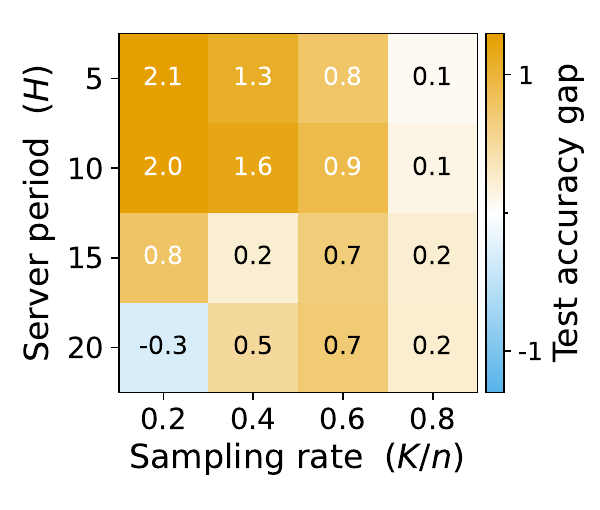}
        \caption{Intra IID, Inter non-IID}
    \end{subfigure}
    \hspace{2cm}
    \begin{subfigure}[t]{0.3\textwidth}
        \includegraphics[width=\linewidth]{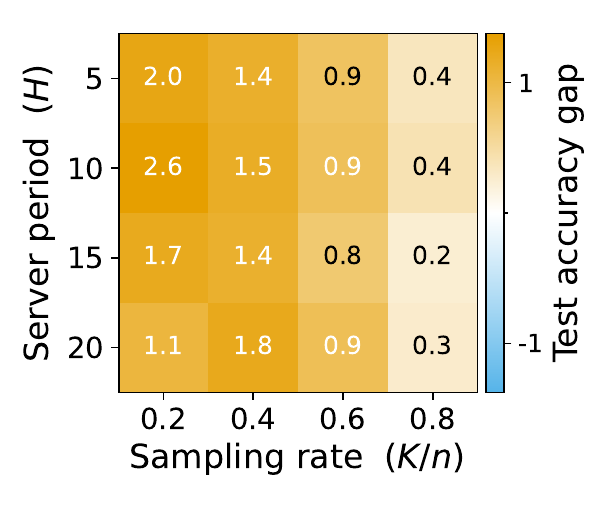}
        \caption{Intra non-IID, Inter non-IID}
    \end{subfigure}
    \caption{\normalsize Accuracy gap on MNIST with ring topology.}
    \label{fig:app:mnist_heatmap}
\end{figure}
\begin{figure}
    \centering
    % Row 3
    \begin{subfigure}[t]{0.3\textwidth}
        \includegraphics[width=\linewidth]{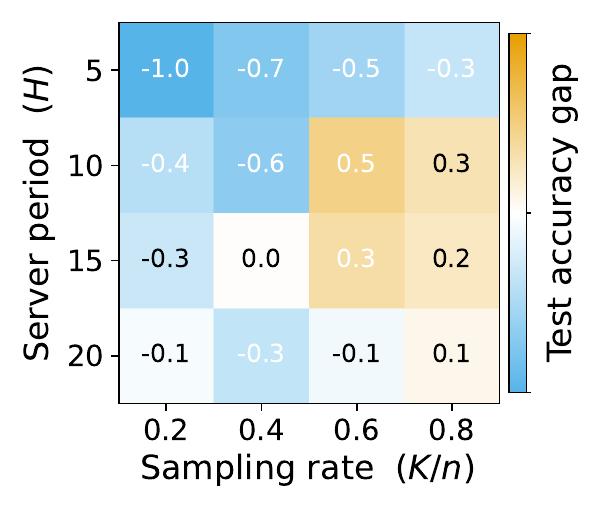}
        \caption{Intra IID, Inter IID}
    \end{subfigure}
    \hspace{2cm}
    \begin{subfigure}[t]{0.3\textwidth}
        \includegraphics[width=\linewidth]{Figs/cifar_heatmap_iid_non_iid.pdf}
        \caption{Intra non-IID, Inter IID}
    \end{subfigure}
    % Row 4
    \begin{subfigure}[t]{0.3\textwidth}
        \includegraphics[width=\linewidth]{Figs/cifar_heatmap_non_iid_iid.pdf}
        \caption{Intra IID, Inter non-IID}
    \end{subfigure}
    \hspace{2cm}
    \begin{subfigure}[t]{0.3\textwidth}
        \includegraphics[width=\linewidth]{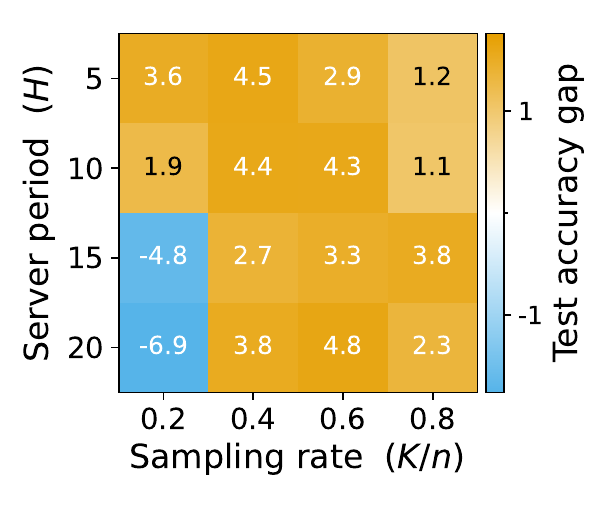}
        \caption{Intra non-IID, Inter non-IID}
    \end{subfigure}
    \caption{\normalsize Accuracy gap on CIFAR-10 with ring topology.}
    \label{fig:app:cifar_heatmap}
\end{figure}

\begin{figure}
    \centering

    \begin{minipage}[t]{\textwidth}
        \centering
        \begin{subfigure}[b]{0.3\linewidth}
            \centering
            \includegraphics[width=\linewidth]{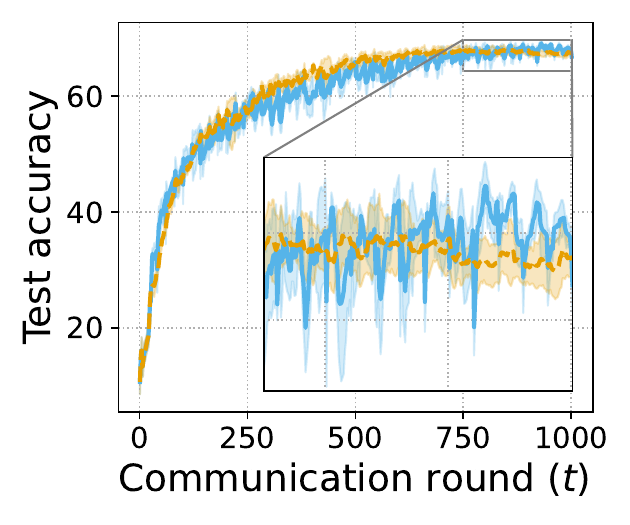}
            \caption{Regime R2 (intra non-IID, inter IID)}
            \label{fig:app:rounds_non_iid_iid}
        \end{subfigure}
        \hspace{1cm}
        \begin{subfigure}[b]{0.3\linewidth}
            \centering
            \includegraphics[width=\linewidth]{Figs/cifar_rounds_non_iid_iid.pdf}
            \caption{Regime R3 (intra IID, inter non-IID)}
            \label{fig:app:rounds_iid_non_iid}
        \end{subfigure}
        \caption{\normalsize Test accuracy on CIFAR-10; $K/n=0.2$, $H=20$, ring topology.}
        \label{fig:app:cifar_rounds}
    \end{minipage}
    
    \vspace{6em}
    
    \begin{minipage}[t]{\textwidth}
        \centering
        \begin{subfigure}[b]{0.3\linewidth}
            \centering
            \includegraphics[width=\linewidth]{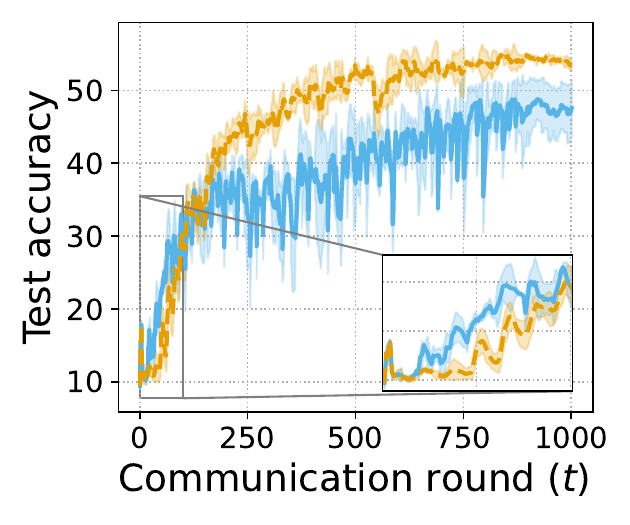}
            \caption{$H=15$}
            \label{fig:app:rounds_non_iid_non_iid_H_15}
        \end{subfigure}
        \hspace{1cm}
        \begin{subfigure}[b]{0.3\linewidth}
            \centering
            \includegraphics[width=\linewidth]{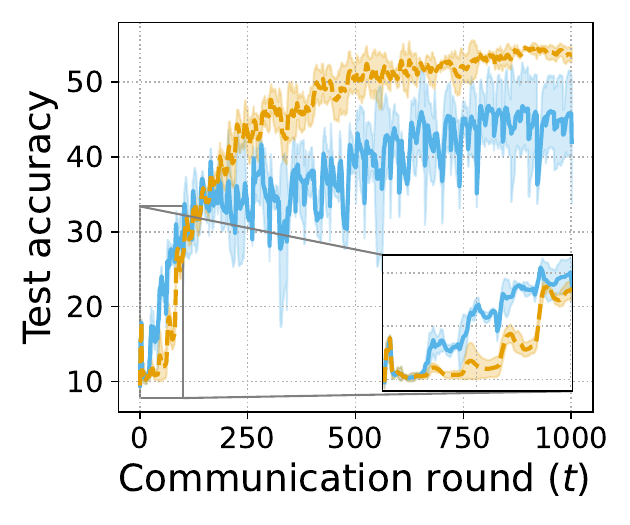}
            \caption{$H=20$}
            \label{fig:app:rounds_non_iid_non_iid_H_20}
        \end{subfigure}
        \caption{\normalsize Test accuracy on CIFAR-10; Regime R3 (intra non-IID, inter non-IID), $K/n=0.2$, ring topology.}
        \label{fig:app:cifar_rounds_H}
    \end{minipage}

\end{figure}

\clearpage
\begin{figure}
    \centering
    \includegraphics[width=0.9\textwidth]{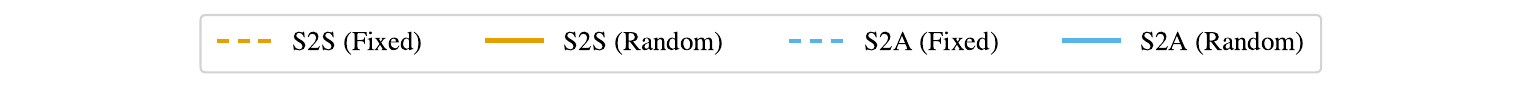}
    
    \includegraphics[width=0.55\linewidth]{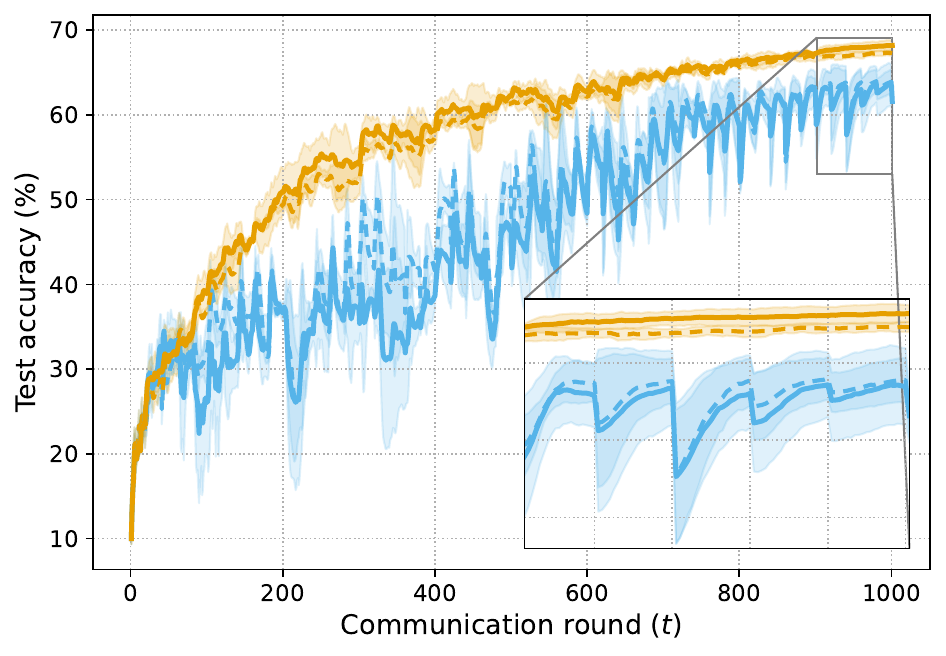}
    \caption{Test accuracy on CIFAR-10, Regime R3 (intra IID, inter non-IID), $K/n\!=\!0.2$, $H\!=\!20$, fixed/random regular graphs.} 
    \label{fig:fixed_vs_dynamic}
\end{figure}

\begin{figure}
    \centering
    \includegraphics[width=0.9\textwidth]{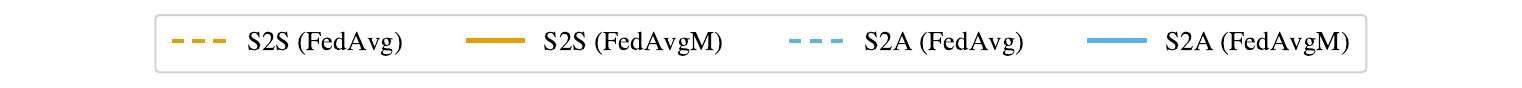}
    
    \includegraphics[width=0.55\linewidth]{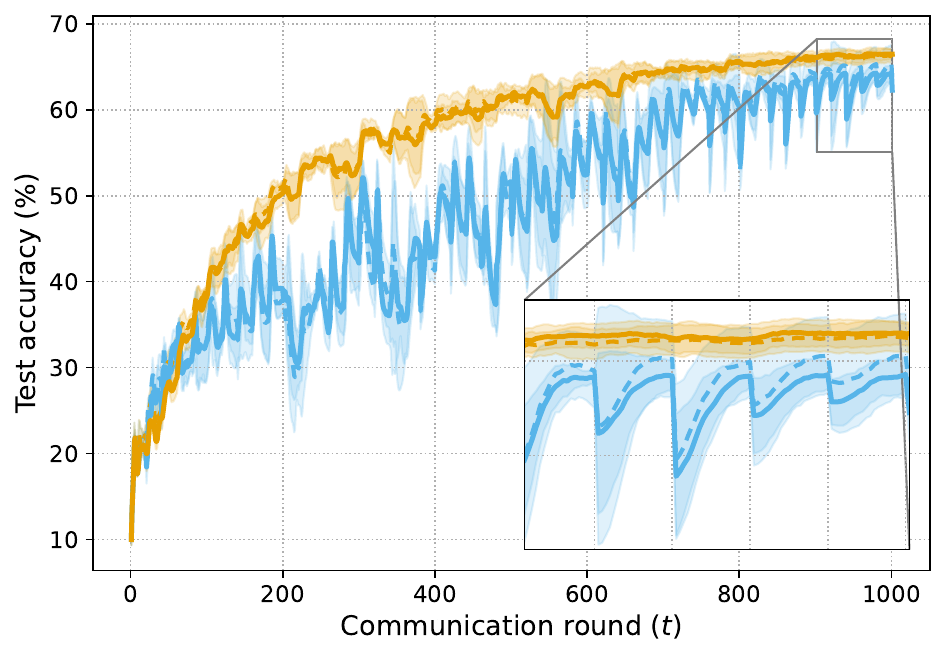}
    \caption{Test accuracy on CIFAR-10, Regime R3, $K/n\!=\!0.2$, $H\!=\!20$, ring topology, momentum $\beta=0.9$.}
    \label{fig:fedavg_vs_fedavgm}
\end{figure}

\begin{figure}
    \centering
        \begin{subfigure}[b]{0.5\linewidth}
            \centering
            \includegraphics[width=\linewidth]{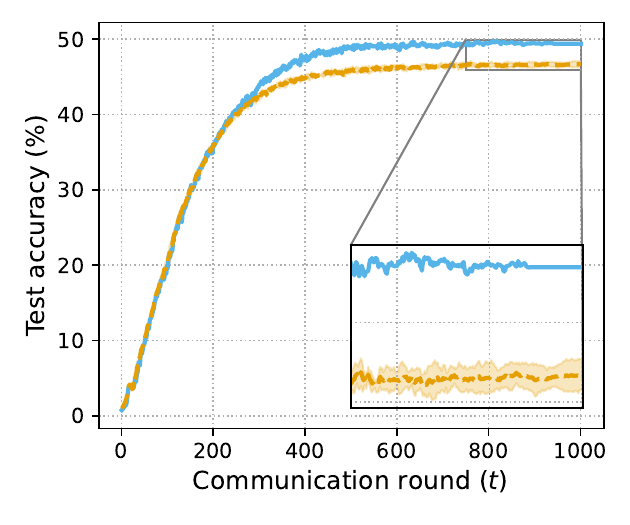}
            \caption{Regime R2 (intra non-IID, inter IID)}
            \label{fig:app:CIFAR100_R2}
        \end{subfigure}
        \hspace{1cm}
        \begin{subfigure}[b]{0.5\linewidth}
            \centering
            \includegraphics[width=\linewidth]{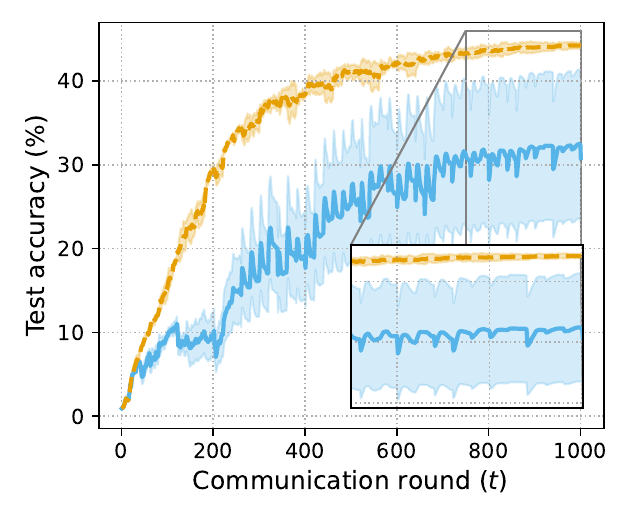}
            \caption{Regime R3 (intra IID, inter non-IID)}
            \label{fig:app:CIFAR100_R3}
        \end{subfigure}
        \caption{\normalsize Test accuracy on CIFAR-100; $K/n=0.2$, $H=20$, complete topology.}
        \label{fig:app:cifar100}
\end{figure}

\begin{figure}
    \centering
    % Row 1
    \begin{subfigure}[b]{0.46\linewidth}
        \centering
        \includegraphics[width=\linewidth]{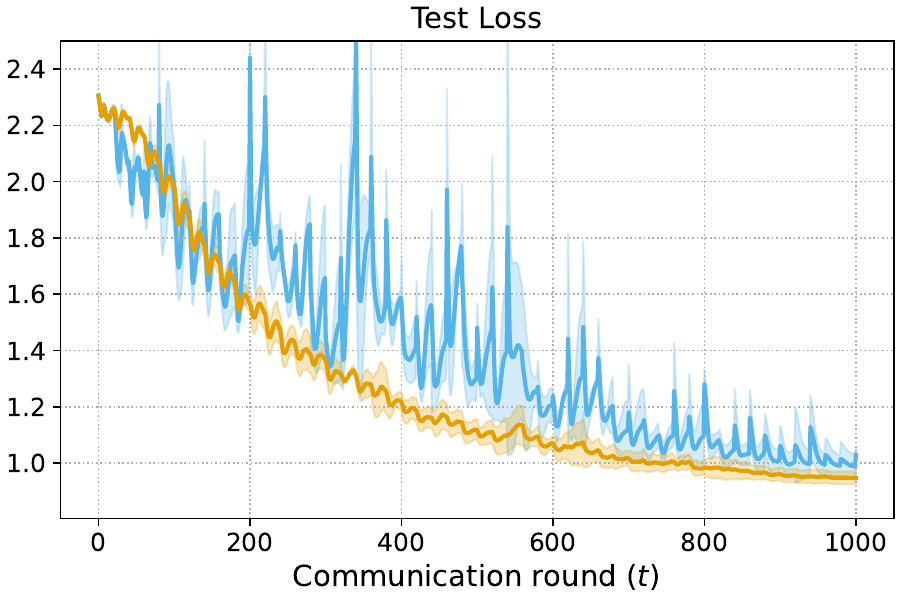}
        \caption{Test loss over communication rounds.}
        \label{fig:app:track_loss}
    \end{subfigure}
    \hspace{1cm}
    \begin{subfigure}[b]{0.46\linewidth}
        \centering
        \includegraphics[width=\linewidth]{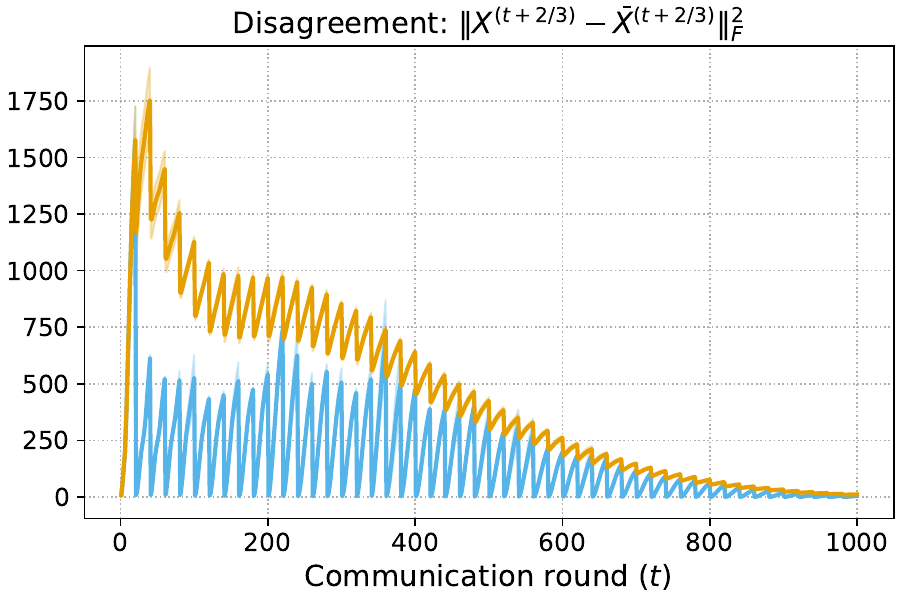}
        \caption{Disagreement error at D2D rounds.}
        \label{fig:app:track_disagreement1}
    \end{subfigure}

    \vspace{0.6cm}

    % Row 2
    \begin{subfigure}[b]{0.46\linewidth}
        \centering
        \includegraphics[width=\linewidth]{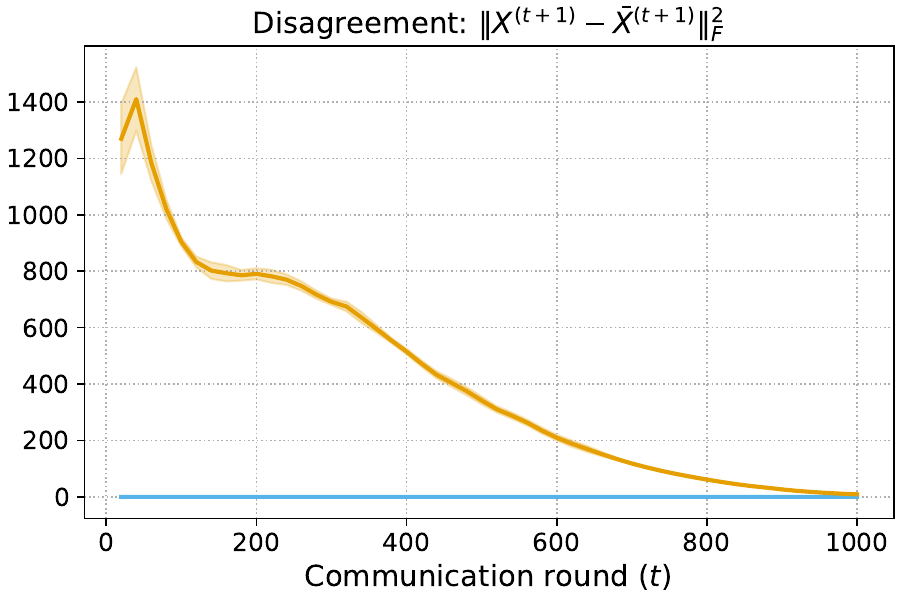}
        \caption{Disagreement error at D2S rounds.}
        \label{fig:app:track_disagreement2}
    \end{subfigure}
    \hspace{1cm}
    \begin{subfigure}[b]{0.46\linewidth}
        \centering
        \includegraphics[width=\linewidth]{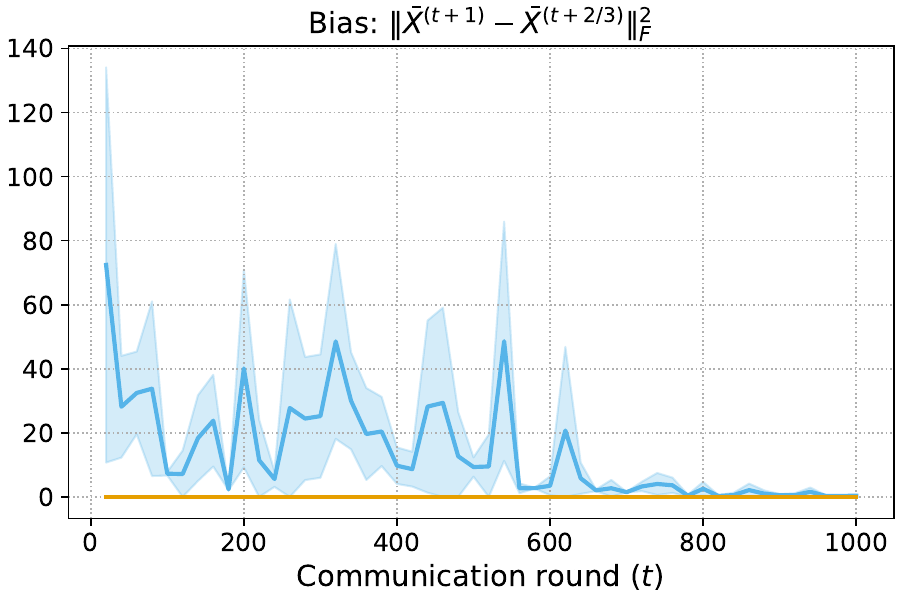}
        \caption{Bias error at D2S rounds.}
        \label{fig:app:track_bias}
    \end{subfigure}

    \vspace{0.6cm}

    % Row 3
    \begin{subfigure}[b]{0.46\linewidth}
        \centering
        \includegraphics[width=\linewidth]{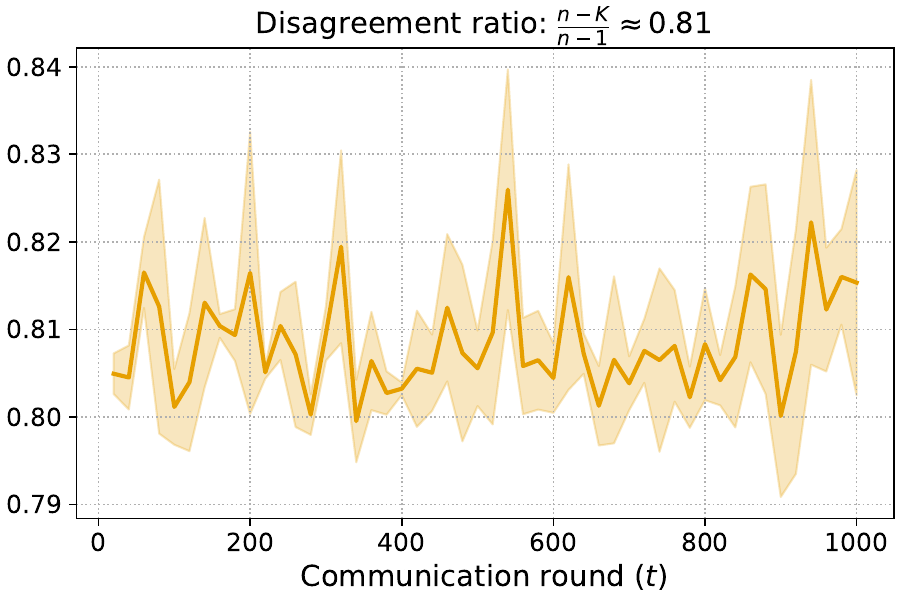}
        \caption{Disagreement ratio at D2S rounds.}
        \label{fig:app:track_disagreement_ratio}
    \end{subfigure}
    \hspace{1cm}
    \begin{subfigure}[b]{0.46\linewidth}
        \centering
        \includegraphics[width=\linewidth]{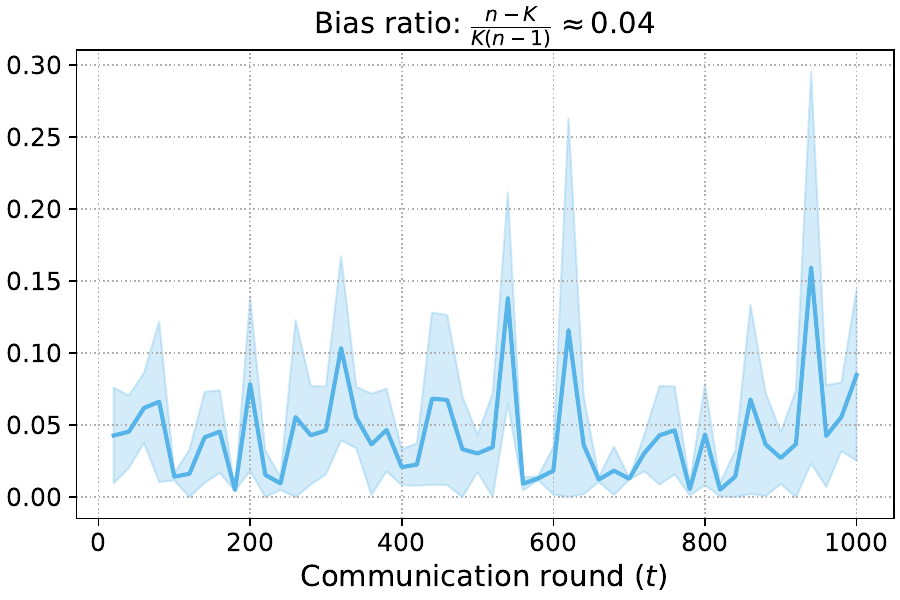}
        \caption{Bias ratio at D2S rounds.}
        \label{fig:app:track_bias_ratio}
    \end{subfigure}

    \caption{Bias and disagreement errors on CIFAR-10, Regime R3, $K/n=0.2$, $H=20$, ring topology.}
    \label{fig:track}
\end{figure}

\fi
\end{document}